\newtheorem{theorem}{Theorem}
\newtheorem{proposition}{Proposition}
\newtheorem{lemma}{Lemma}
\newtheorem{remark}{Remark}
\newtheorem{assumption}{Assumption}
\newcommand{\floor}[1]{\left\lfloor #1 \right\rfloor}
\newcommand{\p}[1]{\left( #1 \right)}
\newcommand{\pcc}[1]{\left[ #1 \right]}
\newcommand{\set}[1]{\left\lbrace#1\right\rbrace}
\newcommand{\abs}[1]{\left| #1 \right|}
\newcommand{\reals}{\mathbb{R}}
\newcommand{\E}{\mathbb{E}}
\newcommand{\bx}{\mathbf{x}}
\newcommand{\bb}{\mathbf{b}}
\newcommand{\bv}{\mathbf{v}}
\newcommand{\Ocal}{\mathcal{O}}
\newcommand{\norm}[1]{\|#1\|}
\newcommand{\lambdamax}{\lambda_{\max}}
\newcommand{\secref}[1]{Sec.~\ref{#1}}
\newcommand{\subsecref}[1]{Subsection~\ref{#1}}
\newcommand{\figref}[1]{Fig.~\ref{#1}}
\renewcommand{\eqref}[1]{Eq.~(\ref{#1})}
\newcommand{\lemref}[1]{Lemma~\ref{#1}}
\newcommand{\thmref}[1]{Thm.~\ref{#1}}
\newcommand{\propref}[1]{Proposition~\ref{#1}}
\newcommand{\appref}[1]{Appendix~\ref{#1}}
\title{
	Random Shuffling Beats SGD Only After Many Epochs\\on Ill-Conditioned Problems}
\author{
	Itay Safran \\
	Princeton University\\
	\texttt{isafran@princeton.edu} \\
	\and
	Ohad Shamir \\
	Weizmann Institute of Science \\
	\texttt{ohad.shamir@weizmann.ac.il} \\
}
\date{}
\begin{document}

\maketitle

\begin{abstract}
Recently, there has been much interest in studying the convergence rates of without-replacement SGD, and proving that it is faster than with-replacement SGD in the worst case. However, known lower bounds ignore the problem's geometry, including its condition number, whereas the upper bounds explicitly depend on it. Perhaps surprisingly, we prove that when the condition number is taken into account, without-replacement SGD \emph{does not} significantly improve on with-replacement SGD in terms of worst-case bounds, unless the number of epochs (passes over the data) is larger than the condition number. Since many problems in machine learning and other areas are both ill-conditioned and involve large datasets, this indicates that without-replacement does not necessarily improve over with-replacement sampling for realistic iteration budgets. We show this by providing new lower and upper bounds which are tight (up to log factors), for quadratic problems with commuting quadratic terms, precisely quantifying the dependence on the problem parameters. 
\end{abstract}


\section{Introduction}

We consider solving finite-sum optimization problems of the form
\[
F(\bx) ~=~ \frac{1}{n}\sum_{i=1}^{n}f_i(\bx)
\]
using stochastic gradient descent (SGD). Such problems are extremely common in modern machine learning (e.g., for empirical risk minimization), and stochastic gradient methods are the most popular approach for large-scale problems, where both $n$ and the dimension are large. The classical approach to apply SGD to such problems is to repeatedly sample indices $i\in \{1,\ldots,n\}$ uniformly at random, and perform updates of the form $\bx_{t+1}=\bx_t-\eta_t \nabla f_i(\bx_t)$, where $\eta_t$ is a step-size parameter. With this sampling scheme, each $\nabla f_i(\bx_t)$ is a random unbiased estimate of $\nabla F(\bx_t)$ (conditioned on $\bx_t$). Thus, the algorithm can be seen as a ``cheap'' noisy version of plain gradient descent on $F(\cdot)$, where  each iteration requires computing the gradient of just a single function $f_i(\cdot)$, rather than the gradient of $F(\cdot)$ (which would require computing and averaging $n$ individual gradients). This key observation facilitates the analysis of SGD, while simultaneously explaining why SGD is much more efficient than gradient descent on large-scale problems.

However, the practice of SGD differs somewhat from this idealized description: In practice, it is much more common to perform without-replacement sampling of the indices, by randomly shuffling the indices $\{1,\ldots,n\}$ and processing the individual functions in that order -- that is, choosing a random permutation $\sigma$ on $\{1,\ldots,n\}$, and performing updates of the form $\bx_{t+1}=\bx_t-\eta_t \nabla f_{\sigma(t)}(\bx_t)$. After a full pass over the $n$ functions, further passes are made either with the same permutation $\sigma$ (known as \emph{single shuffling}), or with a new random permutation $\sigma$ chosen before each pass (known as \emph{random reshuffling}). Such without-replacement schemes are not only more convenient to implement in many cases, they often also exhibit faster error decay than with-replacement SGD \citep{bottou2009curiously,recht2012beneath}. Unfortunately, analyzing this phenomenon has proven to be notoriously difficult. This is because without-replacement sampling creates statistical dependencies between the iterations, so the stochastic gradients computed at each iteration can no longer be seen as unbiased estimates of gradients of $F(\cdot)$. 

Nevertheless, in the past few years our understanding of this problem has significantly improved. For concreteness, let us focus on a classical setting where each $f_i(\cdot)$ is a convex quadratic, and $F(\cdot)$ is strongly convex. Suppose that SGD is allowed to perform $k$ epochs, each of which consists of passing over the $n$ individual functions in a random order. Assuming the step size is appropriately chosen, it has been established that SGD with single-shuffling returns a point whose expected optimization error is on the order of $1/(nk^2)$. For random reshuffling, this further improves to $1/(nk)^2 + 1/(nk^3)$ (see further discussion in the related work section below). In contrast, the expected optimization error of \emph{with}-replacement SGD after the same overall number of iterations ($nk$) is well-known to be on the order of $1/(nk)$ (e.g., \citet{nemirovski2009robust,rakhlin2012making}). Moreover, these bounds are known to be unimprovable in general, due to the existence of nearly matching lower bounds \citep{safran2020good,rajput2020closing}. Thus, as the number of epochs $k$ increases, without-replacement SGD seems to provably beat with-replacement SGD. 

\begin{table}
	\caption{Upper bounds on the expected optimization error for quadratic strongly-convex problems, ignoring constants and log factors. The upper table corresponds to random reshuffling, and the lower table corresponds to single shuffling. The right column presents a necessary (possibly non-sufficient) condition for the bound to be valid and smaller than that of with-replacement SGD, which is order of $1/(\lambda nk)$ \citep{shamir2013stochastic,jain2019making}. An asterisk (*) denotes results where the bound is on the expected squared distance to the global minimum, rather than optimization error. For squared distance, the corresponding bound for with-replacement SGD is order of $1/(\lambda^2 nk)$ \citep{nemirovski2009robust,rakhlin2012making}.}
	\label{table:bounds}
	\begin{center}
		\bgroup
		\def\arraystretch{1.5}
		\begin{tabular}{ccc}
			\hline
			Paper & Bound & Improves on with-Replacement?\\\hline\hline
			\cite{gurbuzbalaban2015random}(*) & 
			$\frac{1}{(\lambda k)^2}$ & Only if $k\gtrsim n$ \\\hline
			\citet{haochen2018random} & $\frac{1}{\lambda^4}\left(\frac{1}{(nk)^2}+\frac{1}{k^3}\right)~~\text{if}~~ k\gtrsim \frac{1}{\lambda}$
			& Only if $k\gtrsim \frac{1}{\lambda}\cdot \max\{1,\sqrt{\frac{n}{\lambda}}\}$\\\hline
			\citet{rajput2020closing} & $\frac{1}{\lambda^4}\left(\frac{1}{(nk)^2}+\frac{1}{nk^3}\right)~~\text{if}~~ k\gtrsim \frac{1}{\lambda^2}$ & Only if $k\gtrsim \frac{1}{\lambda^2}$\\\hline
			\citet{ahn2020sgd} &
			$\frac{1}{\lambda^4}\left(\frac{1}{(nk)^2}+\frac{1}{\lambda^2 nk^3}\right)$ & Only if $k\gtrsim  \frac{1}{\lambda^{2.5}}$\\\hline
		\end{tabular}
		\egroup
	\end{center}
	\vskip 0.5cm
	\begin{center}
	\def\arraystretch{1.5}
	\begin{tabular}{ccc}
	\hline
	Paper & Bound & Improves on with-Replacement?\\\hline\hline
	\cite{gurbuzbalaban2015convergence}(*) & $\frac{1}{(\lambda k)^2}$ & Only if $k\gtrsim n$\\\hline
		\citet{ahn2020sgd} &
	$\frac{1}{\lambda^4 nk^2}~~~\text{if}~~~ k\gtrsim \frac{1}{\lambda^2}$& Only If $k\gtrsim  \frac{1}{\lambda^{2}}$\\\hline
	\cite{mishchenko2020random}(*) & $\frac{1}{\lambda^3 nk^2}$ & Only if $k\gtrsim \frac{1}{\lambda}$\\\hline
	\end{tabular}
	\end{center}
	
\end{table}

Despite these encouraging results, it is important to note that the bounds as stated above quantify only the dependence on $n,k$, and ignore dependencies on other problem parameters. In particular, it is well-known that the convergence of SGD is highly sensitive to the problem's strong convexity and smoothness parameters, their ratio (a.k.a.\ the condition number), as well as the magnitude of the gradients. Unfortunately, existing lower bounds ignore some of these parameters (treating them as constants), while in known upper bounds these can lead to vacuous results in realistic regimes. To give a concrete example, let us return to the case where each $f_i(\cdot)$ is a convex quadratic function, and assume that $F(\cdot)$ is $\lambda$-strongly convex for some $\lambda>0$. In Table~\ref{table:bounds}, we present existing upper bounds for this case, as a function of both $n,k$ as well as $\lambda$ (fixing other problem parameters, such as the smoothness parameter, as constants).\footnote{Focusing only on $\lambda$ is enough for the purpose of our discussion here, and moreover, the dependence on other problem parameters is not always explicitly given in the results of previous papers.} The important thing to note about these bounds is that they are valid and improve over the bound for with-replacement SGD only when the number of epochs $k$ is at least $n$ or $1/\lambda$ (or even more). Unfortunately, such a regime is problematic for two reasons: First, large-scale high-dimensional problems are often ill-conditioned, with $\lambda$ being very small and $n$ being very large, so requiring that many passes $k$ over all functions can easily be prohibitively expensive. Second, if we allow $k> \frac{1}{\lambda}$, there exist \emph{much} better and simpler methods than SGD: Indeed, we can simply run deterministic gradient descent for $k$ iterations (computing the gradient of $F(\cdot)$ at each iteration by computing and averaging the gradients of $f_i(\cdot)$ in any order we please). Since the optimization error of gradient descent (as a function of $k,\lambda$) scales as $\exp(-\lambda k)$ \citep{nesterov2018lectures}, we compute a nearly exact optimum (up to error $\epsilon$) as long as $k\gtrsim \frac{1}{\lambda}\log(\frac{1}{\epsilon})$. Moreover, slightly more sophisticated methods such as accelerated gradient descent or the conjugate gradient method enjoy an error bound scaling as $\exp(-k\sqrt{\lambda})$, so in fact, we can compute an $\epsilon$-optimal point already when $k\gtrsim \frac{1}{\sqrt{\lambda}}\log(\frac{1}{\epsilon})$. 

Thus, the power of SGD is mostly when $k$ is relatively small, and definitely smaller than quantities such as $1/\lambda$. However, the upper bounds discussed earlier do not imply any advantage of without-replacement sampling in this regime. Of course, these are only upper bounds, which might possibly be loose. Thus, it is not clear if this issue is simply an artifact of the existing analyses, or a true issue of without-replacement sampling methods. 

In this paper, we rigorously study this question, with the following (perhaps surprising) conclusion: At least in the worst-case, without-replacement schemes \emph{do not} significantly improve over with-replacement sampling, unless the number of epochs $k$ is larger than the problem's condition number (which suitably defined, scales as $1/\lambda$). As discussed above, this implies that the expected benefit of without-replacement schemes may not be manifest for realistic iteration budgets. In more detail, our contributions are as follows:
\begin{itemize}
	\item We prove that there is a simple quadratic function $F(\cdot)$, which is $\lambda$-strongly convex and $\lambda_{\max}$-smooth, such that the expected optimization error of SGD with single shuffling (using any fixed step size) is at least
	\[
	\Omega\left(\frac{1}{\lambda nk}\cdot \min\left\{1,\frac{\lambda_{\max}/\lambda}{k}\right\}\right)~.
	\]
	(See \thmref{thm:ss_lower_bound}.) Comparing this to the $\Theta(1/(\lambda nk))$ bound for with-replacement SGD, we see that we cannot possibly get a significant improvement unless $k$ is larger than the condition number term $\frac{\lambda_{\max}}{\lambda}$. 
	\item For SGD with random reshuffling, we prove a similar lower bound of
	\[
	 \Omega\left(\frac{1}{\lambda nk}\cdot\min\set{1~,~\frac{\lambdamax/\lambda}{nk} + \frac{\lambdamax^2/\lambda^2}{k^2}}\right)~.
	\]
	(See \thmref{thm:rr_lower_bound}.) As before, this improves on the $\Theta(1/(\lambda nk))$ bound of with-replacement SGD only when $k>\frac{\lambda_{\max}}{\lambda}$. 
	\item We provide matching upper bounds in all relevant parameters (up to constants and log factors and assuming the input dimension is fixed -- See \secref{sec:upper}), which apply to the class of quadratic functions with commuting quadratic terms (which include in particular the lower bound constructions above). 
	\item To illustrate our theoretical results, we perform a few simple experiments comparing with- and without-replacement sampling schemes on our lower bound constructions (see \secref{sec:exp}). Our results accord with our theoretical findings, and show that if the number of epochs is not greater than the condition number, then without-replacement does not necessarily improve upon with-replacement sampling. Moreover, it is observed that for some of the values of $k$ exceeding the condition number by as much as $50\%$, with-replacement still provides comparable results to without-replacement, even when averaging their performance over many instantiations.
\end{itemize}

We conclude with a discussion of the results and open questions in \secref{sec:discussion}.

We note that our lower and upper bounds apply to SGD using a fixed step size, $\eta_t=\eta$ for all $t$, and for the iterate reached after $k$ epochs. Thus, they do not exclude the possibility that better upper bounds can be obtained with a variable step size strategy, or some iterate averaging scheme. However, we conjecture that it is not true, as existing upper bounds either do not make such assumptions or do not beat the lower bounds presented here. Moreover, SGD with variable step sizes can generally be matched by SGD employing a fixed optimal step size (dependent on the problem parameters and the overall number of iterations).

%

\subsection*{Related Work}

Proving that without-replacement SGD converges faster than with-replacement SGD has been the focus of a line of recent works, which we briefly survey below (focusing for concreteness on quadratic and strongly convex problems, as we do in this paper). \citet{gurbuzbalaban2015convergence,gurbuzbalaban2015random} proved that without-replacement beats with-replacement in terms of the dependence on $k$ ($1/k^2$ vs. $1/k$, without specifying a decay in terms of $n$). \cite{shamir2016without} proved that without-replacement is not worse than with-replacement in terms of dependence on $n$, but only for $k=1$. \citet{haochen2018random} managed to prove a better bound for random reshuffling, scaling as $1/(nk)^2+1/k^3$. \citet{safran2020good} proved a lower bound of $1/(nk)^2+1/nk^3$ for random reshuffling and $1/(nk^2)$ for single shuffling, and also proved matching upper bounds in the (very) special case of one-dimensional quadratic functions. A series of recent works \citep{jain2019sgd,rajput2020closing,ahn2020sgd,mishchenko2020random,nguyen2020unified} showed that these are indeed the optimal bounds in terms of $n,k$, by proving matching upper bounds which apply to general quadratic functions and beyond. \citet{rajput2020closing} were also able to show that for strongly convex and smooth functions beyond quadratics, the error rate provably becomes order of $1/(nk^2)$ for both single shuffling and random reshuffling.

A related and ongoing line of work considers the question of whether without-replacement SGD can be shown to be superior to with-replacement SGD, individually on any given problem of a certain type \citep{recht2012beneath,lai2020recht,de2020random,yun2021can}. However, this is different than comparing worst-case behavior over problem classes, which is our focus here. Without-replacement SGD was also studied under somewhat different settings than ours, such as \citet{ying2018stochastic,tran2020shuffling,huang2021comparison}. 

In our paper, we focus on the well-known and widely popular SGD algorithm, using various sampling schemes. However, we note that for finite-sum problems, different and sometimes better convergence guarantees can be obtained using other stochastic algorithms, such as variance-reduced methods, adaptive gradient schemes, or by incorporating momentum. Analyzing the effects of without-replacement sampling on such methods is an interesting topic (studied for example in \cite{tran2020shuffling}), which is however outside the scope of our paper. 

\section{Preliminaries}

\textbf{Notation and terminology.} We let bold-face letters (such as $\bx$) denote vectors. For a vector $\bx$, $x_j$ denotes its $j$-th coordinate. For natural $n$, let $[n]$ be shorthand for the set $\set{1,\ldots,n}$. For some vector $\bv$, $\norm{\bv}$ denotes its Euclidean norm. We let $\norm{\cdot}_{\textnormal{sp}}$ denote the spectral norm of a matrix. A twice-differentiable function $f$ on 
$\reals^d$ is $\lambda$-strongly convex, if its Hessian satisfies $\nabla^2 F(\bx)\succeq 
\lambda I$ for all $\bx$, and is $L$-smooth if its gradient is $L$-Lipschitz. 
$f$ is quadratic if it is of the form $f(\bx)=\frac12\bx^\top A\bx+\mathbf{b}^\top 
\bx+c$ for some matrix $A$, vector $\mathbf{b}$ and scalar $c$.\footnote{We note that throughout the paper we omit the constant scalar terms, since these do not affect optimization when performing SGD (only the optimal value attained).} Note that if $A$ is a PSD matrix, then the strong convexity and smoothness parameters of $f$ correspond to the smallest and largest eigenvalues of $A$, respectively. Moreover, the ratio between the largest and smallest eigenvalues is known as the condition number of $f$. We use standard asymptotic notation  $\Theta(\cdot),\Ocal(\cdot),\Omega(\cdot)$ to hide constants, and $\tilde{\Theta}(\cdot),\tilde{\Ocal}(\cdot),\tilde{\Omega}(\cdot)$ to hide constants and logarithmic factors.

\textbf{SGD.} As mentioned in the introduction, we focus on plain SGD using a constant step size $\eta$, which performs $k$ epochs, and in each one takes $n$ stochastic gradients steps w.r.t.\ the functions $f_{\sigma(1)},\ldots,f_{\sigma(n)}$, with $\sigma$ being a random permutation which is either sampled afresh at each epoch (random reshuffling) or chosen once and then used for all $k$ epochs (single shuffling). We let $\bx_0$ denote the initialization point, and $\bx_1,\ldots,\bx_k$ denote the iterates arrived at the end of epoch $1,\ldots,k$ respectively.

\section{Lower Bounds}\label{sec:lower_bounds}

In this section, we formally present our lower bounds for single shuffling and random reshuffling SGD. Our lower bounds will use a particularly simple class of quadratic functions, which satisfy the following assumptions.
\begin{assumption}[Lower Bounds Assumption]\label{as:lower_bounds}
	$F(\bx)$ is $\lambda$-strongly convex and of the form $\frac{1}{n}\sum_{i=1}^{n}f_i(\bx)$, and each $f_i$ is of the form $f_i(\bx):=\sum_{j}\frac12a_j x_j^2-b_j x_j$, where for all $j$, $a_j\in [\lambda,\lambda_{\max}]\cup\{0\}$ and $b_j\in [-\frac{G}{2},\frac{G}{2}]$. Suppose $nk$ is large enough so that $\frac{\log(nk) L}{\lambda nk}\le1$. We assume that the algorithm is initialized at some $\bx_0$ on which $\norm{\nabla F(\bx_0)}\le G$.
\end{assumption}

Note that any such function $F$ is $\lambda_{\max}$-smooth, and so are its components. We emphasize that since our lower bounds apply to such functions, they also automatically apply to larger function classes (e.g.\ more general quadratic functions, the class of all $\lambda$-strongly convex and $\lambda_{\max}$-smooth functions, etc). Moreover, our lower bounds also apply in harder and more general settings where for example only a partial or noisy view of the Hessian of the functions is revealed.

Having stated our assumptions, we now turn to present our results for this section, beginning with our single shuffling lower bound.

\begin{theorem}[Single Shuffling Lower Bound]\label{thm:ss_lower_bound}
	For any $k\ge1, n>1$, and positive $G,\lambda$, there exist a function $F$ on $\reals^2$ and an initialization point $\bx_0$ satisfying Assumption \ref{as:lower_bounds}, for which single shuffling SGD using any fixed step size $\eta>0$ satisfies
	\[
		~\E\pcc{F(\bx_k)} ~\ge~ c\frac{G^2}{\lambda nk}\min\set{1~,~ \frac{\lambdamax/\lambda}{k}}~,
	\]
	for some universal constant $c>0$.
\end{theorem}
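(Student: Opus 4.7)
The plan is to construct a 2-dimensional separable quadratic $F=F_1+F_2$ on $\reals^2$ in which coordinate~1 captures the ``statistical'' $G^2/(\lambda nk)$ piece of the bound while coordinate~2 enforces the stability-type constraint $\eta\lesssim 1/\lambdamax$ responsible for the condition-number factor. Concretely I would take $f_i(\bx)=\tfrac{\lambda}{2} x_1^2+\tfrac{\lambdamax}{2} x_2^2 - b_i x_1$ with $b_i\in\{\pm G/2\}$ balanced so that $\sum_i b_i=0$; Assumption~\ref{as:lower_bounds} is then easily checked by placing $\bx_0$ at distance $\Theta(G/\lambda)$ from $\mathbf{0}$ on coordinate~1 and $\Theta(G/\lambdamax)$ on coordinate~2. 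Since $F$ is separable, the iterates $x_{k,1}$ and $x_{k,2}$ evolve independently under single-shuffling SGD and $\E[F(\bx_k)]=\E[F_1(x_{k,1})]+\E[F_2(x_{k,2})]$, so it suffices to lower-bound each coordinate separately.

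Next I would derive closed forms for each coordinate. Unrolling the linear recursion on coordinate~1 gives
\[
x_{k,1} \;=\; (1-\eta\lambda)^{nk}\,x_{0,1} \;+\; \eta\,r(\sigma)\cdot\frac{1-(1-\eta\lambda)^{nk}}{1-(1-\eta\lambda)^{n}}, \qquad r(\sigma)=\sum_{t=1}^n (1-\eta\lambda)^{n-t}\,b_{\sigma(t)}.
\]
Because $\sum_i b_i=0$ we have $\E_\sigma r(\sigma)=0$, so $\E[F_1(x_{k,1})] = \tfrac{\lambda}{2}[(1-\eta\lambda)^{2nk}x_{0,1}^2 + \eta^2 M(u)^2\,\var(r(\sigma))]$, with $u=\eta\lambda$ and $M(u)=(1-(1-u)^{nk})/(1-(1-u)^n)$. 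I would evaluate $\var(r(\sigma))$ using the standard variance formula for a linear statistic of a uniform random permutation of a zero-mean sequence: since $c_t=(1-u)^{n-t}$ is close to uniform, the leading order term cancels and a higher-order Taylor expansion in $u$ gives $\var(r(\sigma))=\Theta(G^2 n^3 u^2)$ in the regime $un\lesssim 1$. Coordinate~2, being noiseless, contracts deterministically as $x_{k,2}=(1-\eta\lambdamax)^{nk}x_{0,2}$, so $F_2(x_{k,2})$ either diverges or stays $\Omega(G^2/\lambdamax)$ whenever $\eta$ is not safely below $2/\lambdamax$.

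The core argument is a case analysis on the algorithm-chosen $\eta$. If $\eta\gtrsim 1/\lambdamax$, coordinate~2 already contributes $\Omega(G^2/\lambdamax)$, which exceeds the claimed bound. Otherwise $\eta\le c/\lambdamax$ and I split further on coordinate~1: if $\eta\lambda nk\lesssim 1$ the bias factor $(1-\eta\lambda)^{2nk}$ is $\Omega(1)$ and $\E[F_1]=\Omega(\lambda x_{0,1}^2)=\Omega(G^2/\lambda)$; otherwise the bias has decayed and the variance contribution $\Omega(\lambda\eta^2 G^2 n)$ takes over, and combined with the ceiling $\eta\le c/\lambdamax$ this yields $\Omega(G^2\lambdamax/(\lambda^2 nk^2))$. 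The intermediate subregime in which the unconstrained bias/variance tradeoff on coordinate~1 is reachable gives exactly $\Omega(G^2/(\lambda nk))$. Taking the minimum across all $\eta$ reproduces $\Omega(G^2/(\lambda nk)\cdot\min\{1,\lambdamax/(\lambda k)\})$.

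The main obstacle will be the variance computation for $r(\sigma)$: the $O(1)$ and $O(u)$ terms in $n\sum_t c_t^2 - (\sum_t c_t)^2$ cancel, leaving only an $O(u^2)$ surplus of order $n^4$, so the Taylor expansion has to be pushed to that order to obtain the correct scaling. Propagating this estimate through the multiplier $M(u)^2$ across the regimes $un\ll 1$, $un\sim 1$, $un\gg 1$, and stitching together the bias- and variance-dominated regimes of $\eta$ so that the crossover in $\min\{1,\lambdamax/(\lambda k)\}$ lands precisely at $k=\lambdamax/\lambda$ (and not, say, at $k=\lambdamax/(\lambda n)$) is the most delicate step of the argument.
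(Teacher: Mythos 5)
Your construction places the noise ($b_i$'s) on the low-curvature coordinate ($\lambda$) and uses the high-curvature coordinate only for a deterministic bias term. This is backwards, and the error is not merely cosmetic: the case-1 claim that ``if $\eta\gtrsim 1/\lambdamax$, coordinate~2 already contributes $\Omega(G^2/\lambdamax)$'' is false. Coordinate~2 in your construction is noiseless, so $x_{k,2}=(1-\eta\lambdamax)^{nk}x_{0,2}$ contracts to zero for \emph{every} $\eta\in(0,2/\lambdamax)$; after $nk$ iterations $(1-\eta\lambdamax)^{2nk}x_{0,2}^2$ is negligible for any $\eta\gg 1/(\lambdamax nk)$, and is in fact exactly zero at $\eta=1/\lambdamax$. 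The noiseless high-curvature coordinate therefore enforces only $\eta<2/\lambdamax$, not a meaningful ``ceiling'' of order $1/\lambdamax$ that props up the bound.

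Without that ceiling, the argument collapses to the one-dimensional bound on coordinate~1. Since coordinate~1 has curvature $\lambda$, the variance term involves $\beta_{n,\eta,\lambda}$ (not $\beta_{n,\eta,\lambdamax}$), and in the regime $1/(\lambda nk)<\eta<1/(\lambda n)$ the variance contribution is $\Theta(\lambda n\eta^2 G^2)$, giving only $\Omega(G^2/(\lambda n k^2))$ at $\eta\approx 1/(\lambda nk)$. This is the well-conditioned 1D lower bound of \citet{safran2020good}, missing the target by a factor $\lambdamax/\lambda$ throughout the regime $1<k<\lambdamax/\lambda$. Concretely, take $\lambda=1$, $\lambdamax=10^6$, $n=10^4$, $k=10^3$: the theorem claims $\Omega(G^2/10^7)$, but at the $\eta$ that balances bias and variance on coordinate~1 (around $\eta=\log(nk)/(nk)\approx 1.6\times10^{-6}$, at which point coordinate~2's bias is astronomically small since $|1-\eta\lambdamax|^{2nk}\approx 0.6^{2\cdot10^7}$), your construction's expected loss is only $\tilde O(G^2/(nk^2))\approx G^2/10^{10}$. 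The paper instead puts the noise on the $\lambdamax$ coordinate, so the variance term is $\frac{\eta^2 G^2\lambdamax}{8}\bigl(\frac{1-(1-\eta\lambdamax)^{nk}}{1-(1-\eta\lambdamax)^n}\bigr)^2\beta_{n,\eta,\lambdamax}$ and carries the factor $\lambdamax$; the transition in $\beta_{n,\eta,\lambdamax}$ occurs at $\eta\lambdamax n\sim1$ (not $\eta\lambda n\sim 1$), which is exactly what produces the extra $\lambdamax/\lambda$ in the intermediate regime. The low-curvature coordinate is then used \emph{only} for the bias term forcing $\eta\gtrsim 1/(\lambda nk)$, and is initialized at $G/\lambda$ with $x_{0,2}=0$, not at $\Theta(G/\lambdamax)$. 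You need to swap the roles of the two coordinates.
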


The function $F$ used in the above theorem is given by the component functions
\begin{equation}\label{eq:ss_construction}
	~f_{i}(\bx)~=~f_i(x_1,x_2)~:=~\frac{\lambda}{2}x_1^2+\frac{\lambdamax}{2}x_2^2+\begin{cases}\frac{G}{2}x_2& i\leq \frac{n}{2}\\-\frac{G}{2}x_2& i>\frac{n}{2}\end{cases}~.
\end{equation}
For the random reshuffling sampling scheme, we have the following theorem.

\begin{theorem}[Random Reshuffling Lower Bound]\label{thm:rr_lower_bound}
	For any $k\ge1, n>1$, and positive $G,\lambda$, there exist a function $F$ on $\reals^3$ and an initialization point $\bx_0$ satisfying Assumption \ref{as:lower_bounds}, for which random reshuffling SGD using any fixed step size $\eta>0$ satisfies
	\[
		~\E\pcc{F(\bx_k)} ~\ge~ c\frac{G^2}{\lambda nk}\min\set{1~,~\frac{\lambdamax/\lambda}{nk} + 		\frac{\lambdamax^2/\lambda^2}{k^2}}~,
	\]
	for some universal constant $c>0$.
\end{theorem}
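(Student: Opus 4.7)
The plan is to extend the single-shuffling construction \eqref{eq:ss_construction} to three dimensions, so that different regimes of the adversarial step size $\eta$ expose the two distinct terms $\frac{\lambdamax/\lambda}{nk}$ and $\frac{\lambdamax^2/\lambda^2}{k^2}$ inside the min. Concretely, I would take $F$ with diagonal Hessian $\mathrm{diag}(\lambda,\lambdamax,\lambdamax)$ and components
\[
f_i(\bx)=\tfrac{\lambda}{2}x_1^2+\tfrac{\lambdamax}{2}(x_2^2+x_3^2)+b_{i,2}x_2+b_{i,3}x_3,
\]
with the $\pm G/2$-valued biases having zero epoch-sum (so the global optimum is the origin), and initialize at $\bx_0=(G/\lambda,0,0)$, which satisfies Assumption~\ref{as:lower_bounds} with $\norm{\nabla F(\bx_0)}=G$. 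Because the Hessians of the $f_i$ are equal and diagonal, the three coordinates evolve independently under SGD, so it suffices to lower-bound $\E[F_j(x_k^{(j)})]$ per coordinate and sum.

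For each coordinate $j$, writing $\rho_j:=1-\eta a_j$, one epoch of random reshuffling is the linear recursion $x_e^{(j)}=\rho_j^n x_{e-1}^{(j)}+\eta S_e^{(j)}$ with $S_e^{(j)}:=\sum_{t=1}^{n}\rho_j^{n-t}b_{\sigma_e(t),j}$. The i.i.d.\ structure of the epoch permutations together with the zero-sum property of the biases give $\E[S_e^{(j)}]=0$ and $\E[x_{e-1}^{(j)}S_e^{(j)}]=0$, so unrolling the resulting second-moment recursion yields
\[
\E\bigl[(x_k^{(j)})^2\bigr]=\rho_j^{2nk}(x_0^{(j)})^2+\eta^2 V_j\cdot\frac{1-\rho_j^{2nk}}{1-\rho_j^{2n}},
\]
where $V_j:=\var_\sigma(S^{(j)})=\tfrac{nG^2}{4(n-1)}\sum_{t=1}^n(w_t^{(j)}-\bar w^{(j)})^2$ by the standard permutation-variance identity with $\sigma_b^2=G^2/4$ and $w_t^{(j)}:=\rho_j^{n-t}$.

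Since $\eta$ is adversarial, I perform a case analysis on its magnitude. (i) If $\eta\lesssim 1/(\lambda nk)$, then $\rho_1^{2nk}\gtrsim 1$, so the coordinate-$1$ residual alone gives $F(\bx_k)\gtrsim G^2/\lambda$, dominating the target $\tfrac{G^2}{\lambda nk}\min\{1,A+B\}$ in the regime where the min is $1$. (ii) For $1/(\lambda nk)\lesssim \eta\lesssim 1/(\lambdamax n)$, Taylor-expanding gives $V_2\asymp G^2(\eta\lambdamax)^2 n^3$ and the geometric factor $\frac{1-\rho_2^{2nk}}{1-\rho_2^{2n}}\asymp 1/(\eta\lambdamax n)$ (valid since $\eta\lambdamax nk\gtrsim\lambdamax/\lambda\gg 1$), producing coordinate-$2$ error $\asymp\eta^3\lambdamax^2 G^2 n^2$, whose minimum at $\eta\asymp 1/(\lambda nk)$ is $\asymp G^2\lambdamax^2/(\lambda^3 nk^3)$, matching the $\frac{\lambdamax^2/\lambda^2}{k^2}$ term. (iii) For $\eta\gtrsim 1/(\lambdamax n)$ with $\eta<2/\lambdamax$, the weights $w_t^{(2)}$ decay geometrically, $V_2\asymp G^2/(\eta\lambdamax)$, and the coordinate-$2$ error is $\asymp \eta G^2$; its minimum (at $\eta\asymp 1/(\lambdamax n)$) is $\asymp G^2/(\lambdamax n)$, which exceeds $G^2\lambdamax/(\lambda^2 n^2 k^2)$ in the nontrivial regime $k\gtrsim\lambdamax/\lambda$ and yields the $\frac{\lambdamax/\lambda}{nk}$ term. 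For $\eta>2/\lambdamax$, $|\rho_2|>1$ and SGD diverges, so the bound holds trivially. The third coordinate, carrying a distinct bias pattern, is used to patch any constant-factor gap at the boundary $\eta\asymp 1/(\lambdamax n)$ between sub-regimes (ii) and (iii).

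The main obstacle is the quantitative variance calculation: both asymptotics $V_2\asymp G^2(\eta\lambdamax)^2 n^3$ (in sub-regime (ii)) and $V_2\asymp G^2/(\eta\lambdamax)$ (in sub-regime (iii)) must be established with explicit universal constants and combined with the geometric factor $\tfrac{1-\rho_2^{2nk}}{1-\rho_2^{2n}}$ in a way that produces the stated min-bound cleanly, without logarithmic loss and without a gap at the transition $\eta\asymp 1/(\lambdamax n)$. Secondarily, verifying Assumption~\ref{as:lower_bounds} at $\bx_0$ and showing genuine divergence (not merely boundedness) for $\eta>2/\lambdamax$ are routine but must be included to close the argument.
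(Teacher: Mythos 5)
Your construction cannot produce the $\frac{\lambdamax/\lambda}{nk}$ term, and the gap is fundamental to your choice of Hessians, not merely a constant-factor issue to be ``patched.''

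In your construction every $f_i$ has the same Hessian $\mathrm{diag}(\lambda,\lambdamax,\lambdamax)$. Consequently, for every coordinate $j$ the per-epoch contraction factor $\prod_i(1-\eta a_{\sigma(i),j})$ is deterministic, and the noise term $S^{(j)}_e=\sum_t\rho_j^{n-t}b_{\sigma_e(t),j}$ has exactly zero mean (the weights $\rho_j^{n-t}$ do not depend on $\sigma$, and $\E[b_{\sigma(t),j}]=0$). So $\E[x_k^{(j)}]=\rho_j^{nk}x_0^{(j)}$: the iterate is unbiased, and the only contribution beyond the deterministic decay is a variance term. Moreover, your coordinates 2 and 3 are then \emph{statistically identical}: by the permutation-variance identity you quote, $V_j$ depends on the $\pm G/2$ bias vector only through its population variance $G^2/4$, which is the same for any such pattern. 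Hence coordinate 3 adds nothing; your suggestion to use ``a distinct bias pattern'' to patch the boundary cannot work.

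The term $\frac{\lambdamax/\lambda}{nk}$ in the target (equivalently the summand $\frac{G^2\lambdamax}{\lambda^2 n^2k^2}$) is produced in the paper's proof by a genuine \emph{bias}, not a variance effect. The paper's construction \eqref{eq:rr_construction} gives the third coordinate a quadratic coefficient that is $\lambdamax$ for $i\le n/2$ and $0$ for $i>n/2$ --- i.e., the curvature of $f_i$ in coordinate 3 is \emph{correlated with its linear term}. With such a construction, the one-epoch noise $\sum_i(1-2\sigma_i)\prod_{j>i}(1-\eta\lambdamax\sigma_j)$ no longer has zero mean (since $\sigma_i$ and the product over $j>i$ are coupled by sampling without replacement), and \propref{prop:stochastic_terms} shows its expectation is $\le -\tfrac{1}{16}\eta\lambdamax n$. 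This systematic drift, accumulated over $k$ epochs (\propref{prop:xt_bound} and \eqref{eq:opposite_geo_sums}), is what yields the lower bound $\gtrsim \eta^2 G^2\gtrsim G^2/(\lambda nk)^2$ on $\E[x_{k,3}^2]$ and hence the $\frac{G^2\lambdamax}{\lambda^2 n^2 k^2}$ contribution.

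To see the failure concretely: in your regime (ii) the adversary takes $\eta\approx 1/(\lambda nk)$, giving your best bound $\asymp G^2\lambdamax^2/(\lambda^3 nk^3)$. When $k>\tfrac{\lambdamax}{\lambda}n$ (a legitimate regime in the theorem), the target's dominant term is $G^2\lambdamax/(\lambda^2n^2k^2)$, which exceeds your bound by a factor $\lambda k/(\lambdamax n)>1$. So your proof as planned establishes only the single-shuffling-type bound $\frac{G^2}{\lambda nk}\min\{1,\frac{\lambdamax^2/\lambda^2}{k^2}\}$, not the stated random-reshuffling bound. Your outline for coordinates 1 and 2 is essentially the same as the paper's, but to recover coordinate 3 you must use a construction in which the quadratic coefficients of the $f_i$'s differ and correlate with the linear coefficients, and you must bound the resulting bias via the quantities the paper isolates in Lemmas~\ref{lem:deterministic_prod}--\ref{lem:sum_prod_expect}.
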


The function $F$ used in the above theorem is given by the component functions
\begin{equation}\label{eq:rr_construction}
	~f_{i}(\bx)~=~f_i(x_1,x_2,x_3)~\coloneqq~\frac{\lambda}{2}x_1^2+\frac{\lambdamax}{2}x_2^2 + \begin{cases}\frac{G}{2}x_2 + \frac{\lambdamax}{2}x_3^2 + \frac{G}{2}x_3& i\leq \frac{n}{2}\\-\frac{G}{2}x_2 - \frac{G}{2}x_3& i>\frac{n}{2}\end{cases}~.
\end{equation}

In contrast, with-replacement SGD bounds are of the form $\tilde{\Ocal}(G^2/(\lambda nk))$.\footnote{There is some subtlety here, as most such bounds are either on some average of the iterates rather than the last iterate, or use a non-constant step-size (in which case it is possible to prove a bound of $\Ocal(G^2/\lambda nk)$). However, \cite{shamir2013stochastic} prove a bound of $\tilde{\Ocal}(G^2/\lambda nk)$ (involving an additional $\log(nk)$ factor) for the last iterate, using a variable step-size $\eta_t = 1/(\lambda t)$, and it is not difficult to adapt the analysis to SGD with a constant step size $\eta=\tilde{\Theta}(1/(\lambda nk))$.} Thus, we see that for both single shuffling and random shuffling, we cannot improve on the worst-case performance of with-replacement SGD by more than constants or logarithmic terms, unless $k$ is larger than the condition number parameter $\frac{\lambda_{\max}}{\lambda}$. As discussed earlier, this can be an unrealistically large regime for $k$ in many cases.
	
The proofs of the above theorems, which appear in Appendices \ref{subsec:ss_lower_proof} and \ref{subsec:rr_lower_proof}, follow a broadly similar strategy to the one-dimensional constructions in \citet{safran2020good}, with the crucial difference that we construct a multivariate function having different curvature in each dimension. The constructions themselves are very simple, 
but the analysis is somewhat technical and intricate. Roughly speaking, our analysis splits into three different cases, where in each a different magnitude of the step size is considered. In the first case, where the step size is too small, we use a dimension with small curvature to show that if we initialize far enough from the global minimizer then SGD will converge too slowly. The construction in the remaining dimensions is similar to that in \citet{safran2020good}, and uses a larger curvature than the first dimension to generate dependency on the condition number, due to the somewhat too large step size that is dictated by the first case. In the remaining two cases, the step size is either too large and the resulting bound is not better than with-replacement SGD; or is chosen in a manner which achieves better dependence on $n,k$, but also incurs additional error terms dependent on $\lambda_{\max},\lambda$, due to the statistical dependencies between the gradients created via without-replacement sampling. Combining the cases leads to the bounds stated in the theorems. 


We also make the following remarks regarding our lower bounds:
\begin{remark}[Separating $\lambdamax$ and $L$]
	Our parameters satisfy the following chain of inequalities $\lambda\le\lambdamax\le L$. Since the condition number is commonly defined in the literature as $L/\lambda$, we note that in our lower bound constructions we have $\lambdamax=L$. However, since $\lambdamax$ could potentially be smaller than $L$, this raises the question of whether it is possible to construct a lower bound in which $\lambdamax$ is sufficiently smaller than $L$ yet the lower bound depends on $L$. The upper bounds we present in the next section indicate that this is not possible when the $A_i$'s commute, since in this case we get bounds with no dependence on $L$ (at least when $nk$ is large enough). This implies that to separate the two quantities $\lambdamax,L$, one would necessarily need a construction where the $A_i$'s do not commute. We leave the derivation of such a construction to future work.
\end{remark}

\begin{remark}[Bound on $L$]\label{remark:nk}
	We note that the condition $\frac{\log(nk) L}{\lambda nk}\leq 1$ is equivalent to requiring $nk$ to be at least on the order of the condition number $L/\lambda$ (up to log factors). This is a mild requirement, since if $nk$ is smaller than the condition number, then even if $f_i=f$ for all $i$ (that is, we perform deterministic gradient descent on the function $f$), there are quadratic functions for which no non-trivial guarantee can be obtained \citep{nesterov2018lectures}.
\end{remark}

\begin{remark}[A Random Reshuffling Lower Bound in $\reals^2$]
	Our lower bound construction for single shuffling is a function in $\reals^2$, whereas our random reshuffling construction is in $\reals^3$. We believe that a construction in $\reals^2$ is also possible for random reshuffling, however this would require a more technically involved proof which for example also lower bounds the right-most summand in \eqref{eq:ignored_squared_term}.
\end{remark}

\section{Matching Upper Bounds}\label{sec:upper}

In this section, we provide upper bounds that match our lower bounds from the previous section (up to constants and log factors, and for a fixed input dimension). Whereas our lower bounds use quadratic constructions where each matrix $A_i$ is diagonal, here we prove matching upper bounds in the somewhat more general case where the quadratic terms commute. Before we state our upper bounds, however, we will first state and discuss the assumptions used in their derivation.

\begin{assumption}[Upper Bounds Assumption]\label{as:upper_bounds}
	Suppose the input dimension $d$ is a fixed constant. Assume $F(\bx)=\frac12\bx^{\top}A\bx - \bb^{\top}\bx$ is a quadratic finite-sum function of the form 
	$\frac{1}{n}\sum_{i=1}^{n}f_i(\bx)$ for some $n>1$, which is 
	$\lambda$-strongly convex and satisfies $\norm{A}_{\textnormal{sp}} = \lambdamax$. 
	Each $f_i$ is a convex quadratic function of the form $f_i(x)=\frac12\bx^{\top}A_i\bx - \bb_i^{\top}\bx$, for commuting, symmetric, and PSD matrices $A_i\in\reals^{d\times d}$, which all satisfy $\norm{A_i}_{\textnormal{sp}}\leq L$. Moreover, suppose that $\norm{\nabla f_i(\bx^*)}\le G$ for all $i\in[n]$, where $\bx^*\coloneqq\arg\min_{\bx}F(\bx)$ is the global minimizer. Assume $nk$ is large enough so that $\frac{\log(nk) L}{\lambda nk}\le1$. We assume that the algorithm is initialized at some $\bx_0$ on which $\norm{\nabla F(\bx_0)}\le G$.
\end{assumption}

Note that our lower bound constructions satisfy the assumptions above. We remark that as far as upper bounds go, these assumptions are somewhat different than those often seen in the literature. E.g.\ we require that $\nabla f_i$ is bounded only at the global minimizer $\bx^*$, rather than in some larger domain as commonly assumed in the literature. However this comes at a cost of assuming that $d$ is fixed. We remark that we believe that it is possible to extend our upper bounds to hold for arbitrary $d$ with no dependence on $d$, but this would require a more technically involved analysis (e.g.\ a multivariate version of \propref{prop:random_variable_bounds}). Nevertheless, the bounds presented here are sufficient for the purpose of establishing the tightness of our lower bound constructions, which involve 2-3 dimensions. Lastly, we recall that the assumption on the magnitude of $nk$ is very mild as explained in Remark \ref{remark:nk}.

We now present the upper bound for single shuffling SGD:
\begin{theorem}[Single Shuffling Upper Bound]\label{thm:ss_upper_bound_commuting}
	Suppose $F(\bx)=\frac1n\sum_{i=1}^{n}f_i(\bx)$ satisfy Assumption \ref{as:upper_bounds}, and fix the step size $\eta=\frac{\log(nk)}{\lambda nk}$. Then single shuffling SGD satisfies
	\[
		~\E\pcc{F(\bx_k)}~\leq~\tilde{\Ocal}\p{\frac{G^2}{\lambda n k}\cdot\min\set{1 ~,~ \frac{\lambdamax/\lambda}{k}}}~,
	\]
	where the $\tilde{\Ocal}$ hides a universal constant, factors logarithmic in $n,k,\lambdamax,1/\lambda$, and a factor linear in $d$.
\end{theorem}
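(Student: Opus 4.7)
The plan is to exploit the commuting assumption to reduce to a one-dimensional analysis, and then to produce two complementary estimates of the resulting fixed-point error whose minimum matches the theorem. Since the symmetric PSD matrices $A_1,\dots,A_n$ pairwise commute, they share an orthonormal eigenbasis $\bv_1,\dots,\bv_d$; in this basis the SGD update decouples into $d$ independent one-dimensional iterations, and $F(\bx_k)-F(\bx^*)$ decomposes additively over coordinates. It therefore suffices to bound each coordinate's contribution and absorb the factor $d$ into $\tilde{\Ocal}$. Fix such a coordinate with curvatures $a_1,\dots,a_n\in[0,L]$ of average $\bar a\in[\lambda,\lambdamax]$ and intercepts $b_i$, set $m_i:=1-\eta a_i$ and $m:=\prod_i m_i$. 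One epoch under permutation $\sigma$ is the affine map $x\mapsto mx+\eta u_\sigma$ with $u_\sigma:=\sum_t\bigl(\prod_{s>t}m_{\sigma(s)}\bigr)b_{\sigma(t)}$, and iterating $k$ epochs under the \emph{same} $\sigma$ (the defining feature of single shuffling) yields the closed form
\[
x^{(k)} \;=\; x^*_\sigma + m^k\bigl(x^{(0)}-x^*_\sigma\bigr),\qquad x^*_\sigma \;:=\; \eta u_\sigma/(1-m),
\]
where $x^*_\sigma$ is the per-permutation SGD fixed point and $x^*=\bar b/\bar a$ is the 1-D global minimizer.

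The step size $\eta=\log(nk)/(\lambda nk)$ combined with $\log(nk)L/(\lambda nk)\le 1$ gives $1-m \ge \tfrac{1}{2}\eta n\bar a = \Omega(\log(nk)/k)$, whence $m^k\le(nk)^{-\Omega(1)}$ and the transient contribution $\bar a\,m^{2k}(x^{(0)}-x^*_\sigma)^2$ is $\tilde{\Ocal}(1/\mathrm{poly}(nk))$, well below the target rate. It remains to bound
\[
\bar a\,\E_\sigma\bigl[(x^*_\sigma-x^*)^2\bigr] \;=\; \frac{\E_\sigma\bigl[(\eta\bar a\,u_\sigma-\bar b(1-m))^2\bigr]}{\bar a\,(1-m)^2}.
\]
First, a crude deterministic bound $|u_\sigma|\le nG$ together with $1-m\ge\eta n\lambda$ gives $|x^*_\sigma-x^*|^2=\Ocal(G^2/\lambda^2)$; sharpening this by using $|b_i-\bar b|\le G$ and a contraction argument brings it down to $\bar a\,\E_\sigma[(x^*_\sigma-x^*)^2]=\tilde{\Ocal}(G^2/(\lambda nk))$, matching with-replacement. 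Second, expanding $\prod_{s>t}m_{\sigma(s)}=1-\eta(n\bar a-V_t(\sigma))+\Ocal(\eta^2 n^2 L\bar a)$ with $V_t(\sigma):=\sum_{s\le t}a_{\sigma(s)}$, and $1-m$ similarly, a direct computation shows the $\Ocal(\eta)$ terms in the numerator cancel, leaving
\[
\eta\bar a\,u_\sigma-\bar b(1-m) \;=\; \eta^2\bar a\Bigl[\textstyle\sum_t b_{\sigma(t)} V_t(\sigma) - S\Bigr] \;+\; \text{higher order},
\]
for a $\sigma$-independent shift $S$. Computing the second moment of the permutation statistic $\sum_t b_{\sigma(t)}V_t(\sigma)$ by expanding over pairs and using the uniform-permutation identities $\Pr[\sigma(s)=i]=1/n$ and $\Pr[\sigma(s)=i,\sigma(s')=j]=1/(n(n-1))$, together with $|b_i|\le G$, yields a bound of order $\tilde{\Ocal}(n^3\bar a^2 G^2)$; dividing by $\bar a(1-m)^2\asymp n^2\eta^2\bar a^3$ and substituting $\eta$ produces $\tilde{\Ocal}(G^2\lambdamax/(\lambda^2 nk^2))$. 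Taking the minimum of the two bounds and summing over the $d$ coordinates delivers the theorem.

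The principal obstacle I anticipate is executing the second-moment computation so that the bound depends only on $\bar a^2$ (hence $\lambdamax^2$), rather than on $\overline{a^2}:=\tfrac1n\sum_i a_i^2$, which can be as large as $\bar a\,L$ and would introduce a spurious $L$-dependence into the final bound---exactly the pathology the Remark following the lower bounds flags as the reason for the commuting hypothesis. Eliminating the $\overline{a^2}$ factor appears to require using the commuting structure more carefully, e.g.\ by rewriting $\prod_{s>t}m_{\sigma(s)} = m/\prod_{s\le t}m_{\sigma(s)}$ and exploiting cancellations across the outer sum over $t$, or by invoking a sufficiently sharp bound on permutation statistics such as \propref{prop:random_variable_bounds}. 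A secondary bookkeeping challenge is tracking higher-order Taylor remainders in $\eta$ without re-introducing $L$-dependence, which is feasible thanks to the assumption $\log(nk)L/(\lambda nk)\le 1$ being just strong enough to keep those remainders lower-order.
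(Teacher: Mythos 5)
Your high-level plan --- simultaneous diagonalization to decouple into $d$ one-dimensional problems, a closed-form iteration $x_k = S^k x_0 + \eta\,\frac{1-S^k}{1-S}X_\sigma$ (which you rephrase via the per-permutation fixed point $x^*_\sigma$), and a bound obtained by taking the minimum of a ``$\lesssim 1/(\lambda nk)$'' estimate and a ``$\lesssim \lambdamax/(\lambda^2 n k^2)$'' estimate --- matches the paper's structure exactly. The gaps are in how you propose to prove the two estimates for the key permutation statistic $X_\sigma := \sum_{j}\beta_{\sigma(j)}\prod_{i>j}(1-\alpha_{\sigma(i)})$.

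For the sharper $n^3\bar\alpha^2$ bound you propose to Taylor-expand the products in $\eta$ and compute a second moment of $\sum_t b_{\sigma(t)} V_t(\sigma)$. You correctly flag that a naive second-moment expansion of a permutation statistic quadratic in the $\alpha_i$'s produces $\overline{a^2}:=\tfrac1n\sum_i a_i^2$ rather than $\bar a^2$, which is as large as $L\bar a$; but the remedies you list are speculative, and the last one (``invoking \propref{prop:random_variable_bounds}'') is the paper's own proposition, so it is circular as an independent proof. Moreover, even granting the $\bar a^2$ dependence, the Taylor remainder you acknowledge is of size $\Ocal(\eta^2 n^2 \bar a^2)$ per term, which after summing over $t$ is not uniformly dominated by the leading term unless $k\gtrsim\sqrt{n}\,\bar a/\lambda$; the perturbative approach does not degrade gracefully. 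The paper's Lemma~\ref{lem:keyup} sidesteps both difficulties at once by \emph{summation by parts} (Abel summation), rewriting $X_\sigma=\sum_{j<n}\alpha_{\sigma(j+1)}\prod_{i>j+1}(1-\alpha_{\sigma(i)})\sum_{i\le j}\beta_{\sigma(i)}$: this is an exact identity, it is \emph{linear} in the $\alpha$'s so only $\bar\alpha$ can appear, and the partial sums $\sum_{i\le j}\beta_{\sigma(i)}$ are controlled by Hoeffding--Serfling. That is the missing idea.

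For the ``with-replacement'' estimate you say a crude deterministic bound plus an unspecified ``contraction argument'' gives $\bar a\,\E[(x^*_\sigma-x^*)^2]=\tilde\Ocal(G^2/(\lambda nk))$. This requires $X_\sigma^2\lesssim 1/\bar\alpha$ (with polylog factors). But a purely deterministic tail cut --- observing that the product kills all but the last $r\sim 1/\bar\alpha$ terms --- only bounds $|X_\sigma|\lesssim r$ and hence $X_\sigma^2\lesssim 1/\bar\alpha^2$, which is off by a full factor of $1/\bar\alpha$ and yields a bound that grows with $k$. To close this you need a concentration inequality on those $r$ surviving terms; the paper uses Bernstein's inequality (for sampling without replacement) to justify the tail cut, and then Azuma's inequality on a martingale difference sequence to get the extra factor $1/\sqrt{r}$. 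Neither concentration step appears in your sketch, so as written the ``Case 2'' branch does not produce the claimed $\tilde\Ocal(G^2/(\lambda nk))$ rate.

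In short: the architecture is right, but both branches of the $\min$ rest on a bound for $X_\sigma$ that you have not actually established, and the paper's mechanism for obtaining it --- summation by parts followed by Hoeffding--Serfling, Bernstein, and Azuma --- is precisely the content you would need to supply.
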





Next, we have the following result for random reshuffling SGD.

\begin{theorem}[Random Reshuffling Upper Bound]\label{thm:rr_upper_bound_commuting}
	Suppose $F(\bx)=\frac1n\sum_{i=1}^{n}f_i(\bx)$ satisfy Assumption \ref{as:upper_bounds}, and fix the step size $\eta=\frac{\log(nk)}{\lambda nk}$. Then random reshuffling SGD satisfies
	\[
	~\E\pcc{F(\bx_k)}~\leq~\tilde{\Ocal}\p{\frac{G^2}{\lambda n k}\cdot\min\set{1 ~,~ \frac{\lambdamax/\lambda}{nk} + \frac{\lambdamax^2/\lambda^2}{k^2}}}~,
	\]
	where the $\tilde{\Ocal}$ hides a universal constant, factors logarithmic in $n,k,\lambdamax,1/\lambda$, and a factor linear in $d$.
\end{theorem}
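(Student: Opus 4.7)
The plan is a bias/variance analysis of SGD on quadratics, made tractable by the commuting structure of the $A_i$'s. First, I would substitute $\by_t := \bx_t - \bx^*$ and $\be_i := -\nabla f_i(\bx^*)$, so that the per-step recursion becomes the affine map $\by_{t+1}=(I-\eta A_{\sigma(t)})\by_t+\eta\be_{\sigma(t)}$, with $\sum_i\be_i=0$ and $\|\be_i\|\le G$. Because the $A_i$'s share a common orthonormal eigenbasis, the per-epoch contraction $B:=\prod_{i=1}^n (I-\eta A_i)$ is deterministic (independent of the shuffle $\sigma$), and one epoch can be written $\by^{\mathrm{end}}=B\by^{\mathrm{start}}+\bn_\sigma$ with
\[
\bn_\sigma=\eta\sum_{t=0}^{n-1}\Big[\prod_{s>t}(I-\eta A_{\sigma(s)})\Big]\be_{\sigma(t)}.
\]
Since random reshuffling draws a fresh uniform permutation each epoch, unrolling $k$ epochs yields $\by_k=B^k\by_0+\sum_{j=1}^k B^{k-j}\bn^{(j)}$ with iid $\bn^{(j)}$.

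Starting from the decomposition $2\,\E[F(\bx_k)-F^*]=\|A^{1/2}\E[\by_k]\|^2+\mathrm{tr}(A\,\mathrm{Cov}(\by_k))$, I would bound three pieces separately, each of which diagonalizes into $d$ scalar sums in the common eigenbasis. (a) \emph{Initial decay:} since $\prod_i(1-\eta\alpha_i^{(m)})\le\exp(-n\eta\alpha^{(m)})\le\exp(-n\eta\lambda)$, the choice $\eta=\log(nk)/(\lambda nk)$ gives $\norm{B^k}_{\textnormal{sp}}\le 1/(nk)$, hence $\|A^{1/2}B^k\by_0\|^2 \lesssim \lambda_{\max} G^2/(\lambda^2 (nk)^2)$. (b) \emph{Bias:} Taylor-expand $\bn_\sigma$ in $\eta$; the order-$\eta$ term $\eta\sum_t\be_{\sigma(t)}$ vanishes identically since $\sum_i\be_i=0$, giving $\bn_\sigma=-\eta^2\sum_{t<s}A_{\sigma(s)}\be_{\sigma(t)}+O(\eta^3)$. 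Using $\E[A_{\sigma(s)}\be_{\sigma(t)}]=-\tfrac{1}{n(n-1)}\sum_i A_i\be_i$ for $s\neq t$, one obtains $\mu := \E[\bn]=\tfrac{\eta^2}{2}\sum_i A_i\be_i+O(\eta^3)$. (c) \emph{Variance:} the same expansion shows $\mathrm{Cov}(\bn)$ is of order $\eta^4$, with its eigenvalues determined by permutation-correlations of the $\alpha_{\sigma(s)}^{(m)}e_{\sigma(t)}^{(m)}$ in each eigendirection.

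Because $\alpha^{(m)}\ge\lambda$ forces $e^{-n\eta\alpha^{(m)}}\le 1/(nk)$, in our regime $\norm{(I-B)^{-1}(I-B^k)}_{\textnormal{sp}}=O(1)$ and the geometric sum $\sum_{j=0}^{k-1}B^{2j}$ has eigenvalues $O(1)$ as well. For (b) I would combine this with the Cauchy--Schwarz estimate $(\sum_i\alpha_i^{(m)}e_i^{(m)})^2\le(\sum_i(\alpha_i^{(m)})^2)(\sum_i(e_i^{(m)})^2)\le n^2 L\alpha^{(m)} G^2$, and then use the assumption $\log(nk)L/(\lambda nk)\le 1$ to absorb the stray $L$ into logarithmic overhead, producing a bias contribution of $\tilde{O}(G^2\lambda_{\max}^2/(\lambda^3 nk^3))$. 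An analogous but more intricate covariance calculation for (c) yields $\mathrm{tr}(A\,\mathrm{Cov}(\by_k))=\tilde{O}(G^2\lambda_{\max}/(\lambda^2 n^2 k^2))$, and together with (a) these match the second branch of the target bound. The ``$\min$ with $1$'' branch is handled by noting that in the regime $k\lesssim \lambda_{\max}/\lambda$ the target collapses to the with-replacement rate $\tilde{O}(G^2/(\lambda nk))$; this is recovered either by a coarser bound on (c) or by a separate standard analysis.

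The main obstacle will be the covariance computation in (c), which requires tracking permutation-induced correlations such as $\E[\alpha_{\sigma(s)}^{(m)}\alpha_{\sigma(s')}^{(m)}e_{\sigma(t)}^{(m)}e_{\sigma(t')}^{(m)}]$ over the various configurations of equal/distinct indices among $\{s,s',t,t'\}$, and collapsing the resulting sums into scalar quantities expressible via $\alpha^{(m)}$ (the eigenvalue of the \emph{averaged} Hessian) rather than the individual $\alpha_i^{(m)}$, so that the final bound depends on $\lambda_{\max}$ instead of $L$. The commutativity assumption is precisely what makes this reduction possible: in the common eigenbasis each such matrix expectation becomes a scalar identity on a uniform random permutation. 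I expect to lean on the multivariate analogue of \propref{prop:random_variable_bounds} that the authors flag as the ingredient needed to pass from scalar recursions to arbitrary fixed dimension $d$.
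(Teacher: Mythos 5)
Your overall architecture (per-epoch affine recursion with deterministic contraction $B$, i.i.d.\ noise $\bn^{(j)}$ across reshuffled epochs, reduction to $d$ decoupled scalar recursions via simultaneous diagonalization, and a bias/variance split) is the same as the paper's. However, the specific estimate you propose for the bias term in step (b) is too lossy and does not close the argument. Your Cauchy--Schwarz bound $(\sum_i \alpha_i^{(m)} e_i^{(m)})^2 \le (\sum_i (\alpha_i^{(m)})^2)(\sum_i (e_i^{(m)})^2) \le n^2 L\, \alpha^{(m)} G^2$ produces, after multiplying through, a contribution of order $\eta^2 L G^2$ per eigendirection, i.e.\ $\tilde{\Ocal}(L G^2/(\lambda^2 n^2 k^2))$ -- this carries a factor of $L$ (the max individual Hessian norm) rather than $\alpha^{(m)}\le\lambdamax$ (the eigenvalue of the averaged Hessian). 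Your claim that the assumption $\eta L\le 1$ ``absorbs the stray $L$ into logarithmic overhead, producing a bias contribution of $\tilde{O}(G^2\lambda_{\max}^2/(\lambda^3 nk^3))$'' does not follow: $\eta L\le 1$ only lets you bound $\eta^2 L G^2 \le \eta G^2 = \tilde{\Ocal}(G^2/(\lambda nk))$, which is the with-replacement rate, not the refined $k^{-3}$ or $(nk)^{-2}$ terms. There is no route from your Cauchy--Schwarz step to the theorem's bound with $\lambdamax$ in place of $L$.

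The paper's fix is Maclaurin's inequality on the elementary symmetric polynomials, applied to the full (not Taylor-truncated) expansion of $\prod_{i>j}(1-\alpha_{\sigma(i)})$, which yields $|\E[Y_j]|\le 2\bar{\alpha}$ per term and hence $\E[|X_\sigma|]\le 2n\bar\alpha$, i.e.\ a bound with $\bar\alpha^2 = (\eta\alpha^{(m)})^2$ rather than $L\bar\alpha$ after squaring. Your Taylor truncation ``$\bn_\sigma = -\eta^2\sum_{t<s}A_{\sigma(s)}\be_{\sigma(t)} + O(\eta^3)$'' also sweeps a genuine difficulty under the rug: the $m$-th order term has $\binom{n}{m}$ summands, so the tail is only controllable because the geometric series $\sum_m (n\bar\alpha)^m$ converges when $n\bar\alpha\le 1/2$ -- a fact established in the paper by summing all orders, not by asserting a remainder estimate. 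Finally, you defer the variance computation (c) entirely to ``the multivariate analogue of \propref{prop:random_variable_bounds},'' but the scalar version of that proposition (the bound on $\E[X_\sigma^2]$, proved via Hoeffding--Serfling and Azuma concentration in Lemma~\ref{lem:keyup}) is exactly where the hard work lies, so this step is not a detail one can outsource. The net effect is that the parts you work out are either incorrect (Cauchy--Schwarz giving $L$) or non-rigorous (Taylor remainder), and the genuinely hard part is deferred.
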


The proofs of the above theorems, which appear in Appendices \ref{subsec:ss_upper_proof} and \ref{subsec:rr_upper_proof}, are based on reducing our problem to a one-dimensional setting, deriving a closed-form expression for $x_k$ (as was done in \citet{safran2020good}), and then carefully bounding each term in the resulting expression. However, our bounds refine those of \citet{safran2020good}, allowing us to achieve upper bounds that precisely capture the dependence on all the problem parameters. Both proofs split the analysis into two cases: (i) The case where we perform sufficiently many epochs ($k>\lambdamax/\lambda$) and are able to improve upon with-replacement SGD; and (ii), where too few epochs are performed ($k<\lambdamax/\lambda$), in which case the sub-optimality rate matches that of with-replacement SGD. In both theorems, new tools to control the random quantities that arise during the problem analysis are required to attain bounds that match our lower bounds (see \propref{prop:random_variable_bounds} in the Appendix for example). We point out that the choice of step size in the theorems is not unique and that the same upper bounds, up to log factors, apply to other choices of the step size (we did not attempt to optimize the logarithmic terms in the bounds). 

It is interesting to note that somewhat surprisingly, our upper bounds do not depend on $L$, only on the smaller quantity $\lambdamax$. $L$ only appears via the condition $\eta L\leq 1$ in Assumption \ref{as:upper_bounds}. This can be attributed to the assumption that the $A_i$'s commute, in which case the spectral norm of their average governs the bound rather than the maximal norm of the individual $A_i$'s. It is interesting to investigate to what extent our bounds could generalize to the non-commuting case, and if the dependence on $L$ will remain mild. Since our upper bounds make essential use of the AM-GM inequality and of the even more general Maclaurin's inequalities, it seems like we would need strong non-commutative versions of these inequalities to achieve such bounds using our technique. We refer the interested reader to \citet{yun2021can} for further discussion on non-commutative versions of such inequalities.

Another noteworthy observation is regarding the phase transitions in the bounds. In the previously known upper bounds for random reshuffling for quadratics, which scale as $\frac{1}{(nk)^2}+\frac{1}{nk^3}$ (see Table~\ref{table:bounds}), the dominant term switches from $\frac{1}{nk^3}$ to $\frac{1}{(nk)^2}$ once $k\geq n$. In the ill-conditioned setting, our random reshuffling upper bound reveals that such a phase transition only occurs when $k>\frac{\lambdamax}{\lambda}n$, which is a significantly stronger assumption. This suggests that in many real-world applications, the sub-optimality rate would be $\frac{G^2}{\lambda nk}\cdot\min\set{1~,~\frac{\lambdamax^2/\lambda^2}{k^2}}$, unless one can afford a very large iteration budget.

Lastly, we make the following additional remarks on our upper bounds.





\begin{remark}[Bounds with High Probability]
	Our proof for the single shuffling upper bound can be adapted to imply a high-probability bound, rather than merely an in-expectation bound, at a cost of a logarithmic dimension dependence (e.g.\ by taking a union bound over the guarantee in \thmref{thm:ss_upper_bound} in the proof across all dimensions). Moreover, in the case where $k\le \lambdamax/\lambda$, then our random reshuffling upper bound also holds with high probability.\footnote{This is because that in this case, the dominant term in the high probability bound given by $\frac{\lambdamax/\lambda}{k}+\frac{\lambdamax^2/\lambda^2}{k^2}$ is the squared term, which is also the dominant term in the upper bound in \thmref{thm:rr_upper_bound_commuting}.} This implies a concentration of measure around the expected bounds, which shows that our derived sub-optimality rates are the 'typical' behavior of without-replacement SGD in the worst case for single shuffling, and that the inability to improve upon without replacement when $k\le \lambdamax/\lambda$ is not just on average but also with high probability for random reshuffling.
\end{remark}

\begin{remark}[The Problem is Always Well-Conditioned in One-Dimension]
	Since in the one-dimensional case we have $\lambda=\lambdamax$, our upper bounds reveal that perhaps in contrast to what previous upper bounds that depended on $L$ suggested, the one-dimensional case is always well-conditioned. That is, without-replacement will always beat with-replacement SGD in one-dimension, given that Assumption \ref{as:upper_bounds} holds.
\end{remark}

\section{Experiments}\label{sec:exp}
	In this section, we empirically verify the phenomenon that our theory predicts by running simulations on the constructions used in Equations (\ref{eq:ss_construction}) and (\ref{eq:rr_construction}).\footnote{For the sake of simplicity and to better distinguish the two constructions, we only used the first and third dimensions in the construction in \eqref{eq:rr_construction}, since the second dimension is identical to the second dimension in \eqref{eq:ss_construction}.} Our lower bounds show that to beat with-replacement on those problem instances, the number of epochs must be in the order of magnitude of the condition number. However, since our analysis is asymptotic in its nature, ignoring constants and logarithmic terms, it is not clear to what extent this phenomenon can be observed in practice. To investigate this, we averaged the performance of $100$ SGD instantiations over various values of $k$ ranging from $k=40$ up to $k=2,000$, where for each value a suitable step size of $\eta=\frac{\log(nk)}{\lambda nk}$ was chosen. Our problem parameters were chosen to satisfy $n=500$, $G=1$, $\lambda=1$ and $\lambdamax=200$. Note that this implies that the condition number equals $200$ in the constructed problem instances. Each SGD instantiation was initialized from $\bx_0=\p{-\frac{G}{2\lambda},-\frac{G}{2\lambdamax}}$, which satisfies both Assumptions \ref{as:lower_bounds} and \ref{as:upper_bounds}. We used Python 3.6 in our code, which is freely available at https://github.com/ItaySafran/SGD\_condition\_number. Our code was run on a single machine with an Intel Core i7-8550U 1.80GHz processor.
	
	Our experiment results, which appear in \figref{fig:exp}, reveal that to get a noticeable advantage over with-replacement, the number of epochs must be very large and indeed exceed the condition number. Moreover, by that point the remaining error is already extremely tiny (less than $10^{-5}$). Additionally, for values of $k$ up to $300$, it is evident that with-replacement sampling occasionally performed almost the same as without-replacement sampling schemes, exhibiting significant overlap of the confidence intervals. This indicates that in certain ill-conditioned applications, it can be quite common for with-replacement to beat without-replacement. Another interesting observation is that random reshuffling does not significantly improve upon single shuffling, unless $k$ is at least $1,000$. This is in line with our theoretical results, which indicate that the advantage of random reshuffling will only be manifest once $k$ is considerably larger than the condition number. It also indicates that the additional effort of reshuffling the functions in every epoch might not pay off on ill-conditioned problems, unless a considerable iteration budget is possible.
	
	We remark that a similar experiment was shown in \citet{de2020random}, where a construction is presented in which with-replacement outperforms without-replacement SGD. However, a major difference between the above experiment and ours is that in the former, despite under-performing compared to with-replacement, without-replacement achieves an extremely small error, which requires exact arithmetic to monitor, to the point where the difference becomes arguably insignificant from a practical perspective. In contrast, our experiment shows that without-replacement merely does not significantly improve upon with-replacement for moderate values of $k$, in a setting where much more realistic error rates are attained.
	
	\begin{figure}
		\begin{center}
			\includegraphics[width=0.83\textwidth,trim=0.7cm 0cm 0.7cm 1cm,clip]{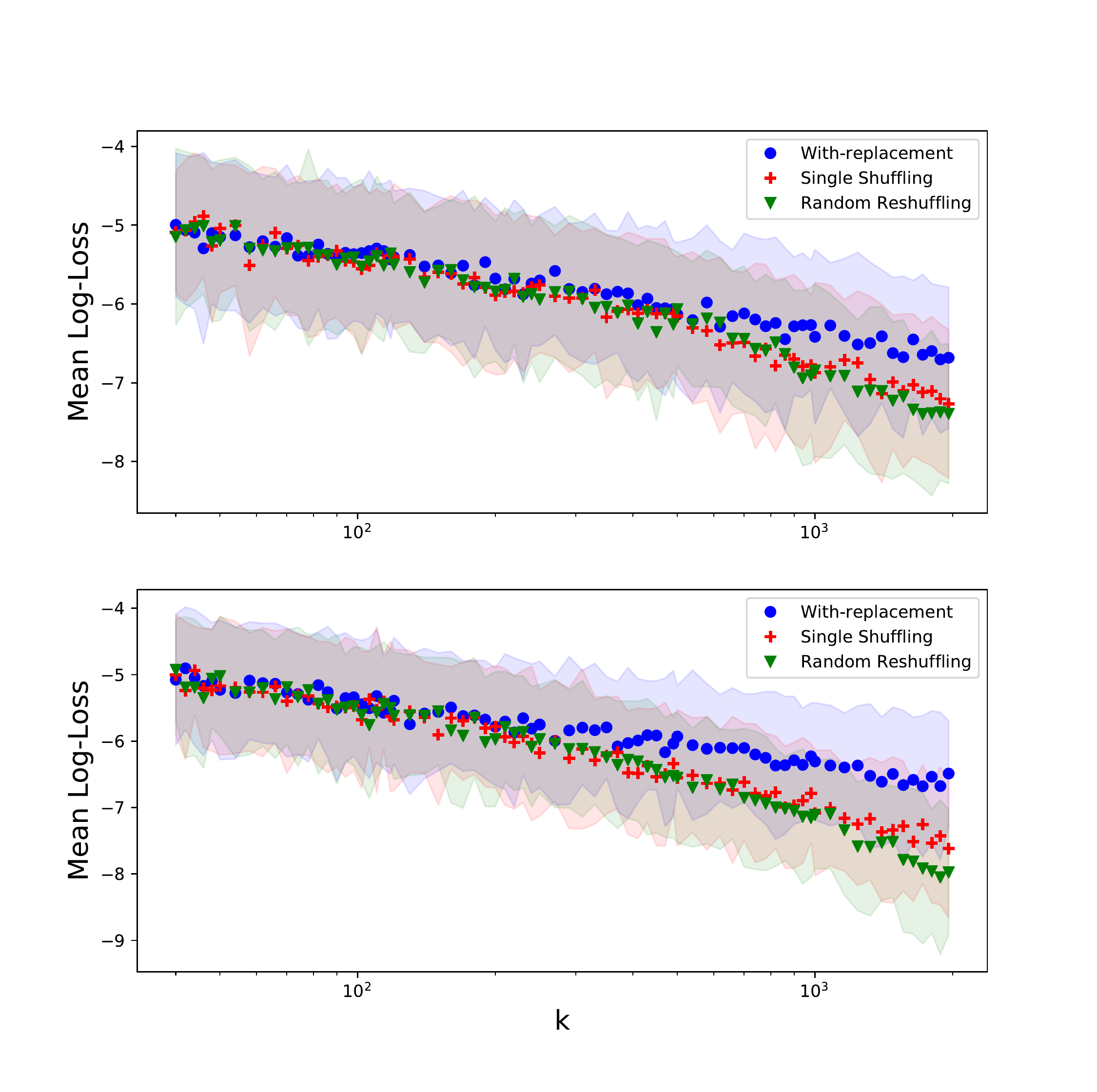}
		\end{center}
		\caption{The average value of $\log_{10}F(\bx_k)$ attained after running $100$ instantiations of SGD on $F(\cdot)$ in \eqref{eq:ss_construction} (top) and \eqref{eq:rr_construction} (bottom), using with-replacement (blue circle), single shuffling (red plus) and random reshuffling (green triangle) sampling, for varying values of $k$ and where the step size is chosen accordingly. The confidence intervals depict a single standard deviation of the log-loss for each value of $k$. Best viewed in color.}\label{fig:exp}
	\end{figure}

\section{Discussion}\label{sec:discussion}

In this paper, we theoretically studied the performance of without-replacement sampling schemes for SGD, compared to the performance of with-replacement SGD. Perhaps surprisingly (in light of previous work on this topic), we showed that without-replacement schemes \emph{do not} significantly improve over with-replacement sampling, unless the number of epochs $k$ is larger than the condition number, which is unrealistically large in many cases. Although the results are in terms of worst-case bounds, they already hold over a class of rather simple and innocuous quadratic functions. We also showed upper bounds essentially matching our lower bounds, as well as some simple experiments corroborating our findings.

Overall, our results show that the question of how without-replacement schemes compare to with-replacement sampling is intricate, and does not boil down merely to convergence rates in terms of overall number of iterations. Moreover, it re-opens the question of when we can hope without-replacement schemes to have better performance, for moderate values of $k$. Our lower bounds imply that for ill-conditioned problems, this is not possible in the worst-case, already for very simple quadratic constructions. However, an important property of these constructions (as evidenced in Equations (\ref{eq:ss_construction}) and (\ref{eq:rr_construction})) is that the eigenvalue distribution of the matrices defining the quadratic terms is sharply split between $\lambda_{\max},\lambda$, and both directions of maximal and minimal curvature are significant. In a sense, this makes it impossible to choose a step size that is optimal for all directions, and $k$ is indeed required to be larger than the condition number $\lambda_{\max}/\lambda$ for without-replacement schemes to beat with-replacement sampling. In contrast, for quadratics whose  eigenvalue distribution is smoother (e.g.\ where the curvature is roughly the same in ``most'' directions), we expect the ``effective'' condition number to be smaller, and hence possibly see an improvement already for smaller values of $k$. Formally quantifying this, and finding other favorable cases for without-replacement schemes, is an interesting question for future work. Another open question is to generalize our upper and lower bounds (getting the precise dependence on the condition number) to more general quadratic functions, or the even larger class of strongly-convex and smooth functions.

\section*{Acknowledgments}
	This research is supported in part by European Research Council (ERC) Grant 754705. We thank Gilad Yehudai for helpful discussions.

\bibliographystyle{plainnat}
\bibliography{bib}

\begin{thebibliography}{25}
\providecommand{\natexlab}[1]{#1}
\providecommand{\url}[1]{\texttt{#1}}
\expandafter\ifx\csname urlstyle\endcsname\relax
  \providecommand{\doi}[1]{doi: #1}\else
  \providecommand{\doi}{doi: \begingroup \urlstyle{rm}\Url}\fi

\bibitem[Ahn et~al.(2020)Ahn, Yun, and Sra]{ahn2020sgd}
Kwangjun Ahn, Chulhee Yun, and Suvrit Sra.
\newblock Sgd with shuffling: optimal rates without component convexity and
  large epoch requirements.
\newblock \emph{arXiv preprint arXiv:2006.06946}, 2020.

\bibitem[Bardenet et~al.(2015)Bardenet, Maillard,
  et~al.]{bardenet2015concentration}
R{\'e}mi Bardenet, Odalric-Ambrym Maillard, et~al.
\newblock Concentration inequalities for sampling without replacement.
\newblock \emph{Bernoulli}, 21\penalty0 (3):\penalty0 1361--1385, 2015.

\bibitem[Bottou(2009)]{bottou2009curiously}
L.~Bottou.
\newblock Curiously fast convergence of some stochastic gradient descent
  algorithms.
\newblock In \emph{Proceedings of the symposium on learning and data science,
  Paris}, 2009.

\bibitem[De~Sa(2020)]{de2020random}
Christopher~M De~Sa.
\newblock Random reshuffling is not always better.
\newblock \emph{Advances in Neural Information Processing Systems}, 33, 2020.

\bibitem[G{\"u}rb{\"u}zbalaban et~al.(2015{\natexlab{a}})G{\"u}rb{\"u}zbalaban,
  Ozdaglar, and Parrilo]{gurbuzbalaban2015convergence}
Mert G{\"u}rb{\"u}zbalaban, Asu Ozdaglar, and Pablo Parrilo.
\newblock Convergence rate of incremental gradient and {N}ewton methods.
\newblock \emph{arXiv preprint arXiv:1510.08562}, 2015{\natexlab{a}}.

\bibitem[G{\"u}rb{\"u}zbalaban et~al.(2015{\natexlab{b}})G{\"u}rb{\"u}zbalaban,
  Ozdaglar, and Parrilo]{gurbuzbalaban2015random}
Mert G{\"u}rb{\"u}zbalaban, Asu Ozdaglar, and Pablo Parrilo.
\newblock Why random reshuffling beats stochastic gradient descent.
\newblock \emph{arXiv preprint arXiv:1510.08560}, 2015{\natexlab{b}}.

\bibitem[HaoChen and Sra(2018)]{haochen2018random}
Jeffery~Z HaoChen and Suvrit Sra.
\newblock Random shuffling beats sgd after finite epochs.
\newblock \emph{arXiv preprint arXiv:1806.10077}, 2018.

\bibitem[Horn and Johnson(2013)]{horn2013matrix}
Roger~A Horn and Charles~R Johnson.
\newblock \emph{Matrix analysis}.
\newblock Cambridge University Press, Cambridge, 2013.

\bibitem[Huang et~al.(2021)Huang, Yuan, Mao, and Yin]{huang2021comparison}
Xinmeng Huang, Kun Yuan, Xianghui Mao, and Wotao Yin.
\newblock On the comparison between cyclic sampling and random reshuffling.
\newblock \emph{arXiv preprint arXiv:2104.12112}, 2021.

\bibitem[Jain et~al.(2019{\natexlab{a}})Jain, Nagaraj, and
  Netrapalli]{jain2019making}
Prateek Jain, Dheeraj Nagaraj, and Praneeth Netrapalli.
\newblock Making the last iterate of sgd information theoretically optimal.
\newblock In \emph{Conference on Learning Theory}, pages 1752--1755. PMLR,
  2019{\natexlab{a}}.

\bibitem[Jain et~al.(2019{\natexlab{b}})Jain, Nagaraj, and
  Netrapalli]{jain2019sgd}
Prateek Jain, Dheeraj Nagaraj, and Praneeth Netrapalli.
\newblock Sgd without replacement: Sharper rates for general smooth convex
  functions.
\newblock \emph{arXiv preprint arXiv:1903.01463}, 2019{\natexlab{b}}.

\bibitem[Lai and Lim(2020)]{lai2020recht}
Zehua Lai and Lek-Heng Lim.
\newblock Recht-r{\'e} noncommutative arithmetic-geometric mean conjecture is
  false.
\newblock In \emph{International Conference on Machine Learning}, pages
  5608--5617. PMLR, 2020.

\bibitem[Mishchenko et~al.(2020)Mishchenko, Khaled Ragab~Bayoumi, and
  Richt{\'a}rik]{mishchenko2020random}
Konstantin Mishchenko, Ahmed Khaled Ragab~Bayoumi, and Peter Richt{\'a}rik.
\newblock Random reshuffling: Simple analysis with vast improvements.
\newblock \emph{Advances in Neural Information Processing Systems}, 33, 2020.

\bibitem[Nemirovski et~al.(2009)Nemirovski, Juditsky, Lan, and
  Shapiro]{nemirovski2009robust}
Arkadi Nemirovski, Anatoli Juditsky, Guanghui Lan, and Alexander Shapiro.
\newblock Robust stochastic approximation approach to stochastic programming.
\newblock \emph{SIAM Journal on optimization}, 19\penalty0 (4):\penalty0
  1574--1609, 2009.

\bibitem[Nesterov(2018)]{nesterov2018lectures}
Yurii Nesterov.
\newblock \emph{Lectures on convex optimization}, volume 137.
\newblock Springer, 2018.

\bibitem[Nguyen et~al.(2020)Nguyen, Tran-Dinh, Phan, Nguyen, and van
  Dijk]{nguyen2020unified}
Lam~M Nguyen, Quoc Tran-Dinh, Dzung~T Phan, Phuong~Ha Nguyen, and Marten van
  Dijk.
\newblock A unified convergence analysis for shuffling-type gradient methods.
\newblock \emph{arXiv preprint arXiv:2002.08246}, 2020.

\bibitem[Rajput et~al.(2020)Rajput, Gupta, and
  Papailiopoulos]{rajput2020closing}
Shashank Rajput, Anant Gupta, and Dimitris Papailiopoulos.
\newblock Closing the convergence gap of sgd without replacement.
\newblock \emph{arXiv preprint arXiv:2002.10400}, 2020.

\bibitem[Rakhlin et~al.(2012)Rakhlin, Shamir, and Sridharan]{rakhlin2012making}
Alexander Rakhlin, Ohad Shamir, and Karthik Sridharan.
\newblock Making gradient descent optimal for strongly convex stochastic
  optimization.
\newblock In \emph{Proceedings of the 29th International Coference on
  International Conference on Machine Learning}, pages 1571--1578. Omnipress,
  2012.

\bibitem[Recht and R{\'e}(2012)]{recht2012beneath}
B.~Recht and C.~R{\'e}.
\newblock Beneath the valley of the noncommutative arithmetic-geometric mean
  inequality: conjectures, case-studies, and consequences.
\newblock In \emph{COLT}, 2012.

\bibitem[Safran and Shamir(2020)]{safran2020good}
Itay Safran and Ohad Shamir.
\newblock How good is sgd with random shuffling?
\newblock In \emph{Conference on Learning Theory}, pages 3250--3284, 2020.

\bibitem[Shamir(2016)]{shamir2016without}
Ohad Shamir.
\newblock Without-replacement sampling for stochastic gradient methods.
\newblock In \emph{Advances in neural information processing systems}, pages
  46--54, 2016.

\bibitem[Shamir and Zhang(2013)]{shamir2013stochastic}
Ohad Shamir and Tong Zhang.
\newblock Stochastic gradient descent for non-smooth optimization: Convergence
  results and optimal averaging schemes.
\newblock In \emph{International conference on machine learning}, pages 71--79.
  PMLR, 2013.

\bibitem[Tran et~al.(2020)Tran, Nguyen, and Tran-Dinh]{tran2020shuffling}
Trang~H Tran, Lam~M Nguyen, and Quoc Tran-Dinh.
\newblock Shuffling gradient-based methods with momentum.
\newblock \emph{arXiv preprint arXiv:2011.11884}, 2020.

\bibitem[Ying et~al.(2018)Ying, Yuan, Vlaski, and Sayed]{ying2018stochastic}
Bicheng Ying, Kun Yuan, Stefan Vlaski, and Ali~H Sayed.
\newblock Stochastic learning under random reshuffling with constant
  step-sizes.
\newblock \emph{IEEE Transactions on Signal Processing}, 67\penalty0
  (2):\penalty0 474--489, 2018.

\bibitem[Yun et~al.(2021)Yun, Sra, and Jadbabaie]{yun2021can}
Chulhee Yun, Suvrit Sra, and Ali Jadbabaie.
\newblock Can single-shuffle sgd be better than reshuffling sgd and gd?
\newblock \emph{arXiv preprint arXiv:2103.07079}, 2021.

\end{thebibliography}

\appendix

\section{Proofs}\label{app:proofs}

\subsection{Proof of \thmref{thm:ss_lower_bound}}\label{subsec:ss_lower_proof}

\begin{proof}[\unskip\nopunct] 
	
	
	Assume $n$ is even (this is without loss of generality as argued in the beginning of the proof of Thm.~1 in \citet{safran2020good}). Recall the function $F$ defined in \eqref{eq:ss_construction} by
	\[
		~F(\bx)~=~\frac{1}{n}\sum_{i=1}^{n}f_i(\bx) ~=~ \frac{\lambda}{2}x_1^2 + \frac{\lambdamax}{2}x_2^2~,
	\]
	where for each $i$,
	\begin{equation*}
		~f_{i}(\bx)~=~f_i(x_1,x_2)~:=~\frac{\lambda}{2}x_1^2+\frac{\lambdamax}{2}x_2^2+\begin{cases}\frac{G}{2}x_2& i\leq \frac{n}{2}\\-\frac{G}{2}x_2& i>\frac{n}{2}\end{cases}~.
	\end{equation*}
	It is readily seen that the above functions satisfy Assumption \ref{as:lower_bounds}. Assume we initialize at 
	\[
		~\bx_0 = (x_{0,1}, x_{0,2}) = \p{\frac{G}{\lambda}~, ~0}~,
	\] which also satisfies Assumption \ref{as:lower_bounds} since $\norm{\nabla F(\bx_0)}=G$. On these functions, we have that during any 
	single epoch, we 
	perform $n$ iterations of the form
	\[
	x_{new,1}=(1-\eta\lambda)x_{old,1}~~~,~~~
	x_{new,2}=(1-\eta \lambdamax)x_{old,2}+\frac{\eta G}{2}\sigma_i~,
	\]
	where $\sigma_0,\ldots,\sigma_{n-1}$ are a random permutation of $\frac{n}{2}$ 
	$1$'s and $\frac{n}{2}$ $-1$'s. Repeatedly applying this inequality, we get 
	that after $n$ iterations, the relationship between the first and last iterates 
	in the epoch satisfy
	\[
	x_{t+1,1}~=~(1-\eta\lambda)^n x_{t,1}~~~,~~~
	x_{t+1,2}~=~(1-\eta \lambdamax)^n x_{t,2}+\frac{\eta 
		G}{2}\sum_{i=0}^{n-1}\sigma_i(1-\eta \lambdamax)^{n-i-1}~.
	\]
	Repeating this across $k$ epochs, we obtain the following relation between the initialization point and what we obtain after $k$ epochs:
	\[
	x_{k,1} = (1-\eta\lambda)^{nk}x_{0,1}~~~,~~~
	x_{k,2}~=~
	(1-\eta \lambdamax)^{nk}x_{0,2}+\frac{\eta G}{2}\cdot\frac{1-(1-\eta \lambdamax)^{nk}}{1-(1-\eta \lambdamax)^n}\sum_{i=0}^{n-1}\sigma_i(1-\eta \lambdamax)^{n-i-1}~.
	\]
	Noting that $F(\bx)=\frac{\lambda}{2}x_1^2+\frac{\lambdamax}{2}x_2^2$ and $\E[\sigma_i]=0$, we get that
	\[
	~\E[F(\bx_k)]~=~\frac{\lambda}{2}(1-\eta\lambda)^{2nk}x_{0,1}^2+\frac{\lambdamax}{2}(1-\eta \lambdamax)^{2nk}x_{0,2}^2
	+
	\frac{\eta^2 G^2 \lambdamax}{8}\left(\frac{1-(1-\eta \lambdamax)^{nk}}{1-(1-\eta \lambdamax)^n}\right)^2\beta_{n,\eta,\lambdamax} ~,
	\]
	where 
	\begin{equation}\label{eq:beta}
	~\beta_{n,\eta,\lambdamax}~:=~
	\E\left[\left(\sum_{i=0}^{n-1}\sigma_i(1-\lambdamax\eta)^{n-i-1}\right)^2\right]~=~
	\E\left[\left(\sum_{i=0}^{n-1}\sigma_i(1-\lambdamax\eta)^{i}\right)^2\right]
	\end{equation}
	(using the fact that $\sigma_0,\ldots,\sigma_{n-1}$ are exchangeable random variables). 
	According to Lemma 1 in \citet{safran2020good}, for some numerical constant $c>0$,
	\begin{equation}\label{eq:betalb}
	~\beta_{n,\eta,\lambdamax}~\geq~
	c\cdot \min\left\{1+\frac{1}{\lambdamax\eta}~,~n^3(\lambdamax\eta)^2\right\}~.
	\end{equation}
	We now perform a case analysis based on the value of $\eta$:
	\begin{itemize}[leftmargin=*]
		\item 
		If $ \eta\le\frac{1}{\lambda nk} $, then
		\[
		~\E[F(\bx_k)]~\ge~ \frac{\lambda x_{0,1}^2}{2}(1-\eta\lambda)^{2nk} ~\ge~ 
		\frac{\lambda x_{0,1}^2}{2}\left(1-\frac{1}{nk}\right)^{2nk} ~\ge~ 
		\frac{\lambda x_{0,1}^2}{2}\left(\frac{1}{4}\right)^2~=~\frac{\lambda x_{0,1}^2}{32}~.
		\]
		Substituting $x_{0,1}=G/\lambda$, the above is lower bounded by
		\[
			~\frac{G^2}{32\lambda}~.
		\]
		\item 
		If $ \eta> \frac{1}{\lambda nk}$ as well as $\eta < \frac{1}{\lambdamax n}$ (assuming this range exists, namely when $k>\lambdamax /\lambda$), then by 
		Bernoulli's inequality we have $(1-\eta \lambdamax )^n\ge1-n\eta \lambdamax >0$, as well as $(1-\eta \lambdamax )^{nk}\leq (1-\eta \lambda)^{nk}\leq (1-1/nk)^{nk}\leq \exp(-1)$, implying that
		\[
		\E[F(\bx_k)]~\ge~ \frac{\eta^2 G^2 \lambdamax }{8} \left(\frac{1-\exp(-1)}{1-(1-n\eta \lambdamax )}\right)^2 \beta_{n,\eta,\lambdamax } ~=~ \frac{\eta^2G^2 \lambdamax (1-\exp(-1))^2}{8(n\eta \lambdamax )^2}\cdot\beta_{n,\eta,\lambdamax }~.
		\]
		Plugging in \eqref{eq:betalb}, and noting that $ \eta < \frac{1}{\lambdamax n} 
		$, it is easily verified that $ \beta_{n,\eta,\lambdamax }\ge 
		c\cdot\min\{1+1/\eta \lambdamax ,n^3(\eta \lambdamax )^2\}= cn^3\eta^2 \lambdamax ^2 $. 
		This implies that the displayed equation above is at least
		\[
		~c^{\prime}\frac{\eta^2 G^2 \lambdamax }{(n\eta \lambdamax )^2}\cdot n^3 \eta ^2 \lambdamax ^2 ~=~ c^{\prime}\eta^2nG^2 \lambdamax ~,
		\]
		for some constant $ c^{\prime} $. Since $ \eta> \frac{1}{\lambda nk}$, this is at least
		\[
		~c^{\prime}\frac{nG^2 \lambdamax }{\lambda^2n^2k^2} = c^{\prime}\frac{G^2 \lambdamax }{\lambda^2nk^2}~.
		\] 
		\item 
		If $ \eta> \frac{1}{\lambda nk}$ as well as $\eta \geq \frac{1}{\lambdamax n}$, then 
		noting that $\left(\frac{1-(1-\eta \lambdamax )^{nk}}{1-(1-\eta \lambdamax )^{n}}\right)^2=\left(\sum_{i=0}^{k-1}\left((1-\eta \lambdamax )^{n}\right)^i\right)^2\geq \left((1-\eta \lambdamax )^0\right)^2~=~1$ (recall that $n$ is even), we have
		\[
		~\E[F(\bx_k)]~\geq~ \frac{\eta^2 G^2 \lambdamax }{8}\beta_{n,\eta,\lambdamax }~.
		\]
		By the assumption that $\eta\geq \frac{1}{\lambdamax n}$, we have that $ n^3(\eta \lambdamax )^2\ge 1/\eta \lambdamax $ as well as $n^3(\eta \lambdamax )^2\geq 1$. Using this and \eqref{eq:betalb}, the above is at least
		\[
		\frac{c \eta^2 G^2 \lambdamax }{8}\min\left\{1+\frac{1}{\eta \lambdamax }~,~n^3(\eta \lambdamax )^2\right\}
		~\ge~\frac{c\eta^2 G^2 \lambdamax }{16}\cdot \left(1+\frac{1}{\eta \lambdamax }\right)~=~
		\frac{c\eta G^2 }{16}\left(\eta \lambdamax +1\right).~
		\]
		Since $\eta\geq \frac{1}{\lambda nk}$, 
		this is at least $\frac{cG^2}{16\lambda nk}\left(\frac{\lambdamax }{\lambda nk}+1\right)$. Since we assume $\frac{\log(nk)L}{\lambda nk}\le1$ which entails $nk\ge \frac{\lambdamax}{\lambda}$, we can further lower bound it (without losing much) by $\frac{cG^2}{16\lambda nk}$. 
	\end{itemize}
	
	Combining the cases, we get that regardless of how we choose $\eta$, for some numerical constant $c''>0$, it holds that
	\[
	~\E[F(\bx_k)]~\geq~ c''\cdot \min\left\{\frac{G^2}{\lambda nk}~,~\frac{G^2 \lambdamax }{\lambda^2 nk^2}\right\}
	~=~ c''\cdot \frac{G^2}{\lambda nk}\cdot \min\left\{1~,~\frac{\lambdamax /\lambda}{k}\right\}~.
	\]
\end{proof}

\subsection{Proof of \thmref{thm:rr_lower_bound}}\label{subsec:rr_lower_proof}

\begin{proof}[\unskip\nopunct]
	As in the proof of \thmref{thm:ss_lower_bound}, we will assume w.l.o.g.\ that $n$ is even. Recall the function $F$ defined in \eqref{eq:rr_construction} by
	\[
	~F(\bx)~=~\frac{1}{n}\sum_{i=1}^{n}f_i(\bx)~=~\frac{\lambda}{2}x_1^2+\frac{\lambdamax}{2}x_2^2+\frac{\lambdamax}{4}x_3^2~,
	\]
	where for each $i$,
	\begin{equation*}
		~f_{i}(\bx)~=~f_i(x_1,x_2,x_3)~\coloneqq~\frac{\lambda}{2}x_1^2+\frac{\lambdamax}{2}x_2^2 + \begin{cases}\frac{G}{2}x_2 + \frac{\lambdamax}{2}x_3^2 + \frac{G}{2}x_3& i\leq \frac{n}{2}\\-\frac{G}{2}x_2 - \frac{G}{2}x_3& i>\frac{n}{2}\end{cases}~.
	\end{equation*}
	Consider the initialization point
	\[
		~\bx_0=(x_{0,1}, x_{0,2}, x_{0,3}) = \p{\frac{G}{\lambda}~,~0~,~0}~.
	\]
	Note that the above functions satisfy $\norm{\nabla f_i(\bx^*)} = G/\sqrt{2} \le G$ for all $i\in[n]$, and that $\norm{\nabla F(\bx_0)} = G$, therefore Assumption \ref{as:lower_bounds} is satisfied. Our proof will analyze the convergence of random reshuffling SGD under the assumption that $\eta$ belongs to some interval in a partition of the positive real line. For each such interval in the partition, we will take the worst lower bound (i.e.\ the largest lower bound) along each dimension, where our final lower bound will be the minimum among the bounds derived on each interval.
	
	We begin with deriving an expression for $\bx_k$, the iterate after performing $k$ epochs. In a single epoch, after $n$ iterations, the relationship between the first and last iterates in the epoch satisfy
	\begin{align}
	~x_{t+1,1} ~&=~ (1-\eta\lambda)^n \cdot x_{t,1}~,\nonumber\\
	x_{t+1,2} ~&=~ (1-\eta \lambdamax)^n\cdot x_{t,2}+\frac{\eta 
		G}{2}\sum_{i=0}^{n-1}(1-2\sigma_i)(1-\eta \lambdamax)^{n-i-1}~,\nonumber\\
	x_{t+1,3} ~&=~ \prod_{i=0}^{n-1}(1-\eta\lambdamax\sigma_i)\cdot x_{t,3} + \frac{\eta G}{2}\sum_{i=0}^{n-1}(1-2\sigma_i)\prod_{j=i+1}^{n-1}(1-\eta\lambdamax\sigma_j)~,\label{eq:ignored_squared_term}
	\end{align}
	where $\sigma_0,\ldots,\sigma_{n-1}$ are a random permutation of $\frac{n}{2}$ $1$'s and $\frac{n}{2}$ $0$'s.
	Squaring and taking expectation on both sides, using the fact that $\E[1-2\sigma_i]=0$ and that $\bx_t$ is independent of the permutation sampled at epoch $t+1$, we have
	\begin{align}
	~\E\pcc{x_{t+1,1}^2} ~&=~ (1-\eta\lambda)^{2n}x_{t,1}^2 ~,\nonumber\\ ~\E\pcc{x_{t+1,2}^2} ~&=~ (1-\eta \lambdamax)^{2n} x_{t,2}^2 +\frac{\eta^2G^2}{4}\beta_{n,\eta,\lambdamax}~,\nonumber\\
	~\E\pcc{x_{t+1,3}^2} ~&\ge~ \E\pcc{\prod_{i=0}^{n-1}(1-\eta\lambdamax\sigma_i)^2}\E\pcc{x_{t,3}^2} + \eta G\E\pcc{\sum_{i=0}^{n-1}(1-2\sigma_i)\prod_{j=i+1}^{n-1}(1-\eta\lambdamax\sigma_j)}\E\pcc{x_{t,3}}~\label{eq:third_dim},
	\end{align}
	where $\beta_{n,\eta,\lambdamax}$ is defined in \eqref{eq:beta}. Unfolding the recursions above for the first two dimensions, we get that after $k$ epochs
	\begin{align*}
	\E\pcc{x_{k,1}^2} ~&=~ \p{1-\eta\lambda}^{2nk}x_{0,1}^2 ~,\\ \E\pcc{x_{k,2}^2} ~&=~ (1-\eta \lambdamax)^{2nk} x_{0,2}^2 +\frac{\eta^2 
		G^2}{4}\cdot\frac{1-(1-\eta \lambdamax )^{2nk}}{1-(1-\eta \lambdamax )^{2n}}\beta_{n,\eta,\lambdamax}~.
	\end{align*}
	Recalling that $F(\bx)=\frac{\lambda}{2}x_1^2+\frac{\lambdamax}{2}x_2^2 + \frac{\lambdamax}{4}x_3^2\ge \frac{\lambda}{2}x_1^2+\frac{\lambdamax}{2}x_2^2$ and combining with the above, we get
	\begin{equation}\label{eq:two_dims}
	\E\pcc{F(\bx_k)} ~\ge~ \frac{\lambda}{2}\p{1-\eta\lambda}^{2nk}x_{0,1}^2 ~+~ \frac{\lambdamax}{2}(1-\eta \lambdamax)^{2nk} x_{0,2}^2 ~+~ \frac{\lambdamax\eta^2 
		G^2}{8}\cdot\frac{1-(1-\eta \lambdamax )^{2nk}}{1-(1-\eta \lambdamax )^{2n}}\beta_{n,\eta,\lambdamax}~.
	\end{equation}
	We now move to a case analysis based on the value of $\eta$:
	\begin{itemize}[leftmargin=*]
		\item
		If $ \eta\le\frac{1}{\lambda nk} $, we focus on the first dimension. Recall that $x_{0,1}=\frac{G}{\lambda}$ and compute
		\begin{align*}
			\E[F(\bx_k)] ~&\ge~ \frac{\lambda x_{0,1}^2}{2}(1-\eta\lambda)^{2nk} 	~\ge~ 
			\frac{\lambda x_{0,1}^2}{2}\left(1-\frac{1}{nk}\right)^{2nk}\\ &\ge~
			\frac{\lambda x_{0,1}^2}{2}\left(\frac{1}{4}\right)^2~=~\frac{\lambda x_{0,1}^2}{32}~=~\frac{G^2}{32\lambda} ~\ge~ \frac{G^2}{32\lambda nk}~.
		\end{align*}
		\item 
		If $ \eta> \frac{1}{\lambda nk}$ as well as $\eta < \frac{1}{\lambdamax n}$ (assuming this range exists, namely when $k>\lambdamax/\lambda$), we focus on the second and third dimensions, each resulting in a different dependence on $n,k$. Starting with the second dimension, we have by Bernoulli's inequality that $(1-\eta\lambdamax)^{2n}\ge1-2n\eta \lambdamax>0$, as well as $(1-\eta \lambdamax)^{2nk}\leq (1-\eta \lambda)^{2nk}\leq (1-1/nk)^{2nk}\leq \exp(-2)$, implying that
		\[
		\E[F(\bx_k)]~\ge~ \frac{\eta^2 G^2 \lambdamax}{8}  \cdot \frac{1-\exp(-2)}{1-(1-2n\eta\lambdamax)} \beta_{n,\eta,\lambdamax} ~=~ \frac{\eta^2G^2 \lambdamax(1-\exp(-2))}{16n\eta \lambdamax}\cdot\beta_{n,\eta,\lambdamax}~.
		\]
		Plugging in \eqref{eq:betalb}, and noting that $ \eta < \frac{1}{\lambdamax n}$, it is easily verified that $\beta_{n,\eta,\lambdamax}\ge c\cdot\min\{1+1/\eta \lambdamax,n^3(\eta \lambdamax)^2\}= cn^3\eta^2 \lambdamax^2 $. 
		This implies that the displayed equation above is at least
		\[
		~c^{\prime}\frac{\eta^2 G^2 \lambdamax}{n\eta \lambdamax}\cdot n^3 \eta ^2 \lambdamax^2 ~=~ c^{\prime}\eta^3n^2G^2 \lambdamax^2~,
		\]
		for some constant $ c^{\prime} $. Since $ \eta> \frac{1}{\lambda nk}$, this is at least
		\[
		~c^{\prime}\frac{n^2G^2 \lambdamax^2}{\lambda^3n^3k^3} = c^{\prime}\frac{G^2 \lambdamax^2}{\lambda^3nk^3}~.
		\]
		
		Moving to the third dimension, we assume w.l.o.g.\ that $k\ge n$ (since otherwise the previous bound will be larger than the one to follow). Using Propositions \ref{prop:stochastic_terms} and \ref{prop:xt_bound} and \lemref{lem:deterministic_prod} on \eqref{eq:third_dim} (recall that $x_{0,3}=0$ by our assumption), we have
		\begin{align*}
		~\E\pcc{x_{t+1,3}^2} ~&\ge~ \p{1-\frac{\eta\lambdamax n}{2}}^2\E\pcc{x_{t,3}^2} + \frac{\eta^3G^2\lambdamax n}{128}\p{1-\p{1-\frac{\eta\lambdamax n}{2}}^t}~\\
		&\ge~ \p{1-\eta\lambdamax n}\E\pcc{x_{t,3}^2} + \frac{\eta^3G^2\lambdamax n}{128}\p{1-\p{1-\frac{\eta\lambdamax n}{2}}^t}~.
		\end{align*}
		Unfolding the recursion above yields
		\begin{align}
		~\E\pcc{x_{k,3}^2} ~&\ge~ \p{1-\eta\lambdamax n}^kx_{0,3}^2 + \frac{\eta^3G^2\lambdamax n}{128} \sum_{t=0}^{k-1}\p{1-\eta\lambdamax n}^{k-t-1} \p{1-\p{1-\frac{\eta\lambdamax n}{2}}^t}  ~\nonumber\\
		&\ge~ \frac{\eta^3G^2\lambdamax n}{128} \sum_{t=\floor{k/2}}^{k-1}\p{1-\eta\lambdamax n}^{k-t-1} \p{1-\p{1-\frac{\eta\lambdamax n}{2}}^t}  ~\nonumber\\
		&\ge~ \frac{\eta^3G^2\lambdamax n}{128} \sum_{t=\floor{k/2}}^{k-1}\p{1-\eta\lambdamax n}^{k-t-1} \p{1-\p{1-\frac{\eta\lambdamax n}{2}}^{\floor{k/2}}}~,\label{eq:opposite_geo_sums}
		\end{align}
		where we note that the above sum is not empty since we assume $k\ge n\ge2$. We now have
		\begin{align*}
		~\p{1-\frac{\eta\lambdamax n}{2}}^{\floor{k/2}} ~&=~ \p{1-\frac{\eta\lambdamax n}{2\cdot\floor{k/2}}\cdot\floor{k/2}}^{\floor{k/2}} ~\le~\exp\p{-\frac{\eta\lambdamax n \floor{k/2}}{2}}\\ 
		&\le~ \exp\p{-\frac{\lambdamax\floor{k/2}}{2\lambda k}} ~\le~ \exp\p{-\frac{\lambdamax}{8\lambda}} ~\le~ \exp\p{-\frac18} ~\le~ 0.9~,
		\end{align*}
		where the first inequality is due to $(1-x/y)^y\le\exp(-x)$ for all $x,y>0$ and the second inequality is by the assumption $\eta\ge\frac{1}{\lambda nk}$. The above entails
		\begin{equation}
		~1-\p{1-\frac{\eta\lambdamax n}{2}}^{\floor{k/2}} ~\ge~ 0.1~,\label{eq:tenth_lower_bound}
		\end{equation}
		which by plugging into \eqref{eq:opposite_geo_sums} yields
		\begin{align*}
		~\E\pcc{x_{k,3}^2} ~&\ge~ \frac{\eta^3G^2\lambdamax n}{1280} \sum_{t=\floor{k/2}}^{k-1}\p{1-\eta\lambdamax n}^{k-t-1}\\
		&=~ \frac{\eta^3G^2\lambdamax n}{1280} \cdot \frac{1-(1-\eta\lambdamax n)^{k-\floor{k/2}}}{\eta\lambdamax n}\\
		&\ge~ \frac{\eta^2G^2}{1280} \cdot \p{1-\p{1-\frac{\eta\lambdamax n}{2}}^{\floor{k/2}}} ~\ge~ \frac{\eta^2G^2}{12800}~,
		\end{align*}
		where the last inequality is a second application of \eqref{eq:tenth_lower_bound}. We now conclude with the assumption $\eta\ge\frac{1}{\lambda nk}$ to get
		\[
		~\E\pcc{F(\bx)} ~\ge~ \frac{\lambdamax}{4}\E\pcc{x_{k,3}^2} ~\ge~ c\frac{G^2\lambdamax}{\lambda^2n^2k^2} ~,
		\]
		where $c=\frac{1}{51200}$.
		\item 
		If $ \eta> \frac{1}{\lambda nk}$ as well as $\eta \geq \frac{1}{\lambdamax n}$, we focus on the remainder term of the second dimension in \eqref{eq:two_dims}.
		Noting that $\frac{1-(1-\eta \lambdamax)^{2nk}}{1-(1-\eta \lambdamax)^{2n}}=\sum_{i=0}^{k-1}\left((1-\eta \lambdamax)^{2n}\right)^i\geq (1-\eta \lambdamax)^0~=~1$, we have
		\[
		\E[F(\bx_k)]~\geq~ \frac{\eta^2 G^2 \lambdamax}{8}\beta_{n,\eta,\lambdamax}~.
		\]
		By the assumption that $\eta\geq \frac{1}{\lambdamax n}$, we have that $ n^3(\eta \lambdamax)^2\ge 1/\eta \lambdamax$ as well as $n^3(\eta \lambdamax)^2\geq 1$. Using this and \eqref{eq:betalb}, the above is at least
		\[
		\frac{c \eta^2 G^2 \lambdamax }{8}\min\left\{1+\frac{1}{\eta \lambdamax }~,~n^3(\eta \lambdamax )^2\right\}
		~\ge~\frac{c\eta^2 G^2 \lambdamax }{16}\cdot \left(1+\frac{1}{\eta \lambdamax }\right)~=~
		\frac{c\eta G^2 }{16}\left(\eta \lambdamax +1\right).~
		\]
		Since $\eta\geq \frac{1}{\lambda nk}$, 
		this is at least $\frac{cG^2}{16\lambda nk}\left(\frac{\lambdamax }{\lambda nk}+1\right)$. Since we assume $\frac{\log(nk)L}{\lambda nk}\le1$ which entails $nk\ge \frac{\lambdamax}{\lambda}$, we can further lower bound it (without losing much) by $\frac{cG^2}{16\lambda nk}$. 
	\end{itemize}
\end{proof}

\subsection{Proof of \thmref{thm:ss_upper_bound_commuting}}\label{subsec:ss_upper_proof}

Before we prove the theorem, we will first state the following result which handles the one-dimensional case.
\begin{theorem}\label{thm:ss_upper_bound}
	Suppose $F(x) \coloneqq \frac{\bar{a}}{2}x^2$ and $f_i(x)=\frac{a_i}{2}x^2-b_ix$, where $\bar{a}=\frac1n\sum_{i=1}^{n}a_i$ satisfy Assumption \ref{as:upper_bounds}, and fix the step size $\eta=\frac{\log(nk)}{\lambda nk}$. Then for any $\delta\in (0,1)$, with probability at least $1-\delta$ over the choice of the permutation $\sigma$, single shuffling SGD satisfies
	\[
	~F(x_k)~\leq~c\cdot \log^2\left(\frac{8n}{\delta}\right)\cdot \log^2(nk)\cdot \frac{G^2}{\lambda nk}\cdot \min\left\{1~,~\frac{\bar{a}/\lambda}{k}\right\}~,
	\]
	where $c>0$ is a universal constant. Moreover, this also entails
	\[
	~\E\pcc{F(x_k)}~\leq~\tilde{\Ocal}\p{\frac{G^2}{\lambda n k}\cdot\min\set{1 ~,~ \frac{\bar{a}/\lambda}{k}}}~,
	\]
	where the $\tilde{\Ocal}$ hides a universal constant and factors logarithmic in $n,k,\bar{a},\lambda$ and their inverses.
\end{theorem}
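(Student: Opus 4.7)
The plan is to derive an exact closed-form expression for $x_k$ and bound each piece separately. Under single shuffling the per-epoch update is linear: writing one epoch as $x_{\text{new}} = \alpha\, x_{\text{old}} + \beta$, with $\alpha := \prod_{i=0}^{n-1}(1-\eta a_{\sigma(i)}) = \prod_{i=1}^{n}(1-\eta a_i)$ (permutation-invariant) and $\beta := \eta\sum_{i=0}^{n-1} b_{\sigma(i)} \prod_{j=i+1}^{n-1}(1-\eta a_{\sigma(j)})$ (which does depend on the fixed permutation $\sigma$), iterating for $k$ epochs gives
\[
x_k \;=\; \alpha^k x_0 \;+\; \beta\cdot\frac{1-\alpha^k}{1-\alpha}.
\]
Since $F(x_k)=\tfrac{\bar a}{2}x_k^2$, the task reduces to controlling the ``decay'' term $\alpha^k x_0$ and the ``noise'' term $\beta(1-\alpha^k)/(1-\alpha)$.

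For the decay term I will apply Maclaurin's inequality (or AM--GM) to obtain $\alpha\le (1-\eta\bar a)^n\le \exp(-\eta\bar a n)$. Plugging in $\eta=\log(nk)/(\lambda n k)$ and using $\bar a\ge\lambda$ yields $\alpha^k \le (nk)^{-\bar a/\lambda}\le 1/(nk)$. Together with the bound $|x_0|\le G/\bar a$ implied by $|\nabla F(x_0)|\le G$, the resulting contribution to $F(x_k)$ is at most $G^2/(\bar a (nk)^2)$, which is dominated by the target bound. For the noise term, I will exploit that $\bar b=0$ (since the minimizer of $F$ is $x^*=0$ after centering), rewrite $\beta = \eta\sum_i b_{\sigma(i)}(W_i(\sigma)-C)$ for a centering constant $C$, and invoke \propref{prop:random_variable_bounds} to get a high-probability tail bound on the permutation-linear statistic. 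The typical magnitude of the centered weights is controlled by $\eta\bar a n$ (again by Maclaurin/AM--GM), leading, after substituting the chosen step size and dividing by $|1-\alpha|$, to $|x_k|\lesssim \log(n/\delta)\log(nk)\cdot G/\sqrt{\bar a \lambda nk}\cdot\min\{1,\sqrt{(\bar a/\lambda)/k}\}$ with probability at least $1-\delta$.

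A short case split on whether $k\le \bar a/\lambda$ or $k>\bar a/\lambda$ then produces the claimed $\min\{1,(\bar a/\lambda)/k\}$ factor: in the former regime the trivial bound $|1-\alpha^k|/|1-\alpha|\le k$ (or an equivalent geometric-series estimate) suffices to match the with-replacement rate, while in the latter regime the fully decayed quantity $1/(1-\alpha)\sim 1/(\eta\bar a n)$ is the dominant one and produces the refined rate. Finally, the in-expectation statement follows by integrating the high-probability bound: taking $\delta=1/(nk)$ handles the typical event, while the remaining $\delta$-probability event contributes at most a polynomially vanishing term after noting that $F(x_k)$ admits the deterministic upper bound $O(\bar a (|x_0|+G/(\lambda\bar a))^2)$ from the non-expansivity of the SGD map at this step size.

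The main obstacle is the concentration bound on $\beta$. A naive application of a permutation-linear-statistic tail inequality does not suffice, because the weights $\prod_{j>i}(1-\eta a_{\sigma(j)})$ are themselves $\sigma$-dependent; handling this requires either isolating the ``constant coefficient'' part via a Maclaurin-type inequality and treating the residual as a higher-order term, or a refined tail inequality (the role of \propref{prop:random_variable_bounds}) that tolerates mildly permutation-dependent weights. Getting the logarithmic factors right in this step is what allows the upper bound to precisely match the lower bound of \thmref{thm:ss_lower_bound}, and is the technically most delicate part of the argument.
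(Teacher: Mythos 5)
Your proposal takes essentially the same route as the paper's proof: the closed form $x_k = S^k x_0 + \eta\,\frac{1-S^k}{1-S}\,X_\sigma$, the AM--GM / Maclaurin bound giving $S\le\exp(-\eta\bar a n)$ and hence $S^{2k}\lesssim \lambda/(\bar a (nk)^2)$, the high-probability concentration bound for the permutation statistic $X_\sigma$ (this is exactly the role of Lemma~\ref{lem:keyup} and Proposition~\ref{prop:random_variable_bounds}), and the case split on $\eta\bar a n\lessgtr 1/2$, i.e., $k$ versus roughly $\bar a/\lambda$.

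One detail in your in-expectation step needs fixing. You propose to take $\delta=1/(nk)$ and control the $\delta$-probability bad event using the deterministic worst-case value of $F(x_k)$. But that worst-case value is of order $\bar a G^2/\lambda^2$ (from $|x_k|\lesssim G/\lambda$ deterministically under $\eta L\le 1$), and since $\bar a/\lambda$ may be as large as order $nk/\log(nk)$ under Assumption~\ref{as:upper_bounds}, the bad-event contribution $\delta\cdot \bar a G^2/\lambda^2$ is then of order $G^2/\lambda$ --- not dominated by the target $\tilde{\Ocal}(G^2/(\lambda nk))$. Taking $\delta$ polynomially smaller (say $1/(nk)^2$), or the regime-dependent $\delta$ used in the proof of Proposition~\ref{prop:random_variable_bounds} together with the deterministic bound on $X_\sigma$ (rather than on $x_k$), repairs this at the cost of only an extra log factor. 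Otherwise the plan is correct and matches the paper's argument.
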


\begin{proof}[\unskip\nopunct]
	Before we prove the above theorem, we shall first explain why it implies \thmref{thm:ss_upper_bound_commuting}. Since the matrices $A_1,\ldots,A_n$ commute, then they are simultaneously diagonalizable (e.g., \citet[Thm.~1.3.21]{horn2013matrix}). Thus, there exists a matrix $P$ such that $P^{-1}A_iP$ is diagonal for all $i\in[n]$. Moreover, since $A_i$ are all symmetric, we may choose such $P$ which is also orthogonal. By \appref{app:conj}, we may transform our problem to another quadratic formulation having the same sub-optimality rate and where Assumption \ref{as:upper_bounds} is preserved. Following the above reasoning, we may assume w.l.o.g.\ that $A_i$ is diagonal for all $i\in[n]$.
	
	For some $j\in[d]$ and $t\in[k]$, let $a_j$ and $x_{j,t}$ denote the $j$-th diagonal value of $A$ and $j$-th coordinate of $\bx_t$ (the iterate after the $t$-th epoch), respectively. We now explain why we may assume that $\bb=\mathbf{0}$. As assumed in \citet{safran2020good}, mapping $f_i(\bx)\mapsto \tilde{f}_i(\bx-A^{-1}\bb)$ for all $i\in[n]$ simply translates our problem so that $\bx^*=\mathbf{0}$. By mapping $\bx_0$ accordingly, we have that Assumption \ref{as:upper_bounds} is preserved, thus we may assume $\bb=\mathbf{0}$ w.l.o.g.\ which entails $\norm{A_i\bx^*-\bb_i}=\norm{\bb_i}\le G$ for all $i\in[n]$.
	
	Since we have now reduced our optimization problem to the form $\tilde{f}_i(\bx)=\frac12\bx^{\top}A_i\bx - \bb_i^{\top}\bx$ for diagonal $A_i$, we have that the partial derivatives w.r.t.\ each coordinate are independent of one another, thus we may apply \thmref{thm:ss_upper_bound} to each coordinate separately.
	Letting $F_j(x)=\frac12a_{j}x^2$, we compute
	\[
		~\E\pcc{F(\bx_k)} ~=~ \E\pcc{\frac12\bx_k^{\top}A\bx_k} ~=~ \sum_{j=1}^{d}\E\pcc{F_j(x_{j,k})}~.
	\]
	Recall that $\bx_0=(x_{1,0},\ldots,x_{d,0})$. The condition $\norm{\nabla F(\bx_0)}\le G$ implies that $\sum_{j=1}^{d}a_{j}^2x_{j,0}^2\le G^2$, and since $\lambda\le a_{j}\le\lambdamax$ for all $j\in[d]$, we get $\norm{\bx_0}\le\frac{G}{\lambda}$, which in particular implies $x_{j,0}\le \frac{G}{\lambda}$ for all $j\in[d]$. We now use \thmref{thm:ss_upper_bound} applied to each dimension separately and conclude that
	\[
		~\E\pcc{F(\bx_k)} ~=~ \sum_{j=1}^d \tilde{\Ocal}\p{\frac{G^2}{\lambda n k}\cdot\min\set{1 ~,~ \frac{a_{j}/\lambda}{k}}} ~\le~ \tilde{\Ocal}\p{\frac{G^2}{\lambda n k}\cdot\min\set{1 ~,~ \frac{\lambdamax/\lambda}{k}}}~,
	\]
	whereby the $\tilde{\Ocal}$ notation hides a linear term in $d$ which absorbs the sum over the coordinates.
\end{proof}

\begin{proof}[Proof of \thmref{thm:ss_upper_bound}]
	The beginning of the proof is based on deriving a closed-form expression for the iterate at the $k$-th epoch, $x_k$. To this end, we shall use the same derivation as in \citet{safran2020good}, given here for completeness. First, for a selected permutation $ \sigma_i:[n]\to[n] $ we have that the gradient update at iteration $ j $ in epoch $ i $ is given by
	\[
		~x_{new} ~=~ \p{1-\eta a_{\sigma_i(j)}}x_{old} + \eta b_{\sigma_i(j)}~.
	\]
	Repeatedly applying the above relation, we have that at the end of each epoch the relation between the iterates $ x_t $ and $ x_{t+1} $ is given by
	\[
		~x_{t+1} ~=~ \prod_{j=1}^{n}\p{1-\eta a_{\sigma_{t+1}(j)}}x_t + \eta\sum_{j=1}^{n}b_{\sigma_{t+1}(j)}\prod_{i=j+1}^{n}\p{1-\eta a_{\sigma_{t+1}(i)}}~.
	\]
	Letting
	\[
		~S ~\coloneqq~ \prod_{j=1}^{n}\p{1-\eta a_{\sigma_i(j)}} = \prod_{j=1}^{n}\p{1-\eta a_j}~
	\]
	and 
	\[
		~X_{\sigma_t}~\coloneqq~\sum_{j=1}^{n}b_{\sigma_{t}(j)}\prod_{i=j+1}^{n}\p{1-\eta a_{\sigma_{t}(i)}}~,
	\]
	this can be rewritten equivalently as
	\begin{equation}\label{eq:consecutive_iterates}
		~x_{t+1} = Sx_t +\eta X_{\sigma_{t+1}}~.
	\end{equation}
	Iteratively applying the above, we have after $ k $ epochs that
	\begin{equation}\label{eq:t_iterate}
	~x_k~=~ S^k x_0+\eta\sum_{i=1}^{k}S^{i-1}X_{\sigma}~=~ S^k x_0+\eta\cdot \frac{1-S^{k}}{1-S}X_{\sigma}~.
	\end{equation}
	Having derived a closed-form expression for $x_k$, we now turn to make a more careful analysis of the upper bound, improving upon the result of \citet{safran2020good}. Note that by our assumptions, $1\geq 1-\eta a_j\geq 1-\eta L\geq 0$ for all $j$, hence $S\in [0,1]$.	As a result, using the fact that $(r+s)^2\leq2(r^2+s^2)$, we have
	\begin{align}
	~F(x_k)~&=~\frac{\bar{a}}{2}x_k^2~\leq~\bar{a}\left(S^{2k}x_0^2+\eta^2 \left(\frac{1-S^k}{1-S}\right)^2 X_{\sigma}^2\right)~\leq~
	\bar{a}\left(S^{2k}\cdot \frac{G^2}{\lambda^2}+\eta^2 \left(\frac{1-S^k}{1-S}\right)^2 X_{\sigma}^2\right)\notag\\
	&\leq~
	S^{2k}\cdot \frac{\bar{a}G^2}{\lambda^2}+\frac{\bar{a}\eta^2}{(1-S)^2}\cdot X_{\sigma}^2
	~,\label{eq:objective}
	\end{align}
	whereby $x_0^2\le \frac{G^2}{\lambda^2}$ is due to Assumption \ref{as:upper_bounds}, which entails $\abs{\bar{a}x_0}\le G$ and thus $\abs{x_0}\le\frac{G}{\bar{a}}\le\frac{G}{\lambda}$. We now have
	\begin{align}
	~S^{2k}~&=~\prod_{j=1}^{n}(1-\eta a_j)^{2k}~\leq~ (1-\eta\bar{a})^{2nk}~=~
	\left(1-\frac{\bar{a}\log(nk)}{\lambda nk}\right)^{2nk}\nonumber\\
	&\leq~ \exp\p{\frac{-2\bar{a}\log(nk)}{\lambda}}~=~ \frac{1}{(nk)^{2\bar{a}/\lambda}} ~\le~ \frac{\lambda}{\bar{a}(nk)^2} ~\label{eq:S2k_bound},
	\end{align}
	where the first inequality is by the AM-GM inequality applied to $1-\eta a_1,\ldots,1-\eta a_n>0$, and the last inequality is due to $(nk)^2\ge4$ and the fact that $x^y\le x/y$ for all $x\in[0,0.25]$ and $y\ge1$.\footnote{To see this, we first have that the inequality is trivial when $y=1$. Assuming $y>1$, $x/y-x^y$ intersects the $x$ axis iff $x=0$ or $x=y^{1/(1-y)}$ which is at least $\exp(-1)\ge0.25$ for $y>1$, and thus we can verify that $x^y\le x/y$ for all $x\in[0,\exp(-1)]$ by establishing that $x/y-x^y$ is concave on $(0,\exp(-1))$.} Moreover, 
	\begin{equation}\label{eq:S_bound}
	~S~=~ \prod_{j=1}^{n}(1-\eta a_j)~=~\exp\left(\sum_{j=1}^{n}\log(1-\eta a_j)\right)~\leq~ \exp\left(-\eta\sum_{j=1}^{n}a_j\right)~=~\exp(-\eta \bar{a} n)~.
	\end{equation}
	Plugging the two displayed equations above into \eqref{eq:objective}, we get that
	\begin{equation}\label{eq:objective2}
	~F(x_k)~\leq~ \frac{G^2}{\lambda (nk)^2}+\frac{\bar{a}\eta^2}{(1-\exp(-\eta\bar{a}n))^2}\cdot X_{\sigma}^2~.
	\end{equation}
	To continue, we will use the following key technical lemma, which we shall use to upper bound $X_{\sigma}^2$ with high probability (using $\alpha_i:=\eta a_i$ and $\beta_i=b_i/G$ for all $i$):
	\begin{lemma}\label{lem:keyup}
		Let $\alpha_1,\beta_1,\ldots,\alpha_n,\beta_n$ be scalars such that for all $i$, $\alpha_i \in [0,1]$, $|\beta_i|\leq 1$ and $\sum_{i=1}^{n}\beta_i=0$. Then for any $\delta\in (0,1)$, with probability at least $1-\delta$, we have
		\[
		\left(\sum_{j=1}^{n}\beta_{\sigma(j)}\prod_{i=j+1}^{n}(1-\alpha_{\sigma(i)})\right)^2~\leq~ c\cdot\log^2\left(\frac{8n}{\delta}\right)\cdot \min\left\{\frac{1}{\bar{\alpha}}~,~n^3\bar{\alpha}^2\right\}
		\]
		where $\bar{\alpha}=\frac{1}{n}\sum_{i=1}^{n}\alpha_i$ and $c>0$ is a universal constant. 
	\end{lemma}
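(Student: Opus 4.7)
The plan is to rewrite $X_\sigma := \sum_{j=1}^n \beta_{\sigma(j)} w_j$, where $w_j := \prod_{i=j+1}^n(1-\alpha_{\sigma(i)})$, via summation by parts, and then to apply concentration of partial sums under sampling without replacement to control both $|T_j|:=|\sum_{i\le j}\beta_{\sigma(i)}|$ and the geometric decay of $w_j$ away from $j=n$. Setting $T_j := \sum_{i\le j}\beta_{\sigma(i)}$, so that $T_0=T_n=0$ (the latter by $\sum_i\beta_i=0$), and using the identity $w_j - w_{j+1} = -\alpha_{\sigma(j+1)}w_{j+1}$, Abel summation yields
\[
X_\sigma \;=\; -\sum_{j=1}^{n-1}T_j\,\alpha_{\sigma(j+1)}\,w_{j+1}.
\]

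Next, I will invoke two concentration estimates, each holding with probability at least $1-\delta/2$. First, Serfling's inequality for sampling without replacement, applied to the mean-zero and bounded $\beta_{\sigma(i)}$, together with a union bound over $j$, gives $|T_j|^2 \le c\min(j,n-j)\log(8n/\delta)$ uniformly in $j$. Second, applying a multiplicative Chernoff bound (which extends to sampling without replacement via Hoeffding's reduction to i.i.d.\ sampling) to the $\alpha_{\sigma(i)}$'s gives $\sum_{i=j+1}^n \alpha_{\sigma(i)} \ge (n-j)\bar\alpha/2$ for all $j$ with $n-j \gtrsim \log(8n/\delta)/\bar\alpha$. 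Combined with $w_j \le \exp(-\sum_{i>j}\alpha_{\sigma(i)})$, this produces the geometric decay $w_j \le e^{-(n-j)\bar\alpha/2}$ in that range, and in particular $\sum_{j=1}^n w_j \le c'\log(8n/\delta)/\bar\alpha$. Both events then hold jointly with probability $\ge 1-\delta$.

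On this joint event I derive the two pieces of the $\min$ separately. For the $n^3\bar\alpha^2$ bound, I use $\max_j |T_j|\le \sqrt{cn\log(8n/\delta)}$ and the telescoping identity $\sum_j \alpha_{\sigma(j+1)}w_{j+1} = 1-w_1 \le \sum_i\alpha_i = n\bar\alpha$ (Weierstrass), giving $|X_\sigma| \le \sqrt{cn\log(8n/\delta)}\cdot n\bar\alpha$ and hence $X_\sigma^2 \lesssim n^3\bar\alpha^2\log(8n/\delta)$. For the $1/\bar\alpha$ bound, Cauchy--Schwarz applied to $|X_\sigma|\le \sum_j |T_j|(w_{j+1}-w_j)$ gives $X_\sigma^2 \le (1-w_1)\sum_j(w_{j+1}-w_j)T_j^2 \le \sum_j(w_{j+1}-w_j)T_j^2$. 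Substituting the crude bound $T_j^2 \le c(n-j)\log(8n/\delta)$ and telescoping $\sum_{j=1}^{n-1}(n-j)(w_{j+1}-w_j) = \sum_{j=2}^n w_j - (n-1)w_1 \le \sum_j w_j$, I obtain $X_\sigma^2 \lesssim \log(8n/\delta)\sum_j w_j \lesssim \log^2(8n/\delta)/\bar\alpha$. Taking the minimum of the two bounds concludes the proof.

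The main obstacle is aligning the two concentration statements: $|T_j|$ peaks in the middle of $[1,n]$, while the mass of $w_{j+1}-w_j$ sits in a boundary layer of width $\log(n/\delta)/\bar\alpha$ near $j=n$; the crude bound $\min(j,n-j)\le n-j$ precisely matches these scales and lets the weighted sum collapse onto $\sum_j w_j$, which is exactly the quantity controlled by the multiplicative Chernoff step. Getting the $\log$ exponent right (namely $\log^2$ and not $\log^3$) also requires the \emph{multiplicative} Chernoff estimate for the $\alpha$-partial sums rather than an additive Serfling-type bound, since only the former delivers a boundary-layer width of $\log/\bar\alpha$ instead of the weaker $\log/\bar\alpha^2$.
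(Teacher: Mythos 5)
Your proof is correct, and it takes a genuinely different route from the paper on the harder $1/\bar\alpha$ branch. Both you and the paper derive the $n^3\bar\alpha^2$ bound identically: summation by parts, a uniform Hoeffding--Serfling control on the prefix sums $T_j$, and the crude bound $\sum_j\alpha_{\sigma(j+1)}\leq n\bar\alpha$. For the $1/\bar\alpha$ bound, however, the paper truncates the sum to the last $r=\lceil 6\log(n/\delta)/\bar\alpha\rceil$ terms (showing the remaining head is $O(1/n^2)$ via Bernstein for sampling without replacement), then applies Azuma's inequality to the martingale-difference increments $\beta_{\sigma(j)}w_j-\E[\beta_{\sigma(j)}w_j\mid \sigma(j+1),\ldots,\sigma(n)]$ over those last $r$ indices, and uses Hoeffding--Serfling once more to control the conditional means. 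You instead stay entirely within the Abel-summed expression, apply Cauchy--Schwarz against the nonnegative measure $w_{j+1}-w_j=\alpha_{\sigma(j+1)}w_{j+1}$, use the $(n-j)$-side of the two-sided Hoeffding--Serfling bound $T_j^2\lesssim \min(j,n-j)\log(n/\delta)$, and telescope $\sum_j(n-j)(w_{j+1}-w_j)\leq \sum_j w_j$. Both arguments then hinge on the same key multiplicative concentration fact — the tail products decay with a boundary layer of width $\Theta(\log(n/\delta)/\bar\alpha)$ — but you use it to bound the aggregate $\sum_j w_j$ rather than to truncate the sum. Your approach has the advantage of avoiding Azuma entirely and producing a more deterministic derivation once the two concentration events on $T_j$ and on $\sum_{i>j}\alpha_{\sigma(i)}$ are fixed; its modest cost is needing the sharper $\min(j,n-j)$ form of Hoeffding--Serfling (the $(n-j)$ side, which holds since $T_j=-\sum_{i>j}\beta_{\sigma(i)}$) rather than just the $j$ side the paper uses for the prefix-sum control. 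The log exponents and the $\min$ come out the same, so the lemma's claim follows either way.
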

	The proof appears in \subsecref{subsec:lemkeyupproof}. We note that we did not try to optimize the log factor.
	
	\begin{remark}
		This upper bound complements Lemma 1 from \citet{safran2020good}, which analyzed the same key quantity in the special case where $\beta_i\in \{-1,+1\}$ and $\alpha_i=\bar{\alpha}$ are the same for all $i$, and showed (when $\bar{\alpha}\in [0,1]$) a \emph{lower bound} of $c'\cdot \min\left\{\frac{1}{\bar{\alpha}}~,~n^3\bar{\alpha}^2\right\}$ for some universal constant $c'>0$.\footnote{In \citet{safran2020good}, the exact lower bound is $c'\min\set{1+\frac{1}{\bar{\alpha}}~,~ n^3\bar{\alpha}^2}$, which is equivalent to $c''\min\left\{\frac{1}{\bar{\alpha}}~,~n^3\bar{\alpha}^2\right\}$ for some constant $c''>0$} This implies that our upper bound is tight up to constants and logarithmic factors.
	\end{remark}
	
	We now consider two cases, depending on the value of $\eta\bar{a}n$: 
	
	\begin{itemize}[leftmargin=*]
		\item \textbf{Case 1: $\eta\bar{a}n\leq \frac{1}{2}$.} By \lemref{lem:keyup}, with probability at least $1-\delta$,
		\[
		X_{\sigma}^2~\leq~ c\cdot \log^2(8n/\delta)\cdot G^2 n^3 (\eta \bar{a})^2~.
		\]
		In addition,
		\[
		\exp(-\eta \bar{a} n)~\leq~1-\frac{1}{2}\eta \bar{a} n~,
		\]
		due to the assumption $\eta\bar{a}n\leq \frac{1}{2}$ and the fact that $\exp(-z)\leq 1-\frac{1}{2}z$ for all $z\in [0,1/2]$. Plugging the two displayed equations above back into \eqref{eq:objective2}, we get that with probability at least $1-\delta$,
		\begin{align*}
		F(x_k)~&\leq~ \frac{G^2}{\lambda nk^2}+\frac{\bar{a}\eta^2}{(\eta \bar{a} n/2)^2}\cdot c\cdot \log^2(8n/\delta)\cdot G^2 n^3(\eta \bar{a})^2\\
		&=~ \frac{G^2}{\lambda (nk)^2}+4c\cdot\log^2(8n/\delta)\cdot\bar{a} n G^2 \eta^2\\
		&=~ \frac{G^2}{\lambda (nk)^2}+4c\cdot\log^2(8n/\delta)\cdot\bar{a} n G^2\cdot \left(\frac{\log(nk)}{\lambda nk}\right)^2\\
		&=~ \frac{G^2}{\lambda (nk)^2}+4c\cdot\log^2(8n/\delta)\cdot\log^2(n k)\cdot \frac{\bar{a}G^2}{\lambda^2 n k^2}\\
		&\leq~ \left(1+4c\cdot\log^2(8n/\delta)\cdot\log^2(n k)\right)\cdot \frac{\bar{a}G^2}{\lambda^2 n k^2}~,
		\end{align*}
		where in the last step we used the fact that $\frac{1}{n}\leq 1\leq \frac{\bar{a}}{\lambda}$. Likewise, bounding $X_{\sigma}^2$ in expectation using \propref{prop:random_variable_bounds} yields a bound of
		\[
		~\E\pcc{F(x_k)} ~\le~ \tilde{\Ocal}\p{\frac{\bar{a}G^2}{\lambda^2nk^2}}~.
		\]
		\item \textbf{Case 2: $\eta \bar{a} n>\frac{1}{2}$.} By \lemref{lem:keyup}, with probability at least $1-\delta$,
		\[
		X_{\sigma}^2~\leq~ \frac{c\log^2(8n/\delta)}{\eta\bar{a}}~.
		\]
		In addition, $\eta\bar{a}n=\frac{\log(nk)\bar{a}}{\lambda k}>\frac{1}{2}$. Plugging these back into \eqref{eq:objective2}, we get that
		\begin{align*}
		F(x_k)~&\leq~ \frac{G^2}{\lambda (nk)^2}+\frac{\bar{a}\eta^2}{(1-\exp(-1/2))^2}\cdot \frac{cG^2\log^2(8n/\delta)}{\eta\bar{a}}\\
		&\leq~ \frac{G^2}{\lambda (nk)^2}+\frac{cG^2\log^2(8n/\delta)}{(1-\exp(-1/2))^2}\cdot \eta\\
		&=~ \frac{G^2}{\lambda (nk)^2}+\frac{c\log^2(8n/\delta)\cdot\log(nk)}{(1-\exp(-1/2))^2}\cdot \frac{G^2}{\lambda n k}\\
		&\leq~ \left(1+\frac{c\log^2(8n/\delta)\cdot\log(nk)}{(1-\exp(-1/2))^2}\right)\cdot \frac{G^2}{\lambda n k}~.
		\end{align*}
		Likewise, bounding $X_{\sigma}^2$ in expectation using \propref{prop:random_variable_bounds} yields a bound of
		\[
		~\E\pcc{F(x_k)} ~\le~ \tilde{\Ocal}\p{\frac{G^2}{\lambda nk}}~.
		\]
	\end{itemize}
	
	To combine the two cases, we note that the condition $\eta \bar{a}n\leq \frac{1}{2}$ is equivalent to $\frac{\bar{a}\log(nk)}{\lambda k}\leq \frac{1}{2}$. In that case, we have $\frac{\log(nk)\bar{a}G^2}{\lambda^2 n k^2}~\leq~\frac{1}{2}\cdot \frac{G^2}{\lambda n k}$, and thus
	\[
	\frac{\log(nk)\bar{a}G^2}{\lambda^2 n k^2}~\leq~ \min\left\{\frac{\log(nk)\bar{a}G^2}{\lambda^2 n k^2}~,~\frac{G^2}{2\lambda nk}\right\}~=~ \frac{G^2}{\lambda nk}\cdot \min\left\{\frac{\log(nk)\bar{a}}{\lambda k}~,~\frac{1}{2}\right\}~.
	\]
	In the opposite case where $\frac{\bar{a}\log(nk)}{\lambda k}> \frac{1}{2}$, it follows that $\frac{G^2}{\lambda n k}~<~ \frac{2\log(nk) \bar{a}G^2}{\lambda^2 n k^2}$, and therefore
	\[
	\frac{G^2}{\lambda n k}~\leq~\min\left\{\frac{2\log(nk) \bar{a}G^2}{\lambda^2 n k^2}~,~\frac{G^2}{\lambda nk}\right\}~=~ \frac{G^2}{\lambda n k}\cdot \min\left\{\frac{2\log(nk)\bar{a}}{\lambda k}~,~1\right\}~.
	\]
	Plugging these two inequalities into the bounds obtained in the two cases above and simplifying a bit, the result follows. 
\end{proof}

\subsection{Proof of \thmref{thm:rr_upper_bound_commuting}}\label{subsec:rr_upper_proof}

\begin{proof}[\unskip\nopunct]
	Similarly to the proof of \thmref{thm:ss_upper_bound_commuting}, we first assume w.l.o.g.\ that $A_i$ is diagonal for all $i\in[n]$ and that $\bb=\mathbf{0}$, implying a per-coordinate gradient bound of $G$ (see the argument following \thmref{thm:ss_upper_bound} for justification). Under the same reasoning, the proof then follows from the following theorem.
\end{proof}

\begin{theorem}\label{thm:rr_upper_bound}
	Suppose $F(x) \coloneqq \frac{\bar{a}}{2}x^2$ and $f_i(x)=\frac{a_i}{2}x^2-b_ix$, where $\bar{a}=\frac1n\sum_{i=1}^{n}a_i$ satisfy Assumption \ref{as:upper_bounds}, and fix the step size $\eta=\frac{\log(nk)}{\lambda nk}$. Then random reshuffling SGD satisfies
	\[
	~\E\pcc{F(x_k)}~\leq~\tilde{\Ocal}\p{\frac{G^2}{\lambda n k}\cdot\min\set{1 ~,~ \frac{\bar{a}/\lambda}{nk} + \frac{\bar{a}^2/\lambda^2}{k^2}}}~.
	\]
	where the $\tilde{\Ocal}$ hides a universal constant and factors logarithmic in $n,k,\bar{a},\lambda$ and their inverses.
\end{theorem}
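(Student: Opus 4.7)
The plan is to mirror the single-shuffling proof of \thmref{thm:ss_upper_bound} but exploit the independence of the permutations $\sigma_1,\ldots,\sigma_k$ across epochs. Iterating the one-epoch recurrence $x_{t+1}=Sx_t+\eta X_{\sigma_{t+1}}$ with $S:=\prod_{j=1}^n(1-\eta a_j)$ deterministic and $X_\sigma:=\sum_j b_{\sigma(j)}\prod_{i>j}(1-\eta a_{\sigma(i)})$ yields $x_k=S^k x_0+\eta\sum_{t=1}^k S^{k-t}X_{\sigma_t}$. Because the $X_{\sigma_t}$ are now i.i.d., $\E[x_k^2]$ splits cleanly into bias and variance,
\[
\E[x_k^2] ~=~ \Bigl(S^k x_0 + \eta\mu\,\tfrac{1-S^k}{1-S}\Bigr)^{\!2} ~+~ \eta^2\sum_{t=1}^k S^{2(k-t)}\var(X_\sigma),\qquad \mu:=\E[X_\sigma],
\]
and $\E[F(x_k)]=\tfrac{\bar a}{2}\E[x_k^2]$. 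I would then case-split on whether $\eta\bar a n\le 1$ (the ``small-step'' regime, equivalent to $k\gtrsim\bar a/\lambda$ up to logs) or $\eta\bar a n>1$.

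In the small-step regime, the variance piece is controlled by the single-shuffling moment bound $\E[X_\sigma^2]\le\tilde{\Ocal}(G^2\min\{1/(\eta\bar a),\,n^3(\eta\bar a)^2\})$ from \propref{prop:random_variable_bounds}; combined with $1-S^2\gtrsim\eta\bar a n$ (from $S\le\exp(-\eta\bar a n)$ and $1-e^{-z}\ge z/2$), one obtains
\[
\bar a\eta^2\sum_{t=1}^k S^{2(k-t)}\var(X_\sigma) ~\le~ \tfrac{\bar a\eta^2}{1-S^2}\E[X_\sigma^2] ~=~ \tilde{\Ocal}\bigl(G^2\bar a^2/(\lambda^3 nk^3)\bigr),
\]
matching the second summand in the target; the improvement over single shuffling is precisely the extra $(1-S^2)^{-1}\lesssim 1/(\eta\bar a n)$ coming from the sum-of-independent-squares structure rather than the square-of-sum. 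The deterministic $\bar a S^{2k}x_0^2$ portion of the bias is controlled by the same $(nk)^{-2\bar a/\lambda}$ estimate as \eqref{eq:S2k_bound} and is negligible in both regimes.

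The \textbf{main technical obstacle} is the remaining piece of the bias, $\bar a\eta^2\mu^2((1-S^k)/(1-S))^2$. Using $(1-S)^{-1}\lesssim 1/(\eta\bar a n)$ in the small-step regime, matching the target first summand $G^2\bar a/(\lambda^2 n^2k^2)$ reduces to proving $|\mu|\le C\eta\bar a nG$ (up to log factors). Unlike i.i.d.\ sampling, $\mu$ is genuinely nonzero --- a direct $n=2$ computation gives $\mu=-\tfrac{\eta}{2}(b_1a_2+b_2a_1)$. The plan is to Taylor-expand $\prod_{i>j}(1-\eta a_{\sigma(i)})$ in powers of $\eta$: the zeroth-order term $\sum_j b_{\sigma(j)}$ has zero mean because $\sum_k b_k=0$ (forced by the theorem's $F(x)=\tfrac{\bar a}{2}x^2$); the first-order contribution equals $\tfrac{\eta}{2}\sum_k a_kb_k$ via the symmetry identity $\E[b_{\sigma(j)}a_{\sigma(i)}]=-\tfrac{1}{n(n-1)}\sum_k a_kb_k$ for $i\ne j$, and is bounded by $\tfrac{\eta}{2}Gn\bar a$ using $|b_k|\le G$ (which follows from $\|\nabla f_i(0)\|=|b_i|\le G$ in Assumption \ref{as:upper_bounds}); the higher-order Taylor coefficients follow from analogous symmetric-polynomial identities and remain dominated by the first-order term under $\eta L\le 1$ and $\eta\bar a n\le 1$. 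The complementary large-step regime $\eta\bar a n>1$ is handled exactly as Case~2 of \thmref{thm:ss_upper_bound}: $(1-S^2)^{-1}\lesssim 1$ and $\E[X_\sigma^2]\lesssim G^2/(\eta\bar a)$ control the variance, while the cruder $|\mu|\le\sqrt{\E[X_\sigma^2]}\le G/\sqrt{\eta\bar a}$ (by Jensen) yields $\bar a\eta^2\mu^2\lesssim G^2\eta=\tilde{\Ocal}(G^2/(\lambda nk))$. Combining the two regimes and taking the minimum with the trivial with-replacement rate completes the proof.
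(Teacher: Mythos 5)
Your proposal is correct and takes essentially the same route as the paper's proof. The explicit bias--variance split $\E[x_k^2] = (\E[x_k])^2 + \mathrm{Var}(x_k)$ is a cleaner packaging of what the paper does by squaring the one-epoch recursion, taking expectation, and unfolding to reach \citet[Eq.~(22)]{safran2020good}; both arguments hinge on the independence of $X_{\sigma_1},\ldots,X_{\sigma_k}$, and both reduce to controlling $|\E[X_\sigma]|$ and $\E[X_\sigma^2]$, then performing the same case split on $\eta\bar a n$.

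One small observation: the quantity you flag as ``the main technical obstacle,'' namely $|\mu|=|\E[X_\sigma]|\lesssim \eta\bar a n\,G$ in the small-step regime, is already supplied by the first bullet of \propref{prop:random_variable_bounds} (with $\alpha_i=\eta a_i$, $\beta_i=b_i/G$), the same proposition you invoke for the second moment. Its proof proceeds exactly as you sketch --- expand $\prod_{i>j}(1-\alpha_{\sigma(i)})$ in elementary symmetric polynomials, compute the expectation of each order via without-replacement symmetry, and bound the $m$-th order coefficient by $\tfrac{m}{n-m}\bar\alpha^m$ using Maclaurin's inequality, then sum the resulting geometric series under $n\bar\alpha\le\tfrac12$. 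Your hand-waved ``higher-order coefficients remain dominated'' step is precisely this geometric-series argument and does close; you could simply cite the proposition and avoid rederiving it. Your first-order computation $\E[b_{\sigma(j)}a_{\sigma(i)}]=-\tfrac{1}{n(n-1)}\sum_k a_kb_k$ (using $\sum_k b_k=0$) and the consequent bound $\tfrac{\eta}{2}|\sum_k a_kb_k|\le\tfrac{\eta}{2}Gn\bar a$ are correct, as is the rest of the bookkeeping: the variance contribution $\bar a\eta^2\,\E[X_\sigma^2]/(1-S^2)$ gives the $\bar a^2/(\lambda^3 nk^3)$ term, the bias contribution $\bar a\eta^2\mu^2/(1-S)^2$ gives the $\bar a/(\lambda^2n^2k^2)$ term, and the large-step case handled via $\E[X_\sigma^2]\lesssim G^2/(\eta\bar a)$ and Jensen recovers $G^2/(\lambda nk)$.
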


\begin{proof}
	Our analysis picks off from \citet[Eq.~(22)]{safran2020good}. However, for the sake of completeness we shall include the derivation of Eq.~(22) as was done in the above reference.
	
	Continuing from \eqref{eq:consecutive_iterates}, we square and take expectation on both sides to obtain
	\[
	~\E\pcc{x_{t+1}^2} ~=~ \E\pcc{\p{Sx_t +\eta X_{\sigma_{t+1}}}^2} = S^2\E[x_t^2] + 2\eta S\E\pcc{x_tX_{\sigma_{t+1}}} + \eta^2\E\pcc{X_{\sigma_{t+1}}^2}~.
	\]
	Since in random reshuffling the random component at iteration $ t+1 $, $ X_{\sigma_{t+1}} $, is independent of the iterate at iteration $ t $, $ x_t $, and by plugging $t=k$ into \eqref{eq:t_iterate}, the above equals
	\begin{align*}
	~\E\pcc{x_{t+1}^2} ~&=~ S^2\E[x_t^2] + 2\eta S\E\pcc{x_t}\E\pcc{X_{\sigma_{t+1}}} + \eta^2\E\pcc{X_{\sigma_{t+1}}^2}~\\
	&=~ S^2\E[x_t^2] + 2\eta S\E\pcc{S^tx_0 + \eta\sum_{i=1}^{t}S^{t-i}X_{\sigma_i}}\E\pcc{X_{\sigma_{t+1}}} + \eta^2\E\pcc{X_{\sigma_{t+1}}^2} ~\\
	&=~ S^2\E[x_t^2] + 2\eta S^{t+1}x_0\E\pcc{X_{\sigma_{t+1}}} + 2\eta^2\sum_{i=1}^{t}S^{t-i+1}\E\pcc{X_{\sigma_i}}\E\pcc{X_{\sigma_{t+1}}} + \eta^2\E\pcc{X_{\sigma_{t+1}}^2} ~\\
	&=~ S^2\E[x_t^2] + 2\eta S^{t+1}x_0\E\pcc{X_{\sigma_{1}}} + 2\eta^2\sum_{i=1}^{t}S^{t-i+1}\E\pcc{X_{\sigma_1}}^2 + \eta^2\E\pcc{X_{\sigma_{1}}^2}~,
	\end{align*}
	where the last equality is due to $ X_{\sigma_i} $ being i.i.d.\ for all $ i $. Recursively applying the above relation and taking absolute value, we obtain
	\begin{equation*}
	~\E\pcc{x_{k}^2} ~=~ S^{2k}x_0^2 + 2\eta x_0\E\pcc{X_{\sigma_1}}\sum_{j=0}^{k-1}S^{k+j} + 2\eta^2\E\pcc{X_{\sigma_1}}^2\sum_{j=0}^{k-1}S^{2j}\sum_{i=1}^{k-j-1}S^{i} + \eta^2\E\pcc{X_{\sigma_{1}}^2}\sum_{j=0}^{k-1}S^{2j}~.
	\end{equation*}
	
	Having derived the bound appearing in \citet[Eq.~22]{safran2020good}, we now turn to improve their result by refining the upper bound as follows. We have that the above implies
	\begin{align*}
	~\E\pcc{x_{k}^2} ~&=~ S^{2k}x_0^2 + 2\eta x_0\E\pcc{X_{\sigma_1}}S^k\frac{1-S^k}{1-S} + 2\eta^2\E\pcc{X_{\sigma_1}}^2S\sum_{j=0}^{k-1}S^{2j}\cdot \frac{1-S^{k-j-1}}{1-S}  + \eta^2\E\pcc{X_{\sigma_{1}}^2}\frac{1-S^{2k}}{1-S}~\\
	&\le~ S^{2k}x_0^2 + 2\eta x_0\E\pcc{X_{\sigma_1}}S^k\frac{1}{1-S} + 2\eta^2\E\pcc{X_{\sigma_1}}^2\sum_{j=0}^{k-1}S^{2j}\cdot \frac{1}{1-S}  + \eta^2\E\pcc{X_{\sigma_{1}}^2}\frac{1}{1-S}~\\
	&\le~ S^{2k}x_0^2 + 2\eta x_0\E\pcc{X_{\sigma_1}}S^k\frac{1}{1-S} + 2\eta^2\E\pcc{X_{\sigma_1}}^2 \frac{1-S^{2k}}{(1-S)^2}  + \eta^2\E\pcc{X_{\sigma_{1}}^2}\frac{1}{1-S}~\\
	&\le~ S^{2k}x_0^2 + 2\eta x_0\E\pcc{X_{\sigma_1}}S^k\frac{1}{1-S} + 2\eta^2\E\pcc{X_{\sigma_1}}^2 \frac{1}{(1-S)^2}  + \eta^2\E\pcc{X_{\sigma_{1}}^2}\frac{1}{1-S}~.
	\end{align*}
	Using the fact that $(r+s)^2\leq2(r^2+s^2)$ for $r=S^kx_0$ and $s=\eta\E\pcc{X_{\sigma_{1}}^2}/(1-S)$, we have
	\[
	~\E\pcc{x_{k}^2} ~\le~ 2S^{2k}x_0^2 + 3\eta^2\E\pcc{X_{\sigma_1}}^2 \frac{1}{(1-S)^2}  + \eta^2\E\pcc{X_{\sigma_{1}}^2}\frac{1}{1-S}~.
	\]
	Next, we use the assumption $x_0^2\le \frac{G^2}{\lambda^2}$ (which follows from Assumption \ref{as:upper_bounds}, since it entails $\abs{\bar{a}x_0}\le G$ and thus $\abs{x_0}\le\frac{G}{\bar{a}}\le\frac{G}{\lambda}$), along with Equations (\ref{eq:S2k_bound}) and (\ref{eq:S_bound}) to upper bound the above by
	\begin{equation}\label{eq:pre_prop_upper_bound}
	~\frac{2G^2}{\bar{a}\lambda n^2k^2} + \frac{3\eta^2\E\pcc{X_{\sigma_1}}^2}{(1-\exp(-\eta\bar{a}n))^2}  + \frac{\eta^2\E\pcc{X_{\sigma_{1}}^2}}{1-\exp(-\eta\bar{a}n)}~.
	\end{equation}
	We now consider two cases, depending on the value of $\eta\bar{a}n$, using \propref{prop:random_variable_bounds} by letting $\alpha_j=\eta a_j$ and $\beta_j=b_j/G$:
	\begin{itemize}[leftmargin=*]
		\item 
		\textbf{Case 1: $\eta\bar{a}n \le \frac12$.} We have that \eqref{eq:pre_prop_upper_bound} is upper bounded by
		\begin{align*}
		~&\frac{2G^2}{\bar{a}\lambda n^2k^2} + \frac{12G^2\eta^4n^2\bar{a}^2}{(1-\exp(-\eta\bar{a}n))^2}  + c_2\log^2\left(\frac{8}{\eta n\bar{a}}\right)\cdot\frac{G^2\eta^4n^3\bar{a}^2}{1-\exp(-\eta\bar{a}n)}~\\
		\le~& \frac{2G^2}{\bar{a}\lambda n^2k^2} + 48G^2\eta^2  + 2c_2\log^2\left(\frac{8}{\eta n\bar{a}}\right)\cdot G^2\eta^3n^2\bar{a}~ \\
		\le~&\tilde{\Ocal}\p{\frac{G^2}{\bar{a}\lambda n^2k^2} + \frac{G^2}{\lambda^2n^2k^2} + \frac{G^2\bar{a}}{\lambda^3nk^3}} ~\le \tilde{\Ocal}\p{\frac{G^2}{\lambda^2n^2k^2} + \frac{G^2\bar{a}}{\lambda^3nk^3}}~,
		\end{align*}
		where we used the inequality $\exp(-x)\le1-x/2$ which holds for all $x\in[0,1/2]$, and the fact that $\frac{\bar{a}}{\lambda}\ge1$. Using the definition of $F$ we get
		\[
		\E\pcc{F(x_k)} ~=~ \frac{\bar{a}}{2}\E\pcc{x_k^2} ~\le~ \tilde{\Ocal}\p{\frac{G^2\bar{a}}{\lambda^2n^2k^2} + \frac{G^2\bar{a}^2}{\lambda^3nk^3}}~.
		\]
		\item 
		\textbf{Case 2: $\eta\bar{a}n>\frac12$.} In this case we have that \eqref{eq:pre_prop_upper_bound} is upper bounded by
		\begin{align*}
		~&\frac{2G^2}{\bar{a}\lambda n^2k^2} + c_1^2\log^2\p{\sqrt{2\bar{\alpha}}\cdot8n^{2}}\frac{3G^2\eta^2}{\eta\bar{a}(1-\exp(-1/2))^2}  + c_3^2\log^2\left(8n^2\bar{\alpha}^2\right)\cdot\frac{G^2\eta^2}{\eta\bar{a}(1-\exp(-1/2))}~\\
		\le~&\tilde{\Ocal}\p{\frac{G^2}{\bar{a}\lambda n^2k^2} + \frac{G^2\eta}{\bar{a}}} ~\le~ \tilde{\Ocal}\p{\frac{G^2}{\bar{a}\lambda n^2k^2} + \frac{G^2}{\bar{a}\lambda nk}} ~\le~ \tilde{\Ocal}\p{ \frac{G^2}{\bar{a}\lambda nk}}~.
		\end{align*}
		Using the definition of $F$ we get
		\[
		\E\pcc{F(x_k)} ~=~ \frac{\bar{a}}{2}\E\pcc{x_k^2} ~\le~ \tilde{\Ocal}\p{ \frac{G^2}{\lambda nk}}~.
		\]
	\end{itemize}
	
	To combine the two cases, we note that the condition $\eta \bar{a}n\leq \frac{1}{2}$ is equivalent to $\frac{\bar{a}\log(nk)}{\lambda k}\leq \frac{1}{2}$. In that case we have
	\[
	~\frac{\log(nk)\bar{a}G^2}{\lambda^2 n^2 k^2} + \frac{\log^2(nk)\bar{a}^2G^2}{\lambda^3 n k^3} ~\le~
	\frac{G^2}{2\lambda nk} + \frac{G^2}{4\lambda n k} ~\le~
	\frac{G^2}{\lambda n k}~,
	\]
	implying
	\begin{align*}
	~\frac{\log(nk)\bar{a}G^2}{\lambda^2 n^2 k^2} + \frac{\log^2(nk)\bar{a}^2G^2}{\lambda^3 n k^3} ~&\le~ \min\set{\frac{\log(nk)\bar{a}G^2}{\lambda^2 n^2 k^2} + \frac{\log^2(nk)\bar{a}^2G^2}{\lambda^3 n k^3} ~,~ \frac{G^2}{\lambda n k}} ~\\
	&=~
	\frac{G^2}{\lambda n k}\cdot\min\set{1 ~,~ \frac{\log(nk)\bar{a}}{\lambda n k} + \frac{\log^2(nk)\bar{a}^2}{\lambda^2k^2}}~.
	\end{align*}
	In the opposite case where $\frac{\bar{a}\log(nk)}{\lambda k}> \frac{1}{2}$, it follows that $1<\frac{2\bar{a}\log(nk)}{\lambda k}$, and therefore
	\begin{align*}
	~\frac{G^2}{\lambda n k} ~&\le~ \frac{4\bar{a}^2\log^2(nk)G^2}{\lambda^3 n k^3} ~\le~ \frac{4\bar{a}\log(nk)G^2}{\lambda^2 n^2 k^2} + \frac{4\bar{a}^2\log^2(nk)G^2}{\lambda^3 n k^3} ~\\
	&=~ 4\frac{G^2}{\lambda n k}\cdot\min\set{1 ~,~ \frac{\log(nk)\bar{a}}{\lambda n k} + \frac{\log^2(nk)\bar{a}^2}{\lambda^2k^2}} ~.
	\end{align*}
	Plugging these two inequalities into the bounds obtained in the two cases above and absorbing logarithmic terms into the big $\tilde{\Ocal}$ notation, the result follows. 
\end{proof}

\section{Technical Lemmas}

\subsection{Proofs of Propositions}
\begin{proposition}\label{prop:stochastic_terms}
	Suppose $\sigma_0,\ldots,\sigma_{n-1}$ is a random permutation of $\frac n2$ $0$'s and $\frac n2$ $1$'s and $\eta\le\frac{1}{\lambdamax n}$. Then
	\[
	~\E\pcc{\p{\prod_{i=0}^{n-1}(1-\eta\lambdamax\sigma_i)} \p{\sum_{i=0}^{n-1}(1-2\sigma_i) \prod_{j=i+1}^{n-1} (1-\eta\lambdamax\sigma_j)}} ~\le~ -\frac{1}{16}\eta\lambdamax n~. 
	\]
\end{proposition}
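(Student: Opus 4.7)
Set $\alpha := \eta\lambda_{\max}$, so by assumption $\alpha \le 1/n$. The first observation is that the leading product is in fact \emph{deterministic}: since exactly $n/2$ of the $\sigma_i$ equal $1$, we have $\prod_{i=0}^{n-1}(1-\alpha\sigma_i) = (1-\alpha)^{n/2}$. So I only need to compute $\E[S]$ where $S := \sum_{i=0}^{n-1}(1-2\sigma_i)\prod_{j=i+1}^{n-1}(1-\alpha\sigma_j)$, and then multiply by $(1-\alpha)^{n/2}$.

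To handle $\E[S]$ cleanly, I would first perform summation by parts. Let $A_i := \sum_{j=0}^{i}(1-2\sigma_j)$, so $A_{n-1}=0$, and $b_i := (1-\alpha)^{K_i}$ where $K_i = \sum_{j>i}\sigma_j$. Then $b_i - b_{i+1} = -\alpha\sigma_{i+1} b_{i+1}$, and Abel summation gives $S = -\alpha\sum_{j=1}^{n-1}A_{j-1}\sigma_j(1-\alpha)^{K_j}$. Reindexing by the positions $p_1<\cdots<p_{n/2}$ of the ones, $A_{p_\ell-1}=p_\ell-2(\ell-1)$ and $K_{p_\ell}=n/2-\ell$, giving the clean form
\[
S \;=\; -\alpha\sum_{\ell=1}^{n/2}\bigl[p_\ell - 2(\ell-1)\bigr](1-\alpha)^{n/2-\ell}.
\]
Now $p_\ell - 2(\ell-1) = q_\ell - (\ell-1)$, where $q_\ell$ is the number of zeros preceding the $\ell$-th one. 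By a standard symmetry argument (consider one fixed zero together with all $n/2$ ones as $n/2+1$ exchangeable items, so each zero precedes the $\ell$-th one with probability $\ell/(n/2+1)$), $\E[q_\ell] = n\ell/(n+2)$, yielding $\E[p_\ell-2(\ell-1)] = (n+2-2\ell)/(n+2)$. Substituting and substituting $m=n/2-\ell$,
\[
\E[S] \;=\; -\frac{2\alpha}{n+2}\sum_{m=0}^{n/2-1}(m+1)(1-\alpha)^m.
\]
Using the closed form $\sum_{m=0}^{N-1}(m+1)\beta^m = [1-(N+1)\beta^N+N\beta^{N+1}]/(1-\beta)^2$ with $N=n/2$, $\beta=1-\alpha$, and factoring the numerator as $1 - \beta^{n/2}(1 + n\alpha/2)$, I obtain
\[
(1-\alpha)^{n/2}\E[S] \;=\; -\frac{2\gamma\,f(\alpha)}{(n+2)\alpha}, \qquad \gamma := (1-\alpha)^{n/2},\ f(\alpha) := 1 - \gamma(1+n\alpha/2).
\]

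It then suffices to prove $\gamma f(\alpha) \ge n(n+2)\alpha^2/32$ under $\alpha\le 1/n$, $n\ge 2$. For $\gamma$: $\gamma \ge (1-1/n)^{n/2} \ge 1/2$ (equality at $n=2$; the sequence in $n$ is easily checked to be $\ge 1/2$). For $f$: direct differentiation gives $f'(\alpha) = \tfrac{n(n+2)}{4}\alpha(1-\alpha)^{n/2-1}$, and for $t\le 1/n$ Bernoulli's inequality yields $(1-t)^{n/2-1} \ge 1-(n/2-1)/n \ge 1/2$. Integrating from $0$ to $\alpha$ gives $f(\alpha) \ge n(n+2)\alpha^2/16$. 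Multiplying the two lower bounds produces $\gamma f(\alpha) \ge n(n+2)\alpha^2/32$, which plugged in above gives $(1-\alpha)^{n/2}\E[S]\le -\alpha n/16 = -\eta\lambda_{\max}n/16$, as claimed.

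The main obstacle is obtaining the clean closed form for $S$ in step two and then performing the algebraic simplification of the resulting sum into the product $\gamma\cdot(1 - \gamma(1+n\alpha/2))$; once this exact identity is in hand, the symmetry calculation of $\E[q_\ell]$ and the two elementary lower bounds on $\gamma$ and $f$ are routine.
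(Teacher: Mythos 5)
Your proof is correct, and it takes a genuinely different (and somewhat cleaner) route than the paper's. The paper also begins by observing that the leading product $\prod_{i}(1-\eta\lambdamax\sigma_i)$ is deterministic (Lemma~\ref{lem:deterministic_prod}), lower bounds it by $1/2$, and then proves separately that $\E\bigl[\sum_{i}(1-2\sigma_i)\prod_{j>i}(1-\eta\lambdamax\sigma_j)\bigr]\le -\tfrac18\eta\lambdamax n$ (Lemma~\ref{lem:sum_prod_expect}). That inner bound is obtained by expanding the product $\prod_{j>i}(1-\eta\lambdamax\sigma_j)$ as a power series in $\eta\lambdamax$, evaluating each coefficient exactly via a combinatorial identity for $\E[(1-2\sigma_0)\prod_{l=1}^m\sigma_l]$ (Lemma~\ref{lem:prod_expect}), and then controlling the resulting alternating series by comparing consecutive terms. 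You instead apply Abel summation to convert $S$ into $-\alpha\sum_{\ell}[p_\ell-2(\ell-1)](1-\alpha)^{n/2-\ell}$, use exchangeability to compute $\E[q_\ell]=n\ell/(n+2)$ in closed form, and then reduce the whole problem to the elementary inequality $\gamma f(\alpha)\ge n(n+2)\alpha^2/32$ via a derivative bound and Bernoulli. The net effect is that you obtain an exact closed-form expression for $\E[S]$ rather than a truncation of an alternating series; the paper's route, on the other hand, reuses the binomial-expansion machinery it already developed for its upper bounds and avoids the reindexing by positions of ones. Both arguments are elementary and yield the same constant $1/16$.

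One minor point worth making explicit in your writeup: in the reindexing step you need to note that the potentially missing term $j=0$ (when $p_1=0$) contributes $A_{-1}\sigma_0 b_0 = 0$, so extending the sum to all $\ell\in\{1,\dots,n/2\}$ is harmless. You also implicitly use $n\ge 2$ (even) in several places (e.g.\ in $(1-1/n)^{n/2}\ge 1/2$ and in Bernoulli with exponent $n/2-1\ge 0$), which is guaranteed by $n>1$ and the balanced-permutation setup, but it is cleanest to say so.
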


\begin{proof}
	Starting with the first multiplicand, we have from \lemref{lem:deterministic_prod} that it is lower bounded by $0.5$ deterministically, as it does not depend on the permutation sampled, thus we can take it outside the expectation. At this point, the statement in the proposition reduces to showing that
	\[
	~\E\pcc{\sum_{i=0}^{n-1}(1-2\sigma_i) \prod_{j=i+1}^{n-1} (1-\eta\lambdamax\sigma_j)} ~\le~ -\frac{1}{8}\eta\lambdamax n~, 
	\]
	which follows immediately from \lemref{lem:sum_prod_expect}.
\end{proof}

\begin{proposition}\label{prop:xt_bound}
	Suppose $\sigma_0,\ldots,\sigma_{n-1}$ is a random permutation of $\frac n2$ $0$'s and $\frac n2$ $1$'s and $\eta\le\frac{1}{\lambdamax n}$. Let
	\[
	~x_{t+1} ~=~ \prod_{i=0}^{n-1}(1-\eta\lambdamax\sigma_i)\cdot x_t + \frac{\eta G}{2}\sum_{i=0}^{n-1}(1-2\sigma_i)\prod_{j=i+1}^{n-1}(1-\eta\lambdamax\sigma_j)~,
	\]
	where $x_0=0$. Then 
	\[
		~\E\pcc{x_k} ~\le~ -\frac{\eta G}{8}\p{1-\p{1-\frac{\eta\lambdamax n}{2}}^k} ~. 
	\]
\end{proposition}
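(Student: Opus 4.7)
\medskip

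\noindent\textbf{Proof proposal for Proposition \ref{prop:xt_bound}.} The key first observation is that the multiplier $\prod_{i=0}^{n-1}(1-\eta\lambdamax \sigma_i)$ is in fact \emph{deterministic}: since $\sigma_i \in \{0,1\}$ and exactly $n/2$ of the $\sigma_i$ equal $1$, this product always equals $(1-\eta\lambdamax)^{n/2}$, independent of the permutation. Denote this constant by $C$. Since we are in the random reshuffling regime, the permutation used at epoch $t+1$ is independent of $x_t$, so taking expectations in the recursion gives the scalar one-step relation
\[
\E[x_{t+1}] \;=\; C\,\E[x_t] \;+\; \frac{\eta G}{2}\,\E\!\left[\sum_{i=0}^{n-1}(1-2\sigma_i)\prod_{j=i+1}^{n-1}(1-\eta\lambdamax\sigma_j)\right].
\]

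Next, I would invoke the bound on the latter expectation which was already used inside the proof of Proposition \ref{prop:stochastic_terms} (a direct application of Lemma \ref{lem:sum_prod_expect}), namely that the sum has expectation at most $-\tfrac{1}{8}\eta\lambdamax n$. Plugging this in yields $\E[x_{t+1}] \le C\,\E[x_t] - \tfrac{\eta^2 G \lambdamax n}{16}$. Since $C>0$ and $x_0=0$, a straightforward inductive unfolding of this linear recurrence gives
\[
\E[x_k] \;\le\; -\frac{\eta^2 G \lambdamax n}{16}\cdot \frac{1-C^k}{1-C}.
\]

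The remaining work is purely to translate $C = (1-\eta\lambdamax)^{n/2}$ into the target expression involving $\rho := 1 - \eta\lambdamax n/2$. Bernoulli's inequality (using $\eta\lambdamax \le 1/n \le 1$) gives $C \ge \rho$, equivalently $\delta := 1-C \le 1-\rho = \eta\lambdamax n/2$. The crux is then the monotonicity fact that the map $\delta \mapsto \frac{1-(1-\delta)^k}{\delta}$ is non-increasing on $(0,1]$; I would verify this by computing its derivative and showing the numerator $k\delta(1-\delta)^{k-1} + (1-\delta)^k - 1$ starts at $0$ at $\delta=0$ and has derivative $-k(k-1)\delta(1-\delta)^{k-2}\le 0$. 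Applied at $\delta$ versus $1-\rho$, this yields
\[
\frac{1-C^k}{1-C} \;\ge\; \frac{1-\rho^k}{1-\rho} \;=\; \frac{1-\rho^k}{\eta\lambdamax n/2},
\]
and substituting back produces exactly $\E[x_k] \le -\tfrac{\eta G}{8}\bigl(1 - (1-\eta\lambdamax n/2)^k\bigr)$.

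The only subtle point is Step (the monotonicity), because one must compare $\tfrac{1-C^k}{1-C}$ to a similar quantity with a \emph{larger} denominator and check that this goes in the right direction given that $D := -\tfrac{\eta^2 G \lambdamax n}{16}$ is negative (so a larger multiplicative factor produces a \emph{smaller}, i.e.\ more negative, upper bound, which is what we want). Everything else — the deterministic factoring, the scalar recursion, and the linear unrolling — is essentially bookkeeping once one notices that $\prod(1-\eta\lambdamax\sigma_i)$ is constant.
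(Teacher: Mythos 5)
Your proposal is correct, and it arrives at exactly the paper's bound, but the route differs in a small but genuine way from the paper's own argument. Both you and the paper observe (explicitly in your case, implicitly via Lemma~\ref{lem:deterministic_prod} in the paper's) that $\prod_{i=0}^{n-1}(1-\eta\lambdamax\sigma_i)$ is a constant, and both invoke Lemma~\ref{lem:sum_prod_expect} to get the drift term $-\tfrac{1}{8}\eta\lambdamax n$. The difference is in where the comparison to $\rho := 1-\eta\lambdamax n/2$ is made. The paper first proves by induction that $\E[x_t]\le 0$ for all $t$, and then uses $\E[x_t]\le 0$ together with $C\ge\rho$ (from Lemma~\ref{lem:deterministic_prod}) to replace the coefficient $C$ by $\rho$ \emph{inside} the one-step recursion, so that the subsequent unrolling is a geometric series directly in $\rho$ and the factor $1-\rho=\eta\lambdamax n/2$ cancels cleanly with the constant. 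You instead unroll with the exact coefficient $C$ to get $\E[x_k]\le D\,\tfrac{1-C^k}{1-C}$, and then establish the monotonicity of $\delta\mapsto\tfrac{1-(1-\delta)^k}{\delta}$ to pass from $C$ to $\rho$; the derivative computation you sketch is correct ($h(\delta)=k\delta(1-\delta)^{k-1}+(1-\delta)^k-1$ has $h(0)=0$ and $h'(\delta)=-k(k-1)\delta(1-\delta)^{k-2}\le 0$, so $h\le 0$ on $[0,1]$). This buys nothing over the paper's argument here, and costs a short monotonicity lemma, but it is a perfectly legitimate alternative; in fact your version does not separately require the auxiliary fact $\E[x_t]\le 0$, since it follows for free from the unrolled inequality. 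One tiny caveat: make sure to state that the inductive unrolling of $\E[x_{t+1}]\le C\,\E[x_t]+D$ into $\E[x_k]\le D\sum_{i=0}^{k-1}C^i$ uses $C>0$ (so multiplying the inductive hypothesis by $C$ preserves the inequality direction), which holds here since $\eta\lambdamax\le 1/n<1$.
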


\begin{proof}
	Taking expectation on both sides, using the fact that the iterate at the $t$-th epoch, $x_t$ is independent of the permutation sampled at epoch $t+1$, we have
	\[
	~\E\pcc{x_{t+1}} ~=~ \E\pcc{x_t}\E\pcc{\prod_{i=0}^{n-1}(1-\eta\lambdamax\sigma_i)} + \frac{\eta G}{2}\E\pcc{\sum_{i=0}^{n-1}(1-2\sigma_i)\prod_{j=i+1}^{n-1}(1-\eta\lambdamax\sigma_j)}~.
	\]
	Recall that $x_0=0$. Using Lemmas \ref{lem:deterministic_prod} and \ref{lem:sum_prod_expect}, we have by a simple inductive argument that $\E\pcc{x_t} \le 0$ for all $t$, and therefore
	\[
	~\E\pcc{x_{t+1}} ~\le~ \p{1-\frac{\eta\lambdamax n}{2}}\E\pcc{x_t} - \frac{\eta^2\lambdamax nG}{16}~.
	\]
	Unfolding the above recursion, we have
	\begin{align*}
		\E\pcc{x_k} ~&\le~ \p{1-\frac{\eta\lambdamax n}{2}}^{k}x_0 - \frac{\eta^2\lambdamax nG}{16}\sum_{i=0}^{k-1}\p{1-\frac{\eta\lambdamax n}{2}}^i\\
		&=~ -\frac{\eta^2\lambdamax nG}{16}\sum_{i=0}^{k-1}\p{1-\frac{\eta\lambdamax n}{2}}^i
		~=~ -\frac{\eta^2\lambdamax nG}{16} \cdot\frac{1-\p{1-\frac{\eta\lambdamax n}{2}}^k}{0.5\eta\lambdamax n}\\
		&= ~ -\frac{\eta G}{8}\p{1-\p{1-\frac{\eta\lambdamax n}{2}}^k}~.
	\end{align*}
\end{proof}

\begin{proposition}\label{prop:random_variable_bounds}
	Let $\alpha_1,\beta_1,\ldots,\alpha_n,\beta_n$ be scalars such that for all $i$, $\alpha_i \in [0,1]$, $|\beta_i|\leq 1$ and $\sum_{i=1}^{n}\beta_i=0$. Then
	\begin{itemize}
		\item 
		\[
		~\E\pcc{\abs{\sum_{j=1}^{n}\beta_{\sigma(j)}\prod_{i=j+1}^{n}(1-\alpha_{\sigma(i)})}}~\leq~
		\begin{cases}
		~2n\bar{\alpha} ~&~ n\bar{\alpha} ~\le~ \frac12~\\
		~c_1 \frac{\log\p{\sqrt{2\bar{\alpha}}\cdot8n^{2}}}{\sqrt{\bar{\alpha}}} ~&~ n\bar{\alpha} ~>~ \frac12~
		\end{cases}~,
		\]
		\item 
		\[
		~\E\pcc{\left(\sum_{j=1}^{n}\beta_{\sigma(j)}\prod_{i=j+1}^{n}(1-\alpha_{\sigma(i)})\right)^2}~\leq~
		\begin{cases}
		~c_2\cdot\log^2\left(\frac{8}{n\bar{\alpha}}\right)\cdot n^3\bar{\alpha}^2 ~&~ n\bar{\alpha} ~\le~ \frac12~\\
		~c_3\cdot\frac{\log^2\left(8n^2\bar{\alpha}^2\right)}{\bar{\alpha}} ~&~ n\bar{\alpha} ~>~ \frac12~
		\end{cases}~,
		\]
	\end{itemize}
	where $\bar{\alpha}=\frac{1}{n}\sum_{i=1}^{n}\alpha_i$ and $c_1,c_2,c_3>0$ are universal constants. 
\end{proposition}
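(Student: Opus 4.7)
My plan is to treat the two types of moment bounds with two complementary techniques: a direct expansion of $\E[Y]$ (where $Y$ denotes the random variable inside the absolute value) via elementary symmetric polynomial identities for the first-moment bound in the small regime $n\bar\alpha \le 1/2$, and integration of the high-probability tail bound of \lemref{lem:keyup} for all remaining bounds.

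For the first-moment bound in the small regime, I would reindex by letting $\pi = \sigma^{-1}$ and write $Y = \sum_k \beta_k W_k$ with $W_k = \prod_{l : \pi(l)>\pi(k)}(1-\alpha_l)$. The key combinatorial identity is that, for any subset $S \subseteq [n]\setminus\{k\}$, the event $S \subseteq \{l : \pi(l)>\pi(k)\}$ amounts to $\pi(k)$ being the smallest among $|S|+1$ indices in uniform relative order, hence occurs with probability $1/(|S|+1)$. Expanding $W_k = \prod_{l \neq k}(1 - \alpha_l \mathbf{1}[\pi(l)>\pi(k)])$ as a sum over subsets and taking expectations gives
\[
\E[W_k] \;=\; \sum_{s=0}^{n-1}\frac{(-1)^s}{s+1}\,e_s(\alpha^{(k)}),
\]
where $\alpha^{(k)}=(\alpha_l)_{l\neq k}$ and $e_s$ is the $s$-th elementary symmetric polynomial. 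Since $\sum_k\beta_k=0$, the $s=0$ term in $\E[Y]=\sum_k\beta_k\E[W_k]$ vanishes; reshuffling the double sum by summing over subsets directly and using $\sum_k\beta_k=0$ once more yields a representation of the form $\E[Y]=-\sum_{s\ge1}\frac{(-1)^s}{s+1}\sum_k\beta_k\alpha_k\, e_{s-1}(\alpha^{(k)})$. Bounding $|\beta_k|\le 1$, $e_{s-1}(\alpha^{(k)})\le e_{s-1}(\alpha)$, and applying Maclaurin's inequality $e_{s-1}(\alpha)\le A^{s-1}/(s-1)!$ with $A=n\bar\alpha$, the resulting series sums to at most $A e^A/2$, which is at most $2n\bar\alpha$ under the assumption $A \le 1/2$. (Strictly, this bounds the tighter quantity $|\E[Y]|$, which is what is actually needed for the downstream application in the proof of \thmref{thm:rr_upper_bound}.)

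For the remaining bounds I would integrate \lemref{lem:keyup}. That lemma yields $P(Y^2 > cM\log^2(8n/\delta))\le\delta$ with $M=\min\{1/\bar\alpha,n^3\bar\alpha^2\}$; inverting $\delta$ as a function of $t$ gives $P(Y^2>t)\le 8n\exp\!\big(-\sqrt{t/(cM)}\big)$ for $t$ above a logarithmic threshold. Writing $\E[Y^2]=\int_0^\infty P(Y^2>t)\,dt$, splitting the integral at $t_0=cM\log^2(8n)$, and using the deterministic bound $|Y|\le n$ on the initial part yields, after the change of variables $s=\sqrt{t/(cM)}$, a total bound of $O(M\log^2(8n))$, matching the stated $L^2$ bounds up to constants and adjustments in the log argument. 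The first-moment bound in the large regime follows analogously from $\E[|Y|]=\int_0^\infty P(|Y|>u)\,du$ together with $P(|Y|>u)\le 8n\exp(-u\sqrt{\bar\alpha/c})$ in the $M=1/\bar\alpha$ regime, giving $O(\log(8n)/\sqrt{\bar\alpha})$.

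The main obstacle I anticipate is the symmetric-polynomial expansion step: although the identity $\E[W_k]=\sum_s(-1)^s e_s(\alpha^{(k)})/(s+1)$ is clean, using $\sum_k\beta_k=0$ to eliminate the leading term and reshuffle the resulting double sum requires careful bookkeeping, and matching the precise log factor in the stated $L^2$ bound (which reads $\log(8/(n\bar\alpha))$ rather than $\log(8n)$) may require refining the integration endpoints or invoking a sharper tail than what a direct application of \lemref{lem:keyup} provides. The remaining steps are a routine integration of a sub-Gaussian-type tail.
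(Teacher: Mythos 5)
Your proposal is correct and close in spirit to the paper's proof, but it differs in two technical respects, and it correctly surfaces a subtlety that the paper's own writing glosses over. For the small-regime first-moment bound, you expand $\E[W_k]$ via elementary symmetric polynomials and the identity $e_s(\alpha)=e_s(\alpha^{(k)})+\alpha_k e_{s-1}(\alpha^{(k)})$; the paper instead expands $\E[Y_j]$ through the combinatorial probabilities of without-replacement indices (its Equations~(31)--(32)) and then invokes Maclaurin. The two routes are essentially equivalent --- both reduce to bounding alternating sums of symmetric functions after using $\sum_k\beta_k=0$ to kill the leading term --- but your parametrization via $e_s$ and the $\sqrt[s]{e_s}$ chain is arguably cleaner to write. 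For the remaining three bounds, you integrate the tail of \lemref{lem:keyup}; the paper instead applies the law of total expectation once, picking a single $\delta$ (e.g.~$\delta=n\bar\alpha^2$ in the small-regime second-moment case) and falling back on the deterministic bounds $|Y|\le n$, $Y^2\le n^2$ on the bad event. Both are standard ways to convert a high-probability bound into an expectation bound and give the same answer up to constants and the exact argument of the log; as you correctly anticipate, your route produces $\log(8n)$-type factors rather than the paper's $\log(8/(n\bar\alpha))$-type factors, but since the downstream theorems absorb these into $\tilde{\Ocal}$, the difference is immaterial.

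One remark worth emphasizing: you observe that your argument in the small regime actually bounds $|\E[Y]|$, not $\E[|Y|]$, and you correctly note this is what is consumed in \thmref{thm:rr_upper_bound} (the quantity used there is $\E[X_{\sigma_1}]^2$, not $\E[X_{\sigma_1}^2]$). The paper's own proof of this case has the same feature: from $|\E[Y_j]|\le 2\bar\alpha$ it concludes a bound that in fact is on $|\E[\sum_j Y_j]|\le\sum_j|\E[Y_j]|$, while the displayed claim puts the absolute value inside the expectation. So you have identified a small mismatch between the statement of the proposition and what the proof (either the paper's or yours) actually establishes for that case; it is a benign mismatch since only the weaker $|\E[Y]|$ bound is needed, but if you wanted the literal statement with $\E[|Y|]$, neither the expansion argument nor the paper's version of it suffices as written, and you would have to fall back on the tail-integration route you already use in the large regime. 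Terminological nit: you call $|\E[Y]|$ the ``tighter'' quantity, but it is of course the smaller one by Jensen, so the bound on it is the weaker claim.
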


\begin{proof}
	The first part of the proof focuses on bounding the first term in absolute value for the case $\frac{1}{\bar{\alpha}}\le n^3\bar{\alpha}^2$. It is a minor refinement of Lemma 8 in \citet{safran2020good}.
	Define
	\begin{equation}\label{eq:Y_j2}
		~Y_j ~\coloneqq~ \beta_{\sigma(j)}\prod_{i=j+1}^{n}(1-\alpha_{\sigma(i)})~.
	\end{equation}
	Assuming $n\bar{\alpha}\le\frac12$, we have from \citet[Equations~(31),(32)]{safran2020good} that
	\begin{equation}\label{eq:Y_expectation}
	~\E\pcc{Y_j} ~=~ \sum_{m=1}^{n-j}(-1)^m \E\pcc{\beta_{\sigma(j)}\sum_{j+1\le i_1,\ldots,i_m\le n\text{ distinct}}~\prod_{l=1}^{m}\alpha_{\sigma(i_l)}}~,
	\end{equation}
	and
	\begin{align*}
	&\E\pcc{\beta_{\sigma(j)}\sum_{j+1\le i_1,\ldots,i_m\le n\text{ distinct}}~\prod_{l=1}^{m}\alpha_{\sigma(i_l)}} \\
	&~~~~~~~~~~=~ -\frac{(n-m)!}{n!} \sum_{t_1\in[n]}~ \sum_{t_2\in[n]\setminus\set{t_1}} \ldots \sum_{t_m\in[n]\setminus\set{t_1,\ldots,t_{m-1}}}
	\alpha_{t_1}\alpha_{t_2}\ldots \alpha_{t_m}~\frac{1}{n-m}\sum_{t_{m+1}\in\set{t_1,\ldots,t_m}} \beta_{t_{m+1}}~\\
	&~~~~~~~~~~=~ -\frac{1}{n-m}\binom{n}{m}^{-1}\sum_{1\le t_1 < \ldots < t_m\le n} \alpha_{t_1}\alpha_{t_2}\ldots \alpha_{t_m} \sum_{t_{m+1}\in\set{t_1,\ldots,t_m}} \beta_{t_{m+1}}~.
	\end{align*}
	Using Maclaurin's inequality and the assumption $\beta_j\le1$ for all $j$, the above is upper bounded in absolute value by
	\[
	~\frac{m}{n-m}\bar{\alpha}^m~.
	\]
	Plugging this in \eqref{eq:Y_j2} yields
	\begin{align*}
	~\abs{\E\pcc{Y_j}} ~&\le~ \sum_{m=1}^{n-j}\abs{(-1)^m \frac{m}{n-m}\bar{\alpha}^m} ~\le~ \sum_{m=1}^{n-1} \frac{m}{n-m}\bar{\alpha}^m ~\\
	&\le~ \sum_{m=1}^{n-1} n^{m-1}\bar{\alpha}^m ~=~ \bar{\alpha}\frac{1-(\bar{\alpha}n)^{n-1}}{1-\bar{\alpha}n} ~\le~ 2\bar{\alpha}~,
	\end{align*}
	where the third inequality is due to $n\ge2$ and the last inequality is by the assumption $n\bar{\alpha}\le\frac12$. Lastly, we plug the above in \eqref{eq:Y_expectation} to obtain
	\begin{equation}\label{eq:abs_value_expec_bound}
	~\E\pcc{\abs{\sum_{j=1}^{n}\beta_{\sigma(j)}\prod_{i=j+1}^{n}(1-\alpha_{\sigma(i)})}} ~\le~ 2n\bar{\alpha}~.
	\end{equation}
	Assuming $n\bar{\alpha}>\frac12$, we have from \lemref{lem:keyup} with probability at least $1-\frac{1}{n\sqrt{2\bar{\alpha}}}>0$ that
	\begin{align}
	~\p{\sum_{j=1}^{n}\beta_{\sigma(j)}\prod_{i=j+1}^{n}(1-\alpha_{\sigma(i)})}^2 ~&\le~ c\cdot\log^2\left(\sqrt{2\bar{\alpha}}\cdot8n^{2}\right)\cdot \min\left\{\frac{1}{\bar{\alpha}}~,~n^3\bar{\alpha}^2\right\}\nonumber\\
	&\le~ c\cdot\log^2\left(\sqrt{2\bar{\alpha}}\cdot8n^{2}\right)\cdot \frac{1}{\bar{\alpha}}~.\label{eq:whp_square_bound}
	\end{align}
	Compute using the law of total expectation and the square root of the above equation, using the fact that the square root of the above quantity is deterministically upper bounded by $n$ due to the assumptions $\beta_j\le1$ and $\alpha_j\le1$
	\begin{align*}
	~\E\pcc{\abs{\sum_{j=1}^{n}\beta_{\sigma(j)}\prod_{i=j+1}^{n}(1-\alpha_{\sigma(i)})}} ~&\le~ n\cdot\frac{1}{n\sqrt{2\bar{\alpha}}} + \sqrt{c}\cdot\log\left(\sqrt{2\bar{\alpha}}\cdot8n^{2}\right)\cdot \frac{1}{\sqrt{\bar{\alpha}}} \p{1-\frac{1}{n\sqrt{2\bar{\alpha}}}}~\\
	&\le~ \cdot\frac{1}{\sqrt{2\bar{\alpha}}} + \log\left(\sqrt{2\bar{\alpha}}\cdot8n^{2}\right)\cdot \frac{\sqrt{c}}{\sqrt{\bar{\alpha}}} ~\le~ \frac{c_1\log\left(\sqrt{2\bar{\alpha}}\cdot8n^{2}\right)}{\sqrt{\bar{\alpha}}}~,
	\end{align*}
	for some constant $c_1>0$. Combining the above with \eqref{eq:abs_value_expec_bound} completes the first part of the proposition. Moving to the second assuming $n\bar{\alpha}\le\frac12$, we have again from \lemref{lem:keyup} with probability at least $1-n\bar{\alpha}^2>0$ that
	\begin{equation*}
	~\p{\sum_{j=1}^{n}\beta_{\sigma(j)}\prod_{i=j+1}^{n}(1-\alpha_{\sigma(i)})}^2 ~\le~ c\cdot\log^2\left(\frac{8}{n\bar{\alpha}}\right)\cdot \min\left\{\frac{1}{\bar{\alpha}}~,~n^3\bar{\alpha}^2\right\}\nonumber~.
	\end{equation*}
	From the law of total expectation, the above, and the fact that the quantity above is deterministically upper bounded by $n^2$ due to the assumptions $\beta_j\le1$ and $\alpha_j\le1$, we have
	\begin{align}
	~\E\pcc{\p{\sum_{j=1}^{n}\beta_{\sigma(j)}\prod_{i=j+1}^{n}(1-\alpha_{\sigma(i)})}^2} ~&\le~ n^2\cdot n\bar{\alpha}^2 + c\cdot\log^2\left(\frac{8}{n\bar{\alpha}}\right)\cdot \min\left\{\frac{1}{\bar{\alpha}}~,~n^3\bar{\alpha}^2\right\}\cdot\p{1-n\bar{\alpha}^2}~\nonumber\\
	&\le~ n^3\bar{\alpha}^2 + c\cdot\log^2\left(\frac{8}{n\bar{\alpha}}\right)\cdot n^3\bar{\alpha}^2 = c_2\cdot\log^2\left(\frac{8}{n\bar{\alpha}}\right)\cdot n^3\bar{\alpha}^2~,\label{eq:square_expec_bound}
	\end{align}	
	for some constant $c_2>0$. Likewise, assuming $n\bar{\alpha}>\frac12$, we have from \lemref{lem:keyup} with probability at least $1-\frac{1}{n^2\bar{\alpha}}>0$
	\begin{equation*}
	~\p{\sum_{j=1}^{n}\beta_{\sigma(j)}\prod_{i=j+1}^{n}(1-\alpha_{\sigma(i)})}^2 ~\le~ c\cdot\log^2\left(8n^2\bar{\alpha}^2\right)\cdot \min\left\{\frac{1}{\bar{\alpha}}~,~n^3\bar{\alpha}^2\right\}\nonumber~.
	\end{equation*}
	From the law of total expectation, the above, and the fact that the quantity above is deterministically upper bounded by $n^2$ due to the assumptions $\beta_j\le1$ and $\alpha_j\le1$, we have
	\begin{align*}
	~\E\pcc{\p{\sum_{j=1}^{n}\beta_{\sigma(j)}\prod_{i=j+1}^{n}(1-\alpha_{\sigma(i)})}^2} ~&\le~ n^2\cdot \frac{1}{n^2\bar{\alpha}} + c\cdot\log^2\left(8n^2\bar{\alpha}^2\right)\cdot \min\left\{\frac{1}{\bar{\alpha}}~,~n^3\bar{\alpha}^2\right\}\cdot\p{1-\frac{1}{n^2\bar{\alpha}}}~\\
	&\le~ \frac{1}{\bar{\alpha}} + c\cdot\log^2\left(8n^2\bar{\alpha}^2\right)\cdot \frac{1}{\bar{\alpha}} = c_3\cdot\frac{\log^2\left(8n^2\bar{\alpha}^2\right)}{\bar{\alpha}}~,
	\end{align*}
	for some constant $c_3>0$. Combining the above with \eqref{eq:square_expec_bound} completes the proof of the proposition.
\end{proof}

\subsection{Proof of \lemref{lem:keyup}}\label{subsec:lemkeyupproof}
\begin{proof}	
	We will upper bound the expression $\left(\sum_{j=1}^{n}\beta_{\sigma(j)}\prod_{i=j+1}^{n}(1-\alpha_{\sigma(i)}\right)^2$ in two different manners. Taking the minimum of the two will lead to the desired bound. 
	
	First, using summation by parts and the fact that $\sum_{j=1}^{n}\beta_{\sigma(j)}=0$, we have that
	\begin{align*}
	\left|\sum_{j=1}^{n}\beta_{\sigma(j)}\prod_{i=j+1}^{n}(1-\alpha_{\sigma(i)})\right|
	&=~ \left|\sum_{j=1}^{n}\beta_{\sigma(j)}-\sum_{j=1}^{n-1}\left(\prod_{i=j+2}^{n}(1-\alpha_{\sigma(i)})-\prod_{i=j+1}^{n}(1-\alpha_{\sigma(i)})\right)\sum_{i=1}^{j}\beta_{\sigma(i)}\right|\\
	&=~ \left|\sum_{j=1}^{n-1}\alpha_{\sigma(j+1)}\prod_{i=j+2}^{n}(1-\alpha_{\sigma(i)})\sum_{i=1}^{j}\beta_{\sigma(i)}\right|~\leq~
	\sum_{j=1}^{n-1}\alpha_{\sigma(j+1)}\left|\sum_{i=1}^{j}\beta_{\sigma(i)}\right|~.
	\end{align*}
	By the Hoeffding-Serfling bound and a union bound, we have that with probability at least $1-\delta$, it holds simultaneously for all $j\in \{1,\ldots,n\}$ that $|\sum_{i=1}^{j}\beta_{\sigma(i)}|\leq \sqrt{\log(2n/\delta)j/2}\leq \sqrt{\log(2n/\delta)n/2}$. Plugging into the above, we get that with probability at least $1-\delta$, 
	\begin{align}
	&\left(\sum_{j=1}^{n}\beta_{\sigma(j)}\prod_{i=j+1}^{n}(1-\alpha_{\sigma(i)})\right)^2~\leq~\left(\sum_{j=1}^{n-1}\alpha_{\sigma(j+1)}\sqrt{\frac{\log(2n/\delta)n}{2}}\right)^2\notag\\
	&~~~~~\leq~ \left(n\bar{\alpha}\cdot \sqrt{\frac{\log(2n/\delta)n}{2}}\right)^2
	~=~ \frac{\log(2n/\delta)}{2}\cdot n^3\bar{\alpha}^2~.\label{eq:firstbound} 
	\end{align}
	
	We now turn to upper bound the expression $\left(\sum_{j=1}^{n}\beta_{\sigma(j)}\prod_{i=j+1}^{n}(1-\alpha_{\sigma(i)}\right)^2$ in a different manner. To that end, define the index
	\begin{equation}\label{eq:rdef}
	r~:=~\min\left\{n~,~\left\lceil \frac{6\log(n/\delta)}{\bar{\alpha}}\right\rceil\right\}~\in~ \{1,\ldots,n\}~.
	\end{equation}
	We first show that $\sum_{j=1}^{n}\beta_{\sigma(j)}\prod_{i=j+1}^{n}(1-\alpha_{\sigma(i)})$ is close to $\sum_{j=n-r+1}^{n}\beta_{\sigma(j)}\prod_{i=j+1}^{n}(1-\alpha_{\sigma(i)})$ (namely, where we sum only the last $r$ terms). This is trivially true if $r=n$, so let us focus on the case $r<n$, in which case $r=\left\lceil \frac{6\log(n/\delta)}{\bar{\alpha}}\right\rceil$. We begin by noting that
	\[
	\prod_{i=1}^{r}(1-\alpha_{\sigma(i)})~=~\exp\left(\sum_{i=1}^{r}\log(1-\alpha_{\sigma(i)})\right)~\leq~ \exp\left(-\sum_{i=1}^{r}\alpha_{\sigma(i)}\right)~.
	\]
	Noting that $\frac{1}{n}\sum_{i=1}^{n}\alpha_i^2\leq \frac{1}{n}\sum_{i=1}^{n}\alpha_i=\bar{\alpha}$, and using Bernstein's inequality (applied to sampling without replacement, see for example \citet[Corollary 3.6]{bardenet2015concentration}), we have that for any $r\in \{1,\ldots,n\}$, with probability at least $1-\delta$,  it holds that
	\[
	\frac{1}{r}\sum_{i=1}^{r}\alpha_{\sigma(i)}~\geq~ \bar{\alpha}-\sqrt{\frac{2\bar{\alpha}\log(1/\delta)}{r}}-\frac{\log(n/\delta)}{r}~\geq~ \frac{\bar{\alpha}}{2}-\frac{3\log(1/\delta)}{r}~,
	\]
	where in the last inequality we used the fact $\sqrt{2xy}\leq \frac{x}{2}+2y$ for $x,y\geq 0$. Plugging into the previous displayed equation, we have that with probability at least $1-\delta$,
	\[
	\prod_{i=1}^{r}(1-\alpha_{\sigma(i)})~\leq~\exp\left(-\frac{r\bar{\alpha}}{2}+3\log\left(\frac{1}{\delta}\right)\right)~.
	\]
	Recalling that we assume $r=\left\lceil \frac{6\log(n/\delta)}{\bar{\alpha}}\right\rceil\geq \frac{6\log(n/\delta)}{\bar{\alpha}}$ and plugging into the above, it follows that
	\[
	\prod_{i=1}^{r}(1-\alpha_{\sigma(i)})\leq \exp(-3\log(n))~=~\frac{1}{n^3}~.
	\]
	Since $\sigma$ is a permutation, the same upper bound holds with the same probability for $\prod_{i=n-r+1}^{n}(1-\alpha_{\sigma(i)})$. Thus, we have that with probability at least $1-\delta$,
	\begin{align}
	\left|\sum_{j=1}^{n}\beta_{\sigma(j)}\prod_{i=j+1}^{n}(1-\alpha_{\sigma(i)})\right|&~\leq~\left|\sum_{j=n-r+1}^{n}\beta_{\sigma(j)}\prod_{i=j+1}^{n}(1-\alpha_{\sigma(i)})\right|+\sum_{j=1}^{n-r}|\beta_{\sigma(j)}|\prod_{i=j+1}^{n}(1-\alpha_{\sigma(i)})\notag\\
	&\leq~ \left|\sum_{j=n-r+1}^{n}\beta_{\sigma(j)}\prod_{i=j+1}^{n}(1-\alpha_{\sigma(i)})\right|+\sum_{j=1}^{n-r} 1\cdot \prod_{i=n-r+1}^{n}(1-\alpha_{\sigma(i)})\notag\\
	&\leq~
	\left|\sum_{j=n-r+1}^{n}\beta_{\sigma(j)}\prod_{i=j+1}^{n}(1-\alpha_{\sigma(i)})\right|+\sum_{j=1}^{n-r} 1\cdot \frac{1}{n^3}\notag\\
	&~\leq~\left|\sum_{j=n-r+1}^{n}\beta_{\sigma(j)}\prod_{i=j+1}^{n}(1-\alpha_{\sigma(i)})\right|+\frac{1}{n^2}~.\label{eq:bound1}
	\end{align}
	We showed this assuming $r<n$, but the same overall inequality trivially also holds for $r=n$ (with probability $1$). Therefore, the inequality holds regardless of the value of $r$ (as defined in \eqref{eq:rdef}).
	
	To further upper bound this, we note that every term $\beta_{\sigma(j)}\prod_{i=j+1}^{n}(1-\alpha_{\sigma(i)})$ in the sum above has magnitude at most $1$. Applying Azuma's inequality on the martingale difference sequence $\beta_{\sigma(j)}\prod_{i=j+1}^{n}(1-\alpha_{\sigma(i)})-\E\left[\beta_{\sigma(j)}\prod_{i=j+1}^{n}(1-\alpha_{\sigma(i)})\middle|\sigma(j+1),\ldots,\sigma(n)\right]$ (indexed by $j$ going down from $n$ to $n-r+1$), we have that with probability at least $1-\delta$,
	\begin{align}
	&\left|\sum_{j=n-r+1}^{n}\left(\beta_{\sigma(j)}\prod_{i=j+1}^{n}(1-\alpha_{\sigma(i)})-\E\left[\beta_{\sigma(j)}\prod_{i=j+1}^{n}(1-\alpha_{\sigma(i)})~\Big|~\sigma(j+1),\ldots,\sigma(n)\right]\right)\right|\notag\\
	&\;\;\;\;\;\;\;~\leq~ \sqrt{2r\log\left(\frac{2}{\delta}\right)}~.\label{eq:bound2}
	\end{align}
	Furthermore, since $\sigma$ is a random permutation and $\frac{1}{n}\sum_{i=1}^{n}\beta_i=0$, the following holds with probability at least $1-\delta$ simultaneously for all $j\in \{1,\ldots,n\}$, by the Hoeffding-Serfling bound and a union bound:
	\begin{align*}
	\left|\E\left[\beta_{\sigma(j)}\prod_{i=j+1}^{n}(1-\alpha_{\sigma(i)})~\Big|~\sigma(j+1),\ldots,\sigma(n)\right]\right|~&=~\left|\prod_{i=j+1}^{n}(1-\alpha_{\sigma(i)})\cdot \frac{1}{j}\sum_{i\in \{1,\ldots,n\}\setminus\{\sigma(j+1),\ldots,\sigma(n)\}}\beta_{i}\right|\\
	&\leq~ \left|\frac{1}{j}\sum_{i\in \{1,\ldots,n\}\setminus\{\sigma(j+1),\ldots,\sigma(n)\}}\beta_{i}\right|\\
	&\leq~ \sqrt{\frac{\log(2n/\delta)}{2j}}~.
	\end{align*}
	Combining the above together with \eqref{eq:bound1} and \eqref{eq:bound2} (using a union bound), we get overall that with probability at least $1-3\delta$,
	\[
	\left|\sum_{j=1}^{n}\beta_{\sigma(j)}\prod_{i=j+1}^{n}(1-\alpha_{\sigma(i)})\right|~\leq~ \frac{1}{n^2}+\sqrt{2r\log\left(\frac{2}{\delta}\right)}+\sum_{j=n-r+1}^{n}\sqrt{\frac{\log(2n/\delta)}{2j}}~.
	\]
	Noting\footnote{To see this, note that if $r\geq n/2$, then 
		$\sum_{j=n-r+1}^{n}\sqrt{\frac{1}{j}}\leq \sum_{j=1}^{n}\sqrt{\frac{1}{j}}\leq 2\sqrt{n}\leq 2\sqrt{2r}$, and if $r<n/2$, then $\sum_{j=n-r+1}^{n}\sqrt{\frac{1}{j}}\leq \frac{r}{\sqrt{n-r+1}}\leq \frac{r}{\sqrt{n-n/2+1}}\leq \frac{r}{\sqrt{n/2}}\leq \frac{r}{\sqrt{r}}=\sqrt{r}$.} that $\sum_{j=n-r+1}^{n}\sqrt{\frac{1}{j}}\leq 2\sqrt{2r}$, plugging into the above and simplifying a bit, we get that with probability at least $1-3\delta$,
	\[
	\left|\sum_{j=1}^{n}\beta_{\sigma(j)}\prod_{i=j+1}^{n}(1-\alpha_{\sigma(i)})\right|~\leq~5\sqrt{r\cdot\log(2n/\delta)}~.
	\]
	Squaring both sides, plugging in the definition of $r$ and further simplifying a bit, we get that with probability at least $1-3\delta$,
	\[
	\left(\sum_{j=1}^{n}\beta_{\sigma(j)}\prod_{i=j+1}^{n}(1-\alpha_{\sigma(i)})\right)^2~\leq~ c'\log^2\left(\frac{2n}{\delta}\right)\cdot\min\left\{n,\frac{1}{\bar{\alpha}}\right\}
	\]
	for some universal constant $c'>1$. Combining this with \eqref{eq:firstbound} using a union bound, we have that with probability at least $1-4\delta$, 
	\[
	\left(\sum_{j=1}^{n}\beta_{\sigma(j)}\prod_{i=j+1}^{n}(1-\alpha_{\sigma(i)})\right)^2~\leq~
	c'\cdot\log^2\left(\frac{2n}{\delta}\right)\cdot\min\left\{n,\frac{1}{\bar{\alpha}},n^3\bar{\alpha}^2\right\}.
	\]
	Finally, noting that $\min\left\{n,\frac{1}{\bar{\alpha}},n^3\bar{\alpha}^2\right\}=\min\left\{\frac{1}{\bar{\alpha}},n^3\bar{\alpha}^2\right\}$, and letting $\delta':=4\delta$, we get that with probability at least $1-\delta'$, the expression above is at most $c_3\log^2(8n/\delta')\min\left\{\frac{1}{\bar{\alpha}},n^3\bar{\alpha}^2\right\}$ as required.
\end{proof}

\subsection{Remaining Technical Proofs}

\begin{lemma}\label{lem:deterministic_prod}
	Suppose $\sigma_0,\ldots,\sigma_{n-1}$ is a permutation of $\frac n2$ $0$'s and $\frac n2$ $1$'s, and that $\eta\le\frac{1}{\lambdamax n}$. Then
	\[
	~\prod_{i=0}^{n-1}(1-\eta\lambdamax\sigma_i) ~\ge~ 1-\frac{\eta\lambdamax n}{2} ~\ge~ \frac12~.
	\]
\end{lemma}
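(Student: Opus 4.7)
The plan is to exploit the combinatorial structure of the permutation: since each $\sigma_i\in\{0,1\}$ and exactly half of them equal $1$, any factor with $\sigma_i=0$ contributes $1$ to the product, while each factor with $\sigma_i=1$ contributes $1-\eta\lambdamax$. Thus the product collapses to a deterministic quantity independent of the permutation:
\[
\prod_{i=0}^{n-1}(1-\eta\lambdamax\sigma_i)~=~(1-\eta\lambdamax)^{n/2}.
\]

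Next, I would apply Bernoulli's inequality. The hypothesis $\eta\le 1/(\lambdamax n)$ guarantees $\eta\lambdamax\in[0,1/n]\subseteq[0,1]$, so Bernoulli's inequality $(1-x)^m\ge 1-mx$ with $x=\eta\lambdamax$ and $m=n/2$ yields
\[
(1-\eta\lambdamax)^{n/2}~\ge~1-\frac{\eta\lambdamax n}{2},
\]
which establishes the first inequality of the lemma.

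For the second inequality, I simply use the hypothesis $\eta\lambdamax n\le 1$ directly, which gives $1-\eta\lambdamax n/2\ge 1/2$. Combining the two bounds finishes the proof. No step here is a real obstacle; the only subtlety worth noting is that $n$ is assumed even throughout the relevant proofs (recall both lower bound proofs reduce to this case without loss of generality), so $n/2$ is an integer and Bernoulli's inequality applies cleanly. If $n$ were odd one could instead replace $n/2$ by $\lfloor n/2\rfloor$ or $\lceil n/2\rceil$ with no material change, since the looser constant is absorbed by the assumption on $\eta$.
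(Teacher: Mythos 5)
Your proof is correct and takes essentially the same approach as the paper, which simply invokes Bernoulli's inequality together with the assumption $\eta\le 1/(\lambdamax n)$. Your preliminary observation that the product collapses deterministically to $(1-\eta\lambdamax)^{n/2}$ is a clean way to phrase it, but it amounts to the same Bernoulli argument.
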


\begin{proof}
	The proof follows immediately from Bernoulli's inequality and the assumption $\eta\le\frac{1}{\lambdamax n}$.
\end{proof}

\begin{lemma}\label{lem:prod_expect}
	Suppose $\sigma_0,\ldots,\sigma_{n-1}$ is a random permutation of $\frac n2$ $0$'s and $\frac n2$ $1$'s. 
	Then for all $m\in\{1,\ldots,n-1\}$
	\[
	~\E_{\sigma}\pcc{(1-2\sigma_0)\prod_{i=1}^{m}\sigma_i} ~=~ \frac12\binom{n/2-1}{m-1}\binom{n-1}{m}^{-1}~.
	\]
\end{lemma}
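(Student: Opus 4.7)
The plan is to reduce the expectation to a simple counting problem and then massage the resulting binomial expression into the desired form. Since each $\sigma_i\in\{0,1\}$, the product $\prod_{i=1}^m \sigma_i$ is an indicator: it equals $1$ if $\sigma_1=\cdots=\sigma_m=1$, and $0$ otherwise. Consequently, $(1-2\sigma_0)$ simply contributes a factor of $+1$ when $\sigma_0=0$ and $-1$ when $\sigma_0=1$, so
\[
\E\pcc{(1-2\sigma_0)\prod_{i=1}^m \sigma_i} = \Pr[\sigma_0=0,\sigma_1=\cdots=\sigma_m=1]-\Pr[\sigma_0=1,\sigma_1=\cdots=\sigma_m=1].
\]

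Next, I would compute each probability by straightforward counting. The total number of permutations of the multiset is $\binom{n}{n/2}$. For the first probability, fixing $\sigma_0=0$ and $\sigma_1=\cdots=\sigma_m=1$ consumes one $0$ and $m$ ones, leaving $n/2-1$ zeros and $n/2-m$ ones to be distributed freely among the remaining $n-m-1$ positions, giving $\binom{n-m-1}{n/2-1}$ arrangements. For the second probability, fixing all of $\sigma_0,\sigma_1,\ldots,\sigma_m$ equal to $1$ consumes $m+1$ ones, leaving $\binom{n-m-1}{n/2}$ arrangements. Hence the expectation equals $\left(\binom{n-m-1}{n/2-1}-\binom{n-m-1}{n/2}\right)\big/\binom{n}{n/2}$.

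Finally, I would simplify. Factoring out $\binom{n-m-1}{n/2-1}$ and using that $\binom{n-m-1}{n/2}/\binom{n-m-1}{n/2-1}=(n/2-m)/(n/2)$, the numerator becomes $\binom{n-m-1}{n/2-1}\cdot(2m/n)$. Expanding factorials on both sides, one checks
\[
\frac{2m}{n}\cdot\frac{\binom{n-m-1}{n/2-1}}{\binom{n}{n/2}} = \frac{m\,(n-m-1)!\,(n/2-1)!}{2\,(n/2-m)!\,(n-1)!} = \frac12\binom{n/2-1}{m-1}\binom{n-1}{m}^{-1},
\]
matching the right-hand side. The only ``obstacle'' is the routine factorial bookkeeping at this final step, but once the two binomial coefficients in the numerator are combined via the ratio identity above, the simplification is direct because $(n/2)!/n! = 1/(2(n-1)!) \cdot (n/2-1)!\cdot(2/n)\cdot n$ reduces cleanly. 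No probabilistic or analytic subtlety is involved beyond reading off the counts.
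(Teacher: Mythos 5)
Your proof is correct and takes essentially the same approach as the paper's: both split on the value of $\sigma_0$ and reduce to computing the hypergeometric probability that $\sigma_1=\cdots=\sigma_m=1$, differing only in bookkeeping (you count arrangements as $\binom{n-m-1}{\cdot}/\binom{n}{n/2}$, whereas the paper multiplies sequential conditional probabilities). The displayed equality chain is right; only the informal aside at the very end garbles the factorial ratio, since in fact $(n/2)!/n! = (n/2-1)!/(2(n-1)!)$, not what is written, but this does not affect the argument.
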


\begin{proof}
	Compute
	\begin{align*}
	~\E\pcc{(1-2\sigma_0)\prod_{i=1}^{m}\sigma_i} ~&=~ \frac12\E\pcc{\prod_{i=1}^{m}\sigma_i\Big|\sigma_0=0} - \frac12\E\pcc{\prod_{i=1}^{m}\sigma_i\Big|\sigma_0=1}\\
	&=~ \frac12\Pr\pcc{\sigma_1=\ldots=\sigma_m=1\Big|\sigma_0=0} - \frac12\Pr\pcc{\sigma_1=\ldots=\sigma_m=1\Big|\sigma_0=1}\\
	&=~ \frac12\p{\frac{n/2\cdot(n/2-1)\cdot\ldots\cdot(n/2-m+1)}{(n-1)\cdot(n-2)\cdot\ldots\cdots(n-m)}}\\
	&~~~~~~~-\frac12\p{\frac{(n/2-1)\cdot(n/2-2)\cdot\ldots\cdot(n/2-m)}{(n-1)\cdot(n-2)\cdot\ldots\cdot(n-m)}}\\
	&=~ \frac12\p{\frac{(n/2-1)\cdot(n/2-2)\cdot\ldots\cdot(n/2-m+1)}{(n-1)\cdot(n-2)\cdot\ldots\cdot(n-m)}}m\\
	&=~ \frac12\cdot\frac{(n/2-1)!}{(n/2-m)!}\cdot\frac{(n-m-1)!}{(n-1)!}m\\
	&=~\frac12\binom{n/2-1}{m-1}\binom{n-1}{m}^{-1}~.
	\end{align*}
\end{proof}

\begin{lemma}\label{lem:sum_prod_expect}
	Suppose $\sigma_0,\ldots,\sigma_{n-1}$ is a random permutation of $\frac n2$ $0$'s and $\frac n2$ $1$'s and $\eta\le\frac{1}{\lambdamax n}$.
	Then
	\[
	~\E\pcc{\sum_{i=0}^{n-1}(1-2\sigma_i) \prod_{j=i+1}^{n-1} (1-\eta\lambdamax\sigma_j)} ~\le~ -\frac18\eta\lambdamax n~. 
	\]
\end{lemma}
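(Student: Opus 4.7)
Set $p:=\eta\lambda_{\max}$; by assumption $p\le 1/n$. The plan is to derive a closed-form expression for the expectation using Lemma~\ref{lem:prod_expect} and then estimate it via an integral representation together with Bernoulli's inequality. First, I would use the exchangeability of $\sigma_0,\ldots,\sigma_{n-1}$ to reduce each summand to
\[
\E\pcc{(1-2\sigma_i)\prod_{j=i+1}^{n-1}(1-p\sigma_j)} \;=\; \E\pcc{(1-2\sigma_0)\prod_{j=1}^{k}(1-p\sigma_j)},\qquad k:=n-1-i.
\]
Since $\sigma_j\in\{0,1\}$, expanding the product gives
\[
\prod_{j=1}^{k}(1-p\sigma_j) \;=\; \sum_{m=0}^{k}(-p)^{m}\, e_m(\sigma_1,\ldots,\sigma_k),
\]
and exchangeability yields $\E[(1-2\sigma_0)\,e_m(\sigma_1,\ldots,\sigma_k)]=\binom{k}{m}\E[(1-2\sigma_0)\sigma_1\cdots\sigma_m]$. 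The $m=0$ contribution vanishes, and Lemma~\ref{lem:prod_expect} evaluates the $m\ge 1$ contributions to $\tfrac12\binom{n/2-1}{m-1}\binom{n-1}{m}^{-1}$.

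Next, I would sum over $k=0,1,\ldots,n-1$, swap the order of summation, apply the hockey-stick identity $\sum_{k=m}^{n-1}\binom{k}{m}=\binom{n}{m+1}$, and use the elementary cancellation $\binom{n}{m+1}\binom{n-1}{m}^{-1}=n/(m+1)$. Reindexing $\ell:=m-1$ and observing that $\binom{n/2-1}{m-1}$ vanishes outside $m\in\{1,\ldots,n/2\}$, the full expectation collapses to
\[
-\frac{np}{2}\sum_{\ell=0}^{n/2-1}(-p)^{\ell}\binom{n/2-1}{\ell}\frac{1}{\ell+2}.
\]

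To conclude, use the integral representation $\frac{1}{\ell+2}=\int_0^1 x^{\ell+1}\,dx$, so the inner sum equals $\int_0^1 x(1-px)^{n/2-1}\,dx$. Bernoulli's inequality gives $(1-px)^{n/2-1}\ge 1-(n/2-1)px$, and $p\le 1/n$ yields $(n/2-1)p\le 1/2$, so
\[
\int_0^1 x(1-px)^{n/2-1}\,dx \;\ge\; \frac12-\frac{(n/2-1)p}{3} \;\ge\; \frac13 \;\ge\; \frac14.
\]
Therefore the expectation is at most $-\tfrac{np}{2}\cdot\tfrac14 = -\tfrac18\eta\lambda_{\max}n$, as required.

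The main obstacle is the algebraic bookkeeping in the second step: tracking signs and indices through the expansion, the double-sum interchange, and the binomial simplifications to arrive at the clean form $\int_0^1 x(1-px)^{n/2-1}\,dx$. Each individual manipulation is elementary, but the collapse hinges on precisely the hockey-stick identity paired with $\binom{n}{m+1}/\binom{n-1}{m}=n/(m+1)$. Once there, the Bernoulli estimate proceeds with comfortable slack, in fact delivering $-np/6$ rather than the required $-np/8$.
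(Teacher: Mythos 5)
Your proof is correct, and it takes a genuinely different route from the paper. Both proofs start from the same expansion (the binomial-like expansion of each product term together with \lemref{lem:prod_expect}), but they diverge immediately after that. The paper bounds each $\E[Y_i]$ individually: it shows the ratio of consecutive terms in the alternating sum is at most $\tfrac12\eta\lambdamax n\le\tfrac12$, invokes an alternating-series estimate to get $\E[Y_i]\le-\tfrac14\eta\lambdamax\frac{n-i-1}{n-1}$, and then sums over $i$. You instead sum over $i$ (equivalently $k=n-1-i$) \emph{first}, which lets the hockey-stick identity $\sum_{k=m}^{n-1}\binom{k}{m}=\binom{n}{m+1}$ and the cancellation $\binom{n}{m+1}/\binom{n-1}{m}=n/(m+1)$ collapse the entire double sum into the single clean quantity $-\tfrac{np}{2}\int_0^1 x(1-px)^{n/2-1}\,dx$; the integral is then dispatched with Bernoulli's inequality. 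Your route buys a closed-form intermediate expression and avoids having to control the tail of an alternating series term by term, and it incidentally yields the slightly stronger constant $-np/6$ (you even note this). The paper's route is more modular — the per-$i$ bound $\E[Y_i]\le-\tfrac14\eta\lambdamax\frac{n-i-1}{n-1}$ is a reusable intermediate fact — but its alternating-series step is arguably more delicate to justify rigorously than your integral estimate. Both are valid; one small thing worth stating explicitly in your write-up is that $\binom{n/2-1}{m-1}$ vanishes for $m>n/2$, which is what lets the range of $m$ (or $\ell$) be truncated to $n/2$ before passing to the binomial theorem — you do mention this, and it also ensures the application of \lemref{lem:prod_expect} is vacuously consistent when $m>n/2$.
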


\begin{proof}
	Denote $ Y_i \coloneqq (1-2\sigma_i)\p{\prod_{j=i+1}^{n-1}\p{1-\eta\lambdamax\sigma_j}} $, we expand $ Y_i $ to obtain
	\begin{align*}
	\E\pcc{Y_i} ~&=~ 
	\E\pcc{1-2\sigma_i} + \sum_{m=1}^{n-i}(-\eta\lambdamax)^m \E\pcc{\sum_{i+1\le i_1,\ldots,i_m\le n-1\text{ distinct}}~(1-2\sigma_i)\p{\prod_{l=1}^{m}\sigma_{i_l}}} \nonumber\\
	~&=~\sum_{m=1}^{n-i-1}(-\eta\lambdamax)^m \binom{n-i-1}{m} \E\pcc{ (1-2\sigma_0)\prod_{l=1}^{m}\sigma_l}\\
	~&=~ \frac12 \sum_{m=1}^{n-i-1}(-\eta\lambdamax)^m \binom{n-i-1}{m} \binom{n/2-1}{m-1}\binom{n-1}{m}^{-1},
	\end{align*}
	where the last equality is by \lemref{lem:prod_expect}. Denote the $m$-th summand by $a_m$, we bound the quotient of two subsequent terms in the above sum by using the assumption $\eta\le\frac{1}{\lambdamax n}$, so we get for any $m\ge1$
	\[
	~\abs{\frac{a_{m+1}}{a_m}} ~\le~ \eta\lambdamax\frac{(n-2m)(n-m-i-1)}{2m(n-m-1)} ~\le~ \frac12\eta\lambdamax n ~\le~ \frac12~,
	\]
	thus the above sum which alternates signs and begins with a negative term is upper bounded by
	\[
	\frac12(a_1+a_2) ~\le~ \frac14a_1 ~=~ -\frac14\eta\lambdamax\frac{n-i-1}{n-1}~, 
	\]
	where we conclude with
	\begin{align*}
	~\E\pcc{\sum_{i=0}^{n-1}(1-2\sigma_i) \prod_{j=i+1}^{n-1} (1-\eta\lambdamax\sigma_j)} ~&=~ \sum_{i=0}^{n-1}\E\pcc{Y_i} ~\le~ -\frac14\eta\lambdamax\sum_{i=0}^{n-1}\frac{n-i-1}{n-1}\\
	&=~ -\frac18\eta\lambdamax\frac{n(n-1)}{n-1} ~=~ -\frac18\eta\lambdamax n~.
	\end{align*}
\end{proof}

\section{Equivalence of Optimization Under Conjugate Transformations}\label{app:conj}

Suppose we are given an orthogonal matrix $O$, an initialization point $\bx_0\in\reals^d$ and an optimization problem $F(\bx)\coloneqq\frac12\bx^{\top}A\bx-\bb^{\top}\bx=\frac1n\sum_{i=1}^{n}f_i(\bx)$ where $f_i(\bx)=\frac12\bx^{\top}A_i\bx-\bb_i^{\top}\bx$. Define the $O$-conjugate optimization problem as $\tilde{F}(\bx)\coloneqq\frac12\bx^{\top}\tilde{A}\bx - \tilde{\bb}^{\top}\bx=\frac1n\sum_{i=1}^{n}\tilde{f}_i(\bx)$ where $\tilde{f}_i(\bx)=\frac12\bx^{\top}\tilde{A}_i\bx-\tilde{\bb}_i^{\top}\bx$, initialized from $\tilde{\bx}_0$, whereby $\tilde{A}_i,\tilde{\bb}_i,\tilde{\bx}_0$ are defined using the following transformations:
\[
~\tilde{A}_i ~\coloneqq~ OAO^{\top}~, ~~~~~\tilde{\bb}_i ~\coloneqq~ O\bb_i~, ~~~~~\tilde{\bx}_0 ~\coloneqq~ O\bx_0~.
\]
In this appendix, we show that the $O$-conjugate optimization problem is equivalent in terms of the sub-optimality rate of without-replacement SGD. More formally, we have the following theorem:
\begin{theorem}
	Suppose we have $F$, $\tilde{F}$ and $O$ as above. Let $\bx_t$ and $\tilde{\bx}_t$ denote the iterate after performing $t$ steps of without-replacement SGD. Then
	\[
	~\tilde{\bx}_t ~=~ O\bx_t~.
	\]
	And in particular, we have that
	\[
	~\tilde{F}(\tilde{\bx}_t) ~=~ F(\bx_t)~.
	\]
	Moreover, if $F$ satisfies Assumption \ref{as:upper_bounds} then so does $\tilde{F}$.
\end{theorem}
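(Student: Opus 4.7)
The plan is to prove all three parts by a single straightforward induction together with routine linear-algebra verifications, exploiting only the identity $O^{\top}O = OO^{\top} = I$ satisfied by any orthogonal matrix.

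First, I would establish the iterate identity $\tilde{\bx}_t = O\bx_t$ by induction on $t$. The base case $t=0$ is just the definition $\tilde{\bx}_0 := O\bx_0$. For the inductive step, the SGD update rule gives
\[
\bx_{t+1} = \bx_t - \eta \nabla f_{\sigma(t)}(\bx_t) = \bx_t - \eta\bigl(A_{\sigma(t)} \bx_t - \bb_{\sigma(t)}\bigr),
\]
and similarly for $\tilde{\bx}_{t+1}$ with the tilded quantities (assuming both runs use the same permutation $\sigma$, which is the natural coupling since the sampling scheme is oblivious to the objective). Multiplying the first update by $O$ on the left and inserting $I = O^{\top}O$ between $A_{\sigma(t)}$ and $\bx_t$ gives
\[
O\bx_{t+1} = O\bx_t - \eta\bigl(O A_{\sigma(t)} O^{\top} (O \bx_t) - O\bb_{\sigma(t)}\bigr) = \tilde{\bx}_t - \eta\bigl(\tilde{A}_{\sigma(t)} \tilde{\bx}_t - \tilde{\bb}_{\sigma(t)}\bigr) = \tilde{\bx}_{t+1},
\]
using the inductive hypothesis in the middle step. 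The objective-value equality $\tilde{F}(\tilde{\bx}_t) = F(\bx_t)$ then follows from a one-line calculation: for any $\bx$,
\[
\tilde{F}(O\bx) = \tfrac12 (O\bx)^{\top} OAO^{\top} (O\bx) - (O\bb)^{\top}(O\bx) = \tfrac12 \bx^{\top} A \bx - \bb^{\top}\bx = F(\bx),
\]
again using $O^{\top}O = I$.

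For the preservation of Assumption~\ref{as:upper_bounds}, I would check each clause in turn. Conjugation by an orthogonal matrix preserves symmetry, positive semidefiniteness, and the entire spectrum, so $\tilde{A}$ inherits $\lambda$-strong convexity and the bound $\|\tilde{A}\|_{\textnormal{sp}} = \lambdamax$ from $A$, and each $\tilde{A}_i$ inherits symmetry, PSD-ness, and $\|\tilde{A}_i\|_{\textnormal{sp}} \le L$ from $A_i$. Commutativity is preserved because $\tilde{A}_i \tilde{A}_j = OA_i O^{\top} OA_j O^{\top} = O A_i A_j O^{\top} = O A_j A_i O^{\top} = \tilde{A}_j \tilde{A}_i$. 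For the gradient bound, one notes that the unique minimizer of $\tilde{F}$ is $\tilde{\bx}^* = O\bx^*$ (direct calculation via $\tilde{A}\tilde{\bx}^* = \tilde{\bb}$), so
\[
\nabla \tilde{f}_i(\tilde{\bx}^*) = \tilde{A}_i \tilde{\bx}^* - \tilde{\bb}_i = O(A_i \bx^* - \bb_i) = O \nabla f_i(\bx^*),
\]
and orthogonality of $O$ preserves Euclidean norms, giving $\|\nabla \tilde{f}_i(\tilde{\bx}^*)\| \le G$. The same argument applied at the initialization yields $\|\nabla \tilde{F}(\tilde{\bx}_0)\| = \|O \nabla F(\bx_0)\| = \|\nabla F(\bx_0)\| \le G$. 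Finally, the conditions on $d$ and $nk$ depend only on the problem's dimension and scalar parameters $\lambda, L$, which are unchanged.

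There is no substantive obstacle here; the only subtlety worth flagging is being careful that every identity uses \emph{both} $O^{\top}O = I$ and $OO^{\top} = I$ in the right places (the former for canceling factors inside quadratic forms, the latter implicitly when writing $A_i = O^{\top}\tilde{A}_i O$), and that the coupling of permutations between the two SGD runs is what makes the per-iterate identity meaningful — expectations over $\sigma$ then match automatically.
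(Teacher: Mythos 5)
Your proposal is correct and follows essentially the same route as the paper's proof: induction on the iterate identity via the coupled SGD updates, a one-line verification that $\tilde F(O\bx)=F(\bx)$, and the observation that orthogonal conjugation preserves all the spectral and norm quantities in Assumption~\ref{as:upper_bounds}. The only difference is that you spell out the preservation of each clause of the assumption (commutativity, spectra, minimizer, gradient bounds) in detail, whereas the paper dispatches this with ``it is readily seen \dots\ since orthogonal matrices are isometries''; your added care is harmless and confirms the same conclusion.
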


\begin{proof}
	Using induction, the base case is immediate from the definition of $\tilde{\bx}_0$, and we have
	\[
	~\tilde{F}(\tilde{\bx}_0) ~=~ \frac12\bx_0^{\top}O^{\top}OAO^{\top}O\bx_0 - \bb^{\top}O^{\top}O\bx_0 ~=~ F(\bx_0)~.
	\]
	For the induction step, assume the theorem is true for $t$. We will show it also holds for $t+1$. Compute for all $i\in[n]$
	\[
	~\nabla_{\bx}\tilde{f}_i(\bx) ~=~ \nabla_{\bx}\p{\frac12\bx^{\top}OA_iO^{\top}\bx - (O\bb_i)^{\top}\bx} ~=~ OA_iO^{\top}\bx - O\bb_i~.
	\]
	Suppose the next function to be processed in iteration $t+1$ is $f_i$, the update rule of without-replacement SGD therefore satisfies
	\[
	~\tilde{\bx}_{t+1} ~=~ \tilde{\bx}_{t} - \eta\nabla_{\bx}f_i(\tilde{\bx}_{t}) ~=~ O\bx_t - \eta OA_iO^{\top}O\bx_t + \eta O\bb_i ~=~ O\p{\bx_t-\eta (A_i\bx_t - \bb_i)} ~=~ O\bx_{t+1} ~.
	\]
	Plugging the above in $\tilde{F}$ we obtain
	\[
	~\tilde{F}(\tilde{\bx}_{t+1}) ~=~ \frac12\bx_{t+1}^{\top}O^{\top}OAO^{\top}O\bx_{t+1} - \bb_i^{\top}O^{\top}O\bx_{t+1} ~=~ F(\bx_{t+1})~.
	\]
	Lastly, it is readily seen that if $F$ satisfies Assumption \ref{as:upper_bounds} then so does $\tilde{F}$ since orthogonal matrices are isometries.
\end{proof}

\end{document}